\def\eqref#1{equation~\ref{#1}}
\def\1{\bm{1}}
\newcommand{\giden}[1]{\text{I}_{#1}}
\newcommand{\genlin}[1]{\textnormal{GL}(#1)}
\newcommand{\liehat}{\wedge}
\newcommand{\diag}[1]{\textnormal{diag}(#1)}
\DeclareMathOperator{\id}{id}
\newcommand{\gmatrix}[1]{\textnormal{M}_{#1}(\mathbb{R})}
\newcommand{\groupdexp}[1]{\det{(\frac{1 - e^{-\text{ad}_{-#1}}}{\text{ad}_{-#1}})}}
\newcommand{\groupexpnod}[1]{\frac{1 - e^{-\text{ad}_{-#1}}}{\text{ad}_{-#1}}}
\newcommand{\explie}{\textnormal{expm}} % Lie group exponential
\newcommand{\logm}{\textnormal{logm}} % Matrix logarithm
\newcommand{\rExp}{\textnormal{Exp}}
\newcommand{\Ad}{\textnormal{Ad}}
\newcommand{\ad}{\textnormal{ad}}
\newcommand{\dd}{\textnormal{d}}
\newcommand{\go}[1]{\textnormal{O}(#1)}
\newcommand{\gso}[1]{\textnormal{SO}(#1)}
\newcommand{\grxso}[1]{\sR^{\times}(#1) \times \text{SO}(#1)}
\newcommand{\gse}[1]{\textnormal{SE}(#1, \mathbb{R})}
\newcommand{\gaff}[1]{\textnormal{Aff}(#1)}
\newcommand{\gsl}[1]{\textnormal{SL}(#1, \mathbb{R})}
\newcommand{\ggl}[1]{\textnormal{GL}^{+}(#1, \mathbb{R})}
\newcommand{\gglminus}[1]{\textnormal{GL}^{-}(#1, \mathbb{R})}
\newcommand{\gglfull}[1]{\textnormal{GL}(#1, \mathbb{R})}
\newcommand{\gspd}[1]{\textnormal{Pos}(#1, \mathbb{R})}
\newcommand{\gsspd}[1]{\textnormal{SPos}(#1, \mathbb{R})}
\newcommand{\shgspd}[1]{\textnormal{Pos}(#1)}
\newcommand{\shgsspd}[1]{\textnormal{SPos}(#1)}
\newcommand{\gsym}[1]{\textnormal{Sym}(#1, \mathbb{R})}
\newcommand{\gssym}[1]{\textnormal{Sym}_{0}(#1, \mathbb{R})}
\newcommand{\gutp}[1]{\textnormal{T}(#1, \mathbb{R})}
\newcommand{\gsutp}[1]{\textnormal{ST}(#1, \mathbb{R})}
\newcommand{\lgsl}[1]{\mathfrak{sl}(#1, \mathbb{R})}
\newcommand{\lggl}[1]{\mathfrak{gl}(#1, \mathbb{R})}
\newcommand{\lgso}[1]{\mathfrak{so}(#1)}
\newtheorem{thm}{Theorem}[section]
\newtheorem{defn}[thm]{Definition}
\newtheorem{prop}[thm]{Proposition}
\newtheorem*{assumption}{Assumption}
\newtheorem{remark}[thm]{Remark}
\def\ervv{{\textnormal{v}}}
\def\rmA{{\mathbf{A}}}
\def\rmM{{\mathbf{M}}}
\def\rmR{{\mathbf{R}}}
\DeclareMathAlphabet{\mathsfit}{\encodingdefault}{\sfdefault}{m}{sl}
\SetMathAlphabet{\mathsfit}{bold}{\encodingdefault}{\sfdefault}{bx}{n}
\def\gB{{\mathcal{B}}}
\def\gK{{\mathcal{K}}}
\def\gL{{\mathcal{L}}}
\def\gV{{\mathcal{V}}}
\def\sN{{\mathbb{N}}}
\def\sR{{\mathbb{R}}}
\newcommand{\revision}[1]{{#1}} % Rebuttal
\title{Lie Group Decompositions for Equivariant Neural Networks}
\author{Mircea Mironenco\\
AI4Science Lab, AMLab\\
Informatics Institute\\
University of Amsterdam \\
mircea.mironenco@gmail.com \\
\And
Patrick Forr\'e\\
AI4Science Lab, AMLab\\
Informatics Institute\\
University of Amsterdam \\
p.d.forre@uva.nl \\
}
\begin{document}
\doparttoc % Tell to minitoc to generate a toc for the parts
\faketableofcontents % Run a fake tableofcontents command for the partocs

\maketitle

\begin{abstract}
Invariance and equivariance to geometrical transformations have proven to be very useful inductive biases when training (convolutional) neural network models, especially in the low-data regime.
Much work has focused on the case where the symmetry group employed is compact or abelian, or both.
Recent work has explored enlarging the class of transformations used to the case of Lie groups, principally through the use of 
their Lie algebra, as well as the group exponential and logarithm maps.
The applicability of such methods is limited by the fact that depending on the group of interest $G$, the exponential map may not be surjective.
Further limitations are encountered when $G$ is neither compact nor abelian.
Using the structure and geometry of Lie groups and their homogeneous spaces, we present a framework by which it is possible to work with such groups primarily focusing on the groups $G = \text{GL}^{+}(n, \mathbb{R})$ and $G = \text{SL}(n, \mathbb{R})$, as well as their representation as affine transformations $\mathbb{R}^{n} \rtimes G$.
Invariant integration as well as a global parametrization is realized by a decomposition into subgroups and submanifolds which can be handled individually.
Under this framework, we show how convolution kernels can be parametrized to build models equivariant with respect to affine transformations\footnote{
Our code is publicly available at \href{https://github.com/mirceamironenco/rgenn}{\texttt{https://github.com/mirceamironenco/rgenn}}.
}.
We evaluate the robustness and out-of-distribution generalisation capability of our model on the benchmark affine-invariant classification task, outperforming previous proposals.
\end{abstract}

\section{Introduction}\label{sec:intro}
Symmetry constraints in the form of invariance or equivariance to geometric transformations have shown to be widely applicable inductive biases in the context of deep learning~\citep{bronstein2021geometric}. Group-theoretic methods for imposing such constraints have led to numerous breakthroughs across a variety of data modalities. CNNs~\citep{lecun1995convolutional} which make use of translation equivariance while operating on image data have been generalized in several directions.
Group-equivariant convolutional neural networks (GCNNs) represent one such generalization~\citet{cohen2016group}.
GCNNs make use of group convolution operators to construct layers that produce representations which transform in a predictable manner whenever the input signal is transformed by an a-priori chosen symmetry group $G$.
These models have been shown to exhibit increased generalization capabilities, while being less sensitive to $G$-perturbations of the input data.
For these reasons, equivariant architectures have been proposed for signals in a variety of domains such as graphs~\citep{han2022geometrically}, sets~\citep{zaheer2017deep} or point clouds data~\citep{thomas2018tensor}.
Constructing equivariant networks entails first choosing a group $G$, a representation for the signal space in which our data lives and a description of the way this space transforms when the group \emph{acts} on it. 
Choosing a particular group $G$ entails making a modelling assumption about the underlying (geometrical) structure of the data that should be preserved. 
Early work has focused on the case where $G$ is finite, with later work largely concentrated on the Euclidean group $\textnormal{E}(n)$, and its subgroups $\gse{n}$ or $\gso{n}$.
Working with continuous groups is much more challenging, and the vast majority of equivariant models focus on the case where the group $G$ has a set of desirable topological and structural properties, namely $G$ is either compact or abelian, or both. 

Recent work~\citep{bekkers2019b, finzi2020generalizing} \revision{explores} the possibility of building equivariant networks for Lie groups - continuous groups with a \revision{smooth} structure.
This research direction is promising since it allows for the modelling of symmetry groups beyond Euclidean geometry.
Affine and projective geometry, respectively affine and homography transformations are ubiquitous within computer vision, robotics and computer graphics~\citep{zacur2014left}. 
Accounting for a larger degree of geometric variation has the promise of making \revision{(vision)} architectures more robust to real-world data shifts.
When working with non-compact and non-abelian Lie groups, for which the group exponential is not surjective, standard harmonic analysis tools cannot be employed directly.
\revision{Our contribution is a framework making it possible to work with such groups.}
\vspace{-2.5mm}
\paragraph{Contributions}
\revision{We present a procedure by which invariant integration with respect to the Haar measure can be done in a principled manner, allowing for an efficient numerical integration scheme to be realized.
We then construct \emph{global} parametrization maps which allow us to map elements back and forth between the Lie algebra and the group, addressing the non-surjectivity of the group exponential.}
We apply our framework to the groups $G = \ggl{n}$ and $G = \gsl{n}$, and more broadly the family of affine matrix Lie groups \revision{$\gaff{G} \coloneqq \sR^{n} \rtimes G$, $G \leq \gglfull{n}$}.
The methodology and tools are generally applicable to any Lie group with finitely many connected components, and we explain how our approach can be seen as a generalization of previous proposals for constructing equviariant layers when working with the regular representation of a topological group.

\section{Related work}\label{sec:related}
Recent proposals for Lie group equivariance~\citep{bekkers2019b, finzi2020generalizing} focus on the infinite-dimensional regular representation of a group and rely on the group exponential map to allow convolution kernels to be defined analytically in the Lie algebra of the group. 
Working with the regular representation entails dealing with an intractable convolution integral over the group, and a (Monte Carlo) numerical integration procedure approximating the integral needs to be employed, which requires sampling group elements with respect to the \emph{Haar} measure of the group.
Unfortunately, the applicability of these methods is limited to Lie groups for which the group exponential map is surjective, which is not the case for the affine group \revision{$\gaff{\gglfull{n}}$}.
These methods also rely on the fact that for compact and abelian groups sampling with respect to the Haar measure of the group is straightforward, which is not the case for the affine groups of interest.
\citet{macdonald2022enabling} propose a framework which can be applied to arbitrary Lie groups, aiming to address such limitations while still relying on the group exponential.
Their approach and its downsides are closely reviewed in Sec.~\ref{subsec:appendix_related}, together with other related equivariant models.

\section{Background}\label{sec:background} 
\paragraph{Continuous group equivariance}
A Lie group $G$ is a group as well as a smooth manifold, such that $\forall g,h \in G$ the group operation  $(g, h) \mapsto gh$ and the inversion map $g \mapsto g^{-1}$ are smooth.
$\gglfull{n}$ \revision{denotes} the Lie group consisting of all invertible $n \times n$ matrices.
A \emph{linear} or \emph{matrix} Lie group refers to a Lie subgroup of $\gglfull{n}$.
$\gglfull{n}$, the translation group $(\sR^{n}, +)$ and the family of 
affine groups \revision{$\gaff{G}$}, 
$G \leq \gglfull{n}$
are our primary interest, with $G$ usually being one of $\ggl{n}$, $\gsl{n} \leq \ggl{n}$ or $\gso{n}$.
Equivariance with respect to the action of a locally compact group $G$ can be realized by constructing layers using the cross-correlation/convolution operators.
We recall that in the continuous 
setting we model our signals as functions $f: X \to \sR^{K}$ defined on some underlying domain $X$.
For example, images and feature maps can be defined as $K$-channel functions $f \in L_{\mu}^{2}(\sR^{2}, \sR^{K})$ which are square-integrable (with respect to the measure $\mu$), and which have bounded support in practice, e.g. $f: [-1, 1]^{2} \subseteq \sR^{2} \to \sR^{K}$.
$\gL_g$ denotes the left-regular representation of $G$, encoding the action
of $G$ on function spaces.
For any continuous $f \in C(X)$:
\begin{align}\label{eq:left_regular_representation}
  [\gL_{g}f](x)\coloneqq f(g^{-1}x),\quad \forall g \in G,~ x \in X
\end{align}
\revision{Every locally compact group $G$ has a left (right) \emph{invariant} Radon measure $\mu_{G}$ called the left (right) \emph{Haar measure} of $G$.}
A canonical example is the translation group $G = (\sR^{n}, +)$ for which $\mu_{G}$ is the Lebesgue measure.
The Haar measure allows for $G$-invariant integration to be realized, and for the 
group convolution to be defined.
\revision{To state the invariance property of $\mu_{G}$, define the 
  functional $\lambda_{\mu_{G}}$:
}
\begin{align}
  \lambda_{\mu_{G}}: L^{1}(G) \to \sR,\quad \lambda_{\mu_{G}}(f) = \int_{G} f(g) \dd{\mu_{G}(g)},~\forall f \in L^{1}(G)
\end{align}
Then, a left Haar measure respects $\lambda_{\mu_{G}}(\gL_{g}f) = \lambda_{\mu_{G}}(f)$ for any $g \in G$ and $f \in L^{1}(G)$.
\revision{Additional details on convolution operators are provided in Sec.~\ref{subsec:appendix_haar}, with Lie groups reviewed in Sec.~\ref{subsec:appendix_liegroups}.}
\vspace{-2.5mm}
\paragraph{Convolution operators}
For a group $G$ acting transitively on locally compact spaces $X$ and $Y$ we then seek to construct 
an operator $\gK: L^2(X) \to L^2(Y)$ satisfying 
the \emph{equivariance constraint} $\gL_g \circ \gK = \gK \circ \gL_g$.
We formalize two scenarios, when $X$ is a homogeneous
space of $G$ (not necessarily a group) and $Y = G$, and the case where $X = Y = G$.
Focusing on the second case, if $\dd{\mu_X} = \dd{\mu_G}$ is the Haar measure on $G$, 
the integral operator $\gK$ can be defined as the standard convolution/cross-correlation.
Let $k: Y \times X \to \sR$ be a kernel that is invariant to the left action of $G$ 
in both arguments, such that $k(gx, gy) = k(x, y)$ for any $(x, y) \in Y \times X$ and $g \in G$.
Let $\mu_X$ be a $G$-invariant Radon measure on $X$, and 
define $\gK \coloneqq C_{k}:L^{p}(X) \to L^{p}(G)$ ($p \in \{1, 2\}$) such that $\forall f \in L^{p}(X)$:
\begin{align}
  C_k: f \mapsto C_{k}f(y) = \int_{X} f(x) k(x, y) \dd{\mu_X(x)},\quad \forall y \in Y
\end{align}
$C_k$ is $G$-equivariant: $\gL_g \circ C_{k} = C_{k} \circ \gL_g,~\forall g \in G$ (\ref{subsec:appendix_equivop}).
Since $X = Y = G$ are homogeneous spaces of $G$ we can easily define a bi-invariant kernel by
projection $k(x, y) = \tilde{k}(g^{-1}_{y}x)$ ($\tilde{k}: G \to \sR$) for any 
$(x, y) \in Y \times X$, where $y = g_y y_0$ for some fixed $y_0$.
The kernel is bi-invariant:
\begin{align}
  k(hx, hy) = \tilde{k}((hg_y)^{-1}hx) = \tilde{k}(g^{-1}_yh^{-1}hx) = \tilde{k}(g^{-1}_yx) = k(x, y),\quad \forall h \in G
\end{align}
For the case $Y = G$ and $g_y = y$ ($y_0 = e$, the identity of $G$) this corresponds to a
cross-correlation.
For a convolution operator, we would analogously define ${k(x, y) = \tilde{k}(g^{-1}_x y)}$ 
where ${x = g_x x_0}$ for ${x_0 \in X}$.
In this case the \emph{essential} component needed 
for equivariance of the operator $C_k$ is the $G$-invariant 
measure $\dd{\mu_X}$, which is the Haar measure when $X = Y = G$.
When $X$ is a homogeneous space of $G$, but not necessarily $G$ itself, we have to work with an operator which takes in a signal in $L^{p}(X)$ and produces a signal $L^{p}(G)$ on the group.
This encompasses the case of the \emph{lifting} layers, which are commonly employed when working with the regular representation of a group~\citep{cohen2016group,kondor2018generalization}.
The kernel $k(\cdot)$ in this case can be derived through an equivariance constraint as in~\cite{bekkers2019b,cohen2019general}.
It can also be shown (\ref{subsec:appendix_equivop}) that an equivariant lifting cross-correlation can be defined as an operator $C^{\uparrow}_{k}$ such that for any $f \in L^{p}(X)$:
\begin{align}\label{eq:lifting_cross_correlation}
  C^{\uparrow}_{k}: f \mapsto C^{\uparrow}_{k}f, \quad C^{\uparrow}_{k}f: g \mapsto \int_{X} f(x) k(g^{-1}x) \delta(g^{-1}) \dd{\mu_X(x)},~\forall g \in G
\end{align}
where $\delta: G \to \sR^{\times}_{>0}$ records the change of variables by the action of $G$ (see \ref{subsec:appendix_equivop}).
Group cross-correlation $C^{\star}_{k} \coloneqq C_{k}$ and convolution 
$C^{*}_{k}$ operators will be defined for any $f \in L^{p}(G)$:
\begin{align}\label{eq:conv_cross_ops}
  C_{k}: f \mapsto C_{k}f, \quad C_{k}f: g \mapsto \int_{G} f(\tilde{g}) k(g^{-1}\tilde{g}) \dd{\mu_G(\tilde{g})},~\forall g \in G\\\label{eq:conv_cross_ops2}
  C^{*}_{k}: f \mapsto C^{*}_{k}f, \quad C^{*}_{k}f: g \mapsto \int_{G} f(\tilde{g}) k(\tilde{g}^{-1}g) \dd{\mu_G(\tilde{g})},~\forall g \in G
\end{align}
\paragraph{Lie algebra parametrization}
The tangent space at the identity of a Lie group $G$ is denoted by $\mathfrak{g}$ 
and called the Lie algebra of $G$.
A Lie algebra is a vector space equipped with 
a bilinear map $[\cdot, \cdot]: \mathfrak{g} \times \mathfrak{g} \to \mathfrak{g}$ called 
the Lie bracket.
To construct an equivariant layer using the Lie algebra of the group, one 
defines the kernels $k(\cdot)$ in (\ref{eq:conv_cross_ops}) or (\ref{eq:conv_cross_ops2}) as functions which take in Lie algebra elements.
This requires a map $\xi: \mathfrak{g} \to G$ which is (at least locally) a
diffeomorphism, with an inverse that can be easily calculated, preferably in closed-form.
This allows us to rewrite the kernel $k: G \to \sR$ as:
\begin{align}
    \label{eq:lie_conv_kernel}
    k(g^{-1}\tilde{g}) = k(\xi(\xi^{-1}(g^{-1}\tilde{g}))) = \tilde{k}_{\theta}(\xi^{-1}(g^{-1}\tilde{g}))
\end{align}
$\tilde{k}_{\theta}(\cdot)$ is effectively 
an approximation of $k(\cdot)$ of the 
form $\tilde{k}_{\theta} \cong k \circ \xi: \mathfrak{g} \to \sR$ 
with learnable parameters $\theta$. 
Using the inverse map $\xi^{-1}(g^{-1}\tilde{g})$, $\tilde{k}_{\theta}$ maps 
the Lie algebra coordinates of the `offset' group 
element $g^{-1}\tilde{g}$ (\revision{for cross-correlations})
to real values corresponding to the evaluation $k(g^{-1}\tilde{g})$.
Our kernels are now maps $\tilde{k}_{\theta} \circ \xi^{-1}: G \to \sR$, 
requiring the implementation of $\xi^{-1}(\cdot)$
and a particular choice for the Lie algebra kernel $\tilde{k}_{\theta}$.
This description encompasses recent proposals for Lie group equivariant layers.
In~\cite{bekkers2019b} the kernels are implemented by modelling 
$\tilde{k}_{\theta}$ via B-splines, 
while~\cite{finzi2020generalizing} choose to parametrize 
$\tilde{k}_{\theta}$ as small MLPs.
Once $\tilde{k}_{\theta}$ and $\xi$ are chosen, we can approximate e.g. the 
cross-correlation using Monte Carlo integration: 
\begin{align}
    \label{eq:monte_carlo_cross_corr}
    \int_{G} f(\tilde{g})\tilde{k}_{\theta}(\xi^{-1}(g^{-1}\tilde{g}))\dd{\mu_{G}}(\tilde{g})
    \approx \frac{\mu_G(G)}{N} \sum_{i=1}^{N} f(\tilde{g}_i)\tilde{k}_{\theta}(\xi^{-1}(g^{-1}\tilde{g}
    _i)),~ \tilde{g}_{i} \sim \mu_G
\end{align}
where $\mu_G(G)$ denotes the volume of the integration space $G$ and $\tilde{
g}_{i} \sim \mu_G$ indicates that $\tilde{g}_{i}$ is sampled (uniformly) with
respect to the Haar measure.
This allows one to obtain equivariance (in expectation) with respect to $G$.
For compact groups, $\mu_G$ can be normalized such that $\mu_G(G) = 1$.
\revision{
  To summarize, we record the components of the framework which 
  are needed for (\ref{eq:monte_carlo_cross_corr}) to realize an equivariant operator.
Namely, we require (1) a \emph{parametrization} map $\xi^{-1}: G \to \mathfrak{g}$, 
as well as (2) the implementation of an efficient \emph{sampling} scheme 
with respect to the Haar measure $\mu_{G}$ such that numerical integration is feasible in practice.}

\section{Lie group decompositions for continuous equivariance}\label{sec:contribution}
\paragraph{Limitations of the group exponential}
\revision{For every Lie group we can define the Lie group exponential map $\explie: \mathfrak{g} \to G$, 
which is a diffeomorphism locally around $0 \in \mathfrak{g}$.} 
Since we are interested in $\gglfull{n}$ and its subgroups, we can make things 
more concrete as follows.
$\gmatrix{nn} \coloneqq \gmatrix{n}$ (the vector space of $n \times n$ real matrices) 
is the Lie algebra of $\gglfull{n}$ (\revision{Sec.~\ref{subsec:appendix_liegroups}}).
The notation $\lggl{n} = \gmatrix{n}$ is used for this identification.
For $G = \gglfull{n}$ with $\mathfrak{g} = \lggl{n}$, the 
group exponential is the matrix exponential $\explie: \lggl{n} \to \gglfull{n}$, 
with the power series expression ${X \mapsto e^{X} = \sum_{k=0}^{\infty} \frac{1}{k!} X^{k}}$.
\revision{The map $\xi$ in (\ref{eq:lie_conv_kernel}) is most commonly implemented as the group exponential $\xi \coloneqq \explie$.
Given a subgroup $G \leq \gglfull{n}$ for which $\explie$ is surjective, every element $g \in G$ can be expressed as ${g = \explie(X) = e^{X}}$ for $X \in \mathfrak{g}$, and fast routines for calculating $\explie(\cdot)$ are available.}
\revision{In this case, the inverse map $\xi^{-1}$ is given by the matrix logarithm, giving us:}
\begin{align}\label{eq:inverse_psi_lieconv}
    \xi^{-1}(g^{-1}\tilde{g}) = \logm(g^{-1}\tilde{g}),\quad \logm: G \to \mathfrak{g}
\end{align}
\revision{In general, we need to consider if both $\xi$ and $\xi^{-1}$ need to be implemented, and whether these maps are available in closed form.}
Assuming there exist $X$ and $Y$ such that $e^{X} = g^{-1}$ and $e^{Y} = \tilde{g}$, (\ref{eq:inverse_psi_lieconv}) can be rewritten as ${\logm(g^{-1}\tilde{g}) = \logm(e^{X}e^{Y})}$.
A key optimization underlying this framework is enabled by employing the BCH formula (\ref{subsec:appendix_liegroups}), which tells us that for \emph{abelian} Lie groups ${\logm(e^{X}e^{Y}) = X + Y}$.
This simplifies calculations considerably and allows one to work primarily at the level of the Lie algebra, bypassing the need to calculate and sample the kernel inputs $g^{-1}\tilde{g}$ at the group level.
Considering the affine Lie groups $\gaff{G},~G \leq \gglfull{n}$, this simplification can be used for example for the abelian groups \revision{$G = \gso{2}$ and $G = \grxso{2}$}, consisting of rotations and scaling.
\citet{bekkers2019b,finzi2020generalizing} primarily work with these groups, and choose $\xi$ and $\xi^{-1}$ to be the matrix exponential and logarithm, respectively.
\revision{If the group is non-abelian but the exponential remains surjective (such as with compact groups like $\gso{3}$), $\explie(\cdot)$ remains a generally valid choice for $\xi$ as long as $\xi^{-1}$ can be accurately calculated in closed-form.} 
For the non-abelian, non-compact groups $\gsl{n}$ or $\ggl{n}$ the non-surjectivity of the exponential map limits the applicability of the matrix logarithm outside of a neighborhood around the identity (Prop.~\ref{prop:matrix_log_existence}).
The class of equivariant networks that can be implemented with this framework is then firstly limited by the parametrization maps $\xi$ and $\xi^{-1}$, motivating the search for an alternative.

Another key limitation is that for (\ref{eq:monte_carlo_cross_corr}) to realize an equivariant estimator when numerically approximating the convolution/cross-correlation integral, 
sampling needs to be realised with respect to the Haar measure of the group $G$.
Techniques for sampling with respect to the Haar measure on the groups $\gso{n}$ or $\grxso{n}$ are known, and generally reduce to working with uniform measures on Euclidean spaces or unit quaternions in the case of $\gso{3}$.
We aim to address these limitations, \revision{allowing the previously described framework to be generalized to arbitrary Lie groups $G \leq \gglfull{n}$. 
We further seek a solution that places minimal limitations on the class of `Lie algebra kernels' ${k_{\theta}: \mathfrak{g} \to \sR}$ that can be used, and one should be able to employ any $k_{\theta}$ that uses the coordinates of tangent vectors in $\mathfrak{g}$ expressed in some basis.}
\revision{In the following we present a set of generally applicable 
tools while considering $\gsl{n}$ and $\ggl{n}$ as working examples, since 
these groups require more consideration and represent our primary application.}
\subsection{Lie group decomposition theory}\label{subsec:liegrdecomp_theory}
We exploit the fact that the groups $\ggl{n}$ and $\gsl{n}$ have 
an underlying product structure that allows them to be decomposed into subgroups 
and submanifolds which are easier to work with individually.
More precisely, ${G \in \{\ggl{n}, \gsl{n}\}}$ can be decomposed 
as a product $P \times H$, where $H \leq G$ is the maximal compact subgroup of $G$ 
and $P \subseteq G$ is a submanifold which is diffeomorphic to $\sR^{k}$, 
for some $k \geq 0$, and we have a diffeomorphism $\varphi: P \times H \to G$.

\revision{
Similar decompositions are available for a larger class of groups $G \leq \gglfull{n}$~\citep[Ch. 6]{abbaspour2007basic}.} 
It can be shown that if the map $\varphi$ is chosen correctly the Haar measure 
$\mu_{G}$ can be expressed as the pushforward measure $\varphi_{\ast}(\mu_{P} \otimes \mu_{H})$,
where $\mu_{P}$ is a $G$-invariant measure on $P$ and $\mu_{H}$ 
is the Haar measure on $H$.
\revision{In some cases the group decomposition presents a corresponding Lie algebra decomposition, which we can leverage to build the parametrization map $\xi^{-1}: G \to \mathfrak{g}$.}
\vspace{-2.5mm}
\paragraph{Factorizing the Haar measure}
Let $G$ be a locally compact group of interest (e.g. $\ggl{n}$), with (left) 
Haar measure $\mu_{G}$.
Assume there exist a set of subspaces or subgroups 
$P \subseteq G$, $K \subseteq G$, such that $G = PK$, and a homeomorphism 
$\varphi: P \times K \to G$.
Further assume that $\mu_{P}$ and $\mu_{K}$ are (left) $G$-invariant 
Radon measures on the corresponding spaces.
We look to express (up to multiplicative coefficients) the Haar 
measure $\mu_G$ as the pushforward of the product 
measure $\mu_P \otimes \mu_K$ under the map $\varphi$. 
This allows for the following change of variables for any $f \in L^{1}(G)$:
\begin{align}
  \int_{G} f(g) \dd{\mu_{G}(g)} = \int_{P \times K} f(\varphi(p, k)) \dd{(\mu_{P} \otimes \mu_{K})}(p, k) = \int_{P} \int_{K} f(\varphi(p, k))\dd{\mu_{K}(k)}\dd{\mu_{P}(p)}
\end{align}
In the context of Monte Carlo simulation this will enable us to produce random
samples distributed according the measure $\mu_G$ by
sampling on the \emph{independent} factor spaces $P$ and $K$ and constructing a sample on $P \times K$
and respectively on $G$ using the map $\varphi$.
The space $P$ will either be another closed subgroup, 
or a measurable subset $P \subseteq G$ that is 
homeomorphic to the quotient space $G/K$.
In particular, if $P$ is not a subgroup, we will focus 
on the case where $P$ is a homogeneous space of $G$ 
with stabilizer $K$ such that $P \cong G/K$.
When the left and right Haar measure of a group coincide, 
the group is called \emph{unimodular}.
The groups $\ggl{n}$, $\gsl{n}$ are unimodular, however this is not 
true for all affine groups \revision{$\gaff{G}$}.
For groups which are volume-preserving, this is not as much of an issue in practice.
However, $\ggl{n}$ is not volume-preserving, and we also desire that our 
framework be general enough to deal with the non-unimodular case as well.
If $G$ is not unimodular and $\mu_{G}$ is its left Haar measure, 
there exists a continuous group homomorphism $\Delta_G: G \to \sR^{\times}_{>0}$, called the \emph{modular function} of $G$, which records the degree to which $\mu_{G}$ fails to be right-invariant.
We now have the tools necessary to record two possible integral decomposition methods.
\begin{thm}
  \label{thm:decompose_haar_abstract}
  (1) Let $G$ be a locally compact group, $H \leq G$ a closed subgroup, with 
    left Haar measures $\mu_{G}$ and $\mu_{H}$ respectively.
    There is a $G$-invariant Radon measure $\mu_{G/H}$ on $G/H$ if and only if
    $\left.\Delta_{G}\right|_{H} = \Delta_{H}$.
    The measure $\mu_{G / H}$ is unique up to a scalar factor 
    and if suitably normalized:
    \begin{align}\label{eq:quotient_integral_formula}
      \int_{G} f(g) \dd{\mu_{G}(g)} =  \int_{G/H} \int_{H} f(gh) \dd{\mu_{H}(h)} \dd{\mu_{G/H}(gH)},~\forall f \in L^{1}(G)
    \end{align}

    (2) Let $P \leq G$, $K \leq G$  closed subgroups such that $G = PK$.
  Assume that $P \cap K$ is compact, and $Z_0$ denotes the stabilizer of
  the transitive left action of $P \times K$ on $G$ given by 
  $(p, k) \cdot g = pgk^{-1}$, for any $(p, k) \in P \times K$ and $g \in G$.
  Let $G$, $P$ and $K$ be $\sigma$-compact 
  (which holds for matrix Lie groups), $\mu_{G}$, $\mu_{P}$ and $\mu_{K}$ left Haar measures 
  on $G$, $P$, and $K$ respectively and $\left.\Delta_{G}\right|_{K} = \Lambda$ 
    is the modular function of $G$ restricted to $K$.
    Then $\mu_{G}$ is given by $\mu_G = \pi_{\ast}(\mu_{P} \otimes \Lambda^{-1}\mu_{K})$, where $\pi: P \times K \to (P \times K) / Z_0$ is the canonical projection.
    In integral form we have:
    \begin{align}
      \int_{G} f(g) \dd{\mu_G(g)} = \int_{P}\int_{K} f(pk) \frac{\Delta_{G}(k)}{\Delta_{K}(k)}\dd{\mu_{K}(k)}\dd{\mu_{P}(p)},\quad \forall f \in L^{1}(G)
    \end{align}
    \begin{proof}
      \citet[Theorem 2.51]{folland2016course} and~\citet[Proposition 7.6.1]{wijsman1990invariant}.
    \end{proof}
\end{thm}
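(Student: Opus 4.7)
The plan is to prove each part through a standard measure-theoretic construction, then isolate the places where the modular-function hypotheses enter. For \textbf{part (1)}, I would follow Weil's classical construction of the quotient measure. First I would introduce the averaging map $T: C_{c}(G) \to C_{c}(G/H)$ defined by $(Tf)(gH) := \int_{H} f(gh)\,\dd\mu_{H}(h)$, and verify via a partition-of-unity argument (available because $G$ is locally compact and $\sigma$-compact) that $T$ is surjective onto $C_{c}(G/H)$. I would then define a positive linear functional $I$ on $C_{c}(G/H)$ by $I(Tf) := \int_{G} f\,\dd\mu_{G}$, invoke the Riesz representation theorem to extract $\mu_{G/H}$, and derive $G$-invariance directly from left-invariance of $\mu_{G}$ together with the commutation identity $T \circ \gL_{g} = \gL_{g} \circ T$.

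The critical step --- and the place where $\Delta_{G}|_{H} = \Delta_{H}$ enters --- is showing that $I$ is well-defined, i.e.\ that $Tf \equiv 0$ forces $\int_{G} f\,\dd\mu_{G} = 0$. Here I would fix an auxiliary $\phi \in C_{c}(G)$ with $(T\phi) \equiv 1$ on $\mathrm{supp}(Tf)$, rewrite $f(g) = f(g)\,(T\phi)(gH)$, apply Fubini on $G \times H$, and substitute $g \mapsto gh^{-1}$ in the inner integral. The substitution introduces a modular factor $\Delta_{G}(h)$, and the double integral collapses to zero exactly when this factor is absorbed by $\Delta_{H}(h)^{-1}$ coming from inverting the $H$-variable, which happens iff $\Delta_{G}|_{H} = \Delta_{H}$. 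The converse direction --- that existence of a $G$-invariant $\mu_{G/H}$ forces the modular identity --- comes by running the same computation on separable test functions of the form $\phi(g)\psi(h)$ and isolating the discrepancy $\Delta_{G} - \Delta_{H}$.

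For \textbf{part (2)}, the plan is to reduce everything to part (1). The product group $P \times K$ acts on $G$ transitively by $(p,k)\cdot g := p g k^{-1}$ (transitivity uses $G = PK$), with stabilizer at $e$ equal to $Z_{0} = \{(k,k) : k \in P \cap K\} \cong P \cap K$, compact by hypothesis. Since $Z_{0}$ is compact, both $\Delta_{Z_{0}}$ and $\Delta_{P \times K}|_{Z_{0}}$ are trivial (any continuous homomorphism from a compact group into $\sR^{\times}_{>0}$ has image $\{1\}$), so part (1) applies without obstruction and yields a $(P \times K)$-invariant Radon measure on $(P \times K)/Z_{0} \cong G$. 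By uniqueness of the left Haar measure up to a scalar, this pushforward coincides with $\mu_{G}$ after a suitable normalization. Unpacking the quotient integral formula by Fubini on $\mu_{P} \otimes \mu_{K}$ and reparametrizing via $(p,k) \mapsto pk$ then produces the stated iterated-integral formula; the correction factor $\Delta_{G}(k)/\Delta_{K}(k)$ arises as the Radon--Nikodym derivative encoding the mismatch between right-translation of $\mu_{G}$ by $k^{-1}$ (which scales $\mu_{G}$ by $\Delta_{G}(k)$) and the left Haar measure $\mu_{K}$ used to integrate over $K$.

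The main obstacle in both parts is not producing a candidate measure but rather the careful bookkeeping of modular factors under the change of variables $g \mapsto gh^{-1}$ and under Fubini; for a fully rigorous write-up I would defer to the treatments in~\citet[Theorem 2.51]{folland2016course} and~\citet[Proposition 7.6.1]{wijsman1990invariant}, which execute these accounting steps in detail.
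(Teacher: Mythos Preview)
Your proposal is correct and lands exactly where the paper does: the paper's entire proof is the pair of citations to \citet[Theorem 2.51]{folland2016course} and \citet[Proposition 7.6.1]{wijsman1990invariant}, and your sketch of the Weil averaging-map construction for part~(1) and the reduction via the transitive $(P\times K)$-action with compact stabilizer for part~(2) is precisely the argument those references carry out. One small point of care in your part~(2) outline: the $(P\times K)$-invariant measure on $G$ you obtain from part~(1) is left-$P$- and right-$K$-invariant rather than left-$G$-invariant outright, so the identification with $\mu_G$ already requires the modular correction $\Lambda^{-1}$ on the $K$-factor (as in the theorem statement $\mu_G = \pi_{\ast}(\mu_P \otimes \Lambda^{-1}\mu_K)$) rather than following \emph{a posteriori} from uniqueness of Haar measure --- but you recover this in your final sentence, and the references you cite handle the bookkeeping.
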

\revision{The existence and range of the convolution operators (for arbitrary Lie groups $G$) are described in Sec.~\ref{subsubsec:conv_range_appendix}, with the non-unimodular case being covered by Prop.~\ref{prop:prop_conv}.
  When going to the Lie group setting, we can already deal 
with semi-direct products of groups of the form $N \rtimes G$.}
The modular function on $N \rtimes G$ is $\Delta_{N \rtimes G}(n, g) = \Delta_N(n) \Delta_G(g)\delta(g)^{-1}$~\citep{kaniuth2013induced}.
The term $\delta: G \to \sR^{\times}_{>0}$ records the effect of the action 
of $G$ on $N$, and it coincides with the term $\delta(\cdot)$ used 
in the lifting layer definition (\ref{eq:lifting_layer_concrete}).
Concretely, take the affine groups $\gaff{G} = \sR^{n} \rtimes G$, 
$G\leq\gglfull{n}$, defined under the semi-direct product structure:
\begin{align}
  \gaff{G} = \sR^{n} \rtimes G = \{(x, A) \mid x \in \sR^n,~A \in G\}
\end{align}
$G$ acts on $\sR^{n}$ by matrix multiplication 
and for $(x, A), (y, B) \in \gaff{G}$, the product and inverse are: 
\begin{align}
  (x, A)(y, B) = (x + Ay, AB), \quad (x, A)^{-1} = (-A^{-1}x, A^{-1})
\end{align}
Elements of $\sR^{n}$ are concretely represented as column vectors.
Viewing $(\sR^{n}, +)$ as the additive group, we 
have $\delta: G \to \sR^{\times}_{>0}$ given by $\delta(A) = \lvert \det(A) \rvert$ for any $A \in G$.
Applying Thm.~\ref{thm:decompose_haar_abstract}, gives:
\begin{align}\label{eq:haar_integral_semidirprod}
  \int_{\gaff{G}} f(g)~\dd{\mu_{\gaff{G}}(g)} = \int_{G} \int_{\sR^{n}}f((x, A))~\frac{\dd{x}\dd{\mu_{G}(A)}}{\lvert \det(A)\rvert},~\forall f \in C_c(\gaff{G})
\end{align}
Expressing the cross-correlation $C_{k}f$ of (\ref{eq:conv_cross_ops2}) in this product space we have for $f \in L^{2}(\gaff{G})$:
\begin{align}\label{eq:cross_correlation_semi_dir_example}
  C_{k}f: (x, A) \mapsto \int_{G} \int_{\sR^{n}} f(\tilde{x}, \tilde{A})k((x, A)^{-1}(\tilde{x}, \tilde{A}))\delta(\tilde{A}^{-1})\dd{\tilde{x}}\dd{\mu_{G}(\tilde{A})}
\end{align}
For the affine groups $\gaff{G} = \sR^{n} \rtimes G$ a 
parametrization map $\xi_{\gaff{G}}: \sR^{n} \oplus \mathfrak{g} \to \gaff{G}$ 
will simply be the identity on the first factor, 
since the Lie algebra of $\gaff{\gglfull{n}}$ decomposes 
as $\sR^{n} \oplus \lggl{n}$ when represented as a Lie subalgebra of $\lggl{n+1}$.
We are then left with the parametrization and invariant 
integration of the $G$-factor of $\gaff{G}$.
We provide a solution for the cases $G = \gsl{n}$ and $G = \ggl{n}$, while remarking 
that a solution for $\ggl{n}$ can be immediately 
extended to $\gglfull{n}$ (Sec.~\ref{subsec:cartanpolar_appendix}).
Our approach is based on a generalized Polar decomposition of matrices, which 
is applicable in the case of reductive ($\ggl{n}$) or semi-simple ($\gsl{n}$) 
Lie groups. An alternative decomposition is discussed in Sec.~\ref{subsec:appendix_qr_fact}.

\paragraph{Manifold splitting via Cartan/Polar decomposition}
Let $\gsym{n}$ be the vector space of ${n \times n}$ real symmetric matrices and
$\gspd{n}$ the subset of $\gsym{n}$ of symmetric positive definite (SPD) matrices.
Denote by $\gsspd{n}$ the subset of $\gspd{n}$ consisting
of SPD matrices with unit determinant, and by $\gssym{n}$ the subspace of 
$\gsym{n}$ of traceless real symmetric matrices.
Any matrix $A \in \gglfull{n}$ can be uniquely decomposed via the left polar decomposition as
$A = PR$ where $P \in \gspd{n}$ and $R\in \go{n}$ (\ref{subsec:cartanpolar_appendix}).
    The factors of this decomposition are uniquely determined and 
    we have a bijection $\gglfull{n} \to \gspd{n} \times \go{n}$ given by:
\begin{align}\label{eq:left_polar_map}
  A \mapsto (\sqrt{AA^{T}}, \sqrt{AA^{T}}^{-1}A),\quad \forall A \in \gglfull{n}
\end{align}
For the reader unfamiliar with Lie group structure theory, the following results 
can simply be understood in terms of matrix factorizations commonly used 
in numerical linear algebra.
The polar decomposition splits the manifold $\ggl{n}$ into the product $\gspd{n} \times \gso{n}$,
and $\gsl{n}$ into $\gsspd{n} \times \gso{n}$.
We use the notation $G \to M \times H$ to cover both cases.
This decomposition can be generalized, as the spaces ${\gspd{n} = \ggl{n} / \gso{n}}$ and 
${\gsspd{n} = \gsl{n} / \gso{n}}$ are actually \emph{symmetric spaces},
and a \emph{Cartan decomposition} is available in this case (\ref{subsec:cartanpolar_appendix}). 
 The Cartan decomposition tells us how to decompose not only at the level of the Lie group, 
 but also at the level of the Lie algebra.
 In fact, using this decomposition we can also obtain a factorization 
 of the measure on these groups.
Let $(G/H, M, \mathfrak{m})$ define our `Lie group data', 
corresponding to $(\ggl{n}/\gso{n}, \gspd{n}, \gsym{n})$ 
or $(\gsl{n}/\gso{n}, \gsspd{n}, \gssym{n})$.
\begin{restatable}{thm}{ManifoldSplittingThm}\label{thm:manifold_splitting_theorem}
Let $(G/H, M, \mathfrak{m})$ be as above, and denote by $\mathfrak{g}$, $\mathfrak{h}$ 
the Lie algebras of $G$ and $H$.
  \begin{enumerate}
    \item The matrix exponential and logarithm are diffeomorphisms between $\mathfrak{m}$ and $M$, 
      respectively. 
      For any $P \in M$ and $\alpha \in \sR$, the power map $P \mapsto P^{\alpha}$ 
      is smooth and can be expressed as:
        \begin{align}\label{eq:spd_power_logexp}
          P^{\alpha} = \explie(\alpha\logm(P)),\quad \forall P \in \gspd{n}
        \end{align}
    \item $G \cong M \times H$ and $G \cong \mathfrak{m} \times H$. We 
      have group-level diffeomorphisms:
      \begin{align}
        \label{eq:cartan_decomposition_diffeomorphism}\chi: M \times H \to G,\quad \chi (P, R) \mapsto PR\\
        \label{eq:cartan_lie_algebra_diffeomorphism}\varPhi: \mathfrak{m} \times H \to G,\quad \varPhi:(X, R) \mapsto \explie(X)R = e^{X}R
      \end{align}
    \item The above maps can be inverted in closed-form:
      \begin{align}\label{eq:cartan_lie_algebra_inverse_diffeomorphism}
        \chi^{-1}&: G \to M \times H,~ \chi^{-1}: A \mapsto (\sqrt{AA^{T}}, \sqrt{AA^{T}}^{-1}A)\\
        \varPhi^{-1}&: G \to \mathfrak{m} \times H,\quad \varPhi^{-1}: A \mapsto (\frac{1}{2}\logm(AA^{T}), \explie(-\frac{1}{2}\logm(AA^{T}))A) 
      \end{align}
    \end{enumerate}
\end{restatable}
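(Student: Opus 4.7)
The three parts factor cleanly: part (1) is a statement about the restriction of $\explie$ to symmetric matrices, part (2) upgrades the classical polar decomposition to a smooth diffeomorphism and then pulls it back along $\explie$, and part (3) just records the inverses already constructed in (2). I would handle $(G, M, \mathfrak{m}) = (\ggl{n}, \gspd{n}, \gsym{n})$ and $(\gsl{n}, \gsspd{n}, \gssym{n})$ in a single argument, keeping a determinant/trace side condition at each step.

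For part (1), the spectral theorem writes any $X \in \gsym{n}$ as $X = Q D Q^{T}$ with $Q \in \go{n}$ and $D$ real diagonal, so $\explie(X) = Q\,\explie(D)\,Q^{T}$ is SPD; conversely every $P \in \gspd{n}$ diagonalizes with strictly positive spectrum, so $\logm(P) := Q\,\logm(D)\,Q^{T}$ is well-defined and symmetric. Smoothness of both maps follows from the holomorphic functional calculus applied on a neighbourhood of the positive real axis; this is the one non-trivial analytic input and I would cite it rather than re-prove it. The traceless restriction is compatible because $\det(\explie(X)) = e^{\Tr(X)}$, so $\Tr(X) = 0 \iff \det(\explie(X)) = 1$. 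The power-map identity $P^{\alpha} = \explie(\alpha \logm(P))$ then reduces to the scalar identity $d^{\alpha} = e^{\alpha \log d}$ applied entry-wise to $D$ and conjugated back by $Q$.

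For part (2), I would prove the claim by exhibiting $\chi^{-1}$ directly. Given $A \in \ggl{n}$, the matrix $AA^{T}$ is SPD, so by part (1) it has a unique SPD square root $P := \sqrt{AA^{T}} = \explie(\tfrac{1}{2}\logm(AA^{T}))$. Setting $R := P^{-1}A$, one computes
\begin{align*}
RR^{T} = P^{-1}(AA^{T})P^{-1} = P^{-1}P^{2}P^{-1} = I,
\end{align*}
so $R \in \go{n}$, and since $\det(P) > 0$ and $\det(A) > 0$ we obtain $\det(R) = 1$, i.e.\ $R \in \gso{n}$. Uniqueness follows because any factorization $A = PR$ with $P \in \gspd{n}$, $R \in \gso{n}$ forces $AA^{T} = P^{2}$, hence $P = \sqrt{AA^{T}}$. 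Smoothness of $\chi$ is just matrix multiplication, and smoothness of $\chi^{-1}$ is smoothness of the SPD square root from part (1), so $\chi$ is a diffeomorphism. The map $\varPhi$ factors as $\chi \circ (\explie \times \id_{H})$, which is a diffeomorphism by part (1). The $\gsl{n}$ case is the restriction $\det(A) = 1$: then $\det(P)^{2} = \det(AA^{T}) = 1$, so $\det(P) = 1$ (positivity), giving $P \in \gsspd{n}$ and $\Tr(\logm(P)) = 0$, i.e.\ the Lie algebra factor lies in $\gssym{n}$; the converse is immediate from $\det(A) = \det(P)\det(R)$.

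For part (3), the first formula is exactly the construction of $\chi^{-1}$ above. For $\varPhi^{-1}$, since $\varPhi = \chi \circ (\explie \times \id_{H})$ we have $\varPhi^{-1} = (\logm \times \id_{H}) \circ \chi^{-1}$, and using $\sqrt{AA^{T}} = \explie(\tfrac{1}{2}\logm(AA^{T}))$ from part (1) the second component becomes $\explie(-\tfrac{1}{2}\logm(AA^{T}))\,A$, matching the stated formula.

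\textbf{Main obstacle.} The substantive analytic content is the smoothness of the matrix square root and logarithm on the open cone $\gspd{n}$, which is what ultimately makes $\chi^{-1}$ and $\varPhi^{-1}$ smooth rather than merely continuous bijections. Everything else is linear algebra and bookkeeping of the determinant constraint that distinguishes $\ggl{n}$ from $\gsl{n}$; I would address this by invoking holomorphic functional calculus on a simply connected neighbourhood of the spectrum, since any $P \in \gspd{n}$ has spectrum contained in $(0, \infty)$.
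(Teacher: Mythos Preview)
Your proposal is correct and follows essentially the same route as the paper: both arguments rest on the polar decomposition together with the fact that $\explie$ and $\logm$ are mutually inverse diffeomorphisms between $\gsym{n}$ (resp.\ $\gssym{n}$) and $\gspd{n}$ (resp.\ $\gsspd{n}$), with the power identities $\logm(P^{1/2}) = \tfrac{1}{2}\logm(P)$ and $P^{-1/2} = \explie(-\tfrac{1}{2}\logm(P))$ yielding the closed-form inverses. The only difference is one of packaging: the paper cites the Cartan decomposition from standard structure-theory references and records an alternative derivation of $\varPhi$ via the naturally reductive cross-section construction, whereas you prove the polar decomposition from scratch and obtain $\varPhi = \chi \circ (\explie \times \id_{H})$ directly; your self-contained treatment of smoothness via the holomorphic functional calculus is a clean substitute for the paper's citation.
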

A proof and further references for Theorem~\ref{thm:manifold_splitting_theorem} can be found in Sec.~\ref{subsec:manif_split_thm_proof}.
At the level of the Lie algebra, we have the decomposition $\lggl{n} = \lgso{n} \oplus \gsym{n}$.
The Lie algebra of $\gsl{n}$ is ${\lgsl{n} = \{X \in \lggl{n} \mid \textnormal{tr}(X) = 0\}}$.
It decomposes similarly ${\lgsl{n} = \lgso{n} \oplus \gssym{n}}$.
The Cartan decomposition of $\mathfrak{g}$ is therefore expressed as
${\mathfrak{g} = \mathfrak{h} \oplus \mathfrak{m}}$ where ${\mathfrak{h} = \lgso{n}}$ with ${\mathfrak{m} = \gsym{n}}$ if ${G = \ggl{n}}$ and 
${\mathfrak{m} = \gssym{n}}$ if ${G = \gsl{n}}$.
\subsection{A parametrization based on the Cartan Decomposition}
Consider again the notation $(G/H, M, \mathfrak{m})$ 
as in Theorem~\ref{thm:manifold_splitting_theorem} ($G = \ggl{n}$ or $G = \gsl{n}$).
\paragraph{Concrete integral decompositions}
From Theorem~\ref{thm:manifold_splitting_theorem} and the fact that 
symmetric matrices have a unique square root, we actually have equivalent 
decompositions for $A \in G$ as $A = PR$ 
or $A = S^{1/2}R$ for $S, P \in M$, $R \in H$ and $P = S^{1/2}$.
For $\gglfull{n}$, the decomposition $A = S^{1/2}R$, 
has a factorization of the Haar measure of $\gglfull{n}$ as a product 
of invariant measures on $\gspd{n}$ (shortened $\shgspd{n}$) and $\go{n}$.
Let $\mu_{\shgspd{n}}$ denote the $\gglfull{n}$ 
invariant measure on $\shgspd{n}$.
\begin{restatable}{thm}{PolarCoordinateChangeOfVar}\label{thm:polar_coordinates_haar_change}
  Denote $G = \gglfull{n}$, $H = \go{n}$, and 
  let $\mu_{G}$ be the Haar measure on $G$ and $\mu_{H}$ the 
  Haar measure on $H$ normalized by $\textnormal{Vol}(H) = 1$.
  For $A \in G$, under the decomposition $A = S^{1/2}R$, $S \in \shgspd{n}$, $R \in H$, 
  the measure on $G$ splits as $d\mu_{G}(A) = \beta_{n} d\mu_{\shgspd{n}}(S)d\mu_{H}(R)$,
  where $\beta_{n} = \frac{\textnormal{Vol}(\go{n})}{2^{n}}$ is a normalizing constant.
  Restricting to $G = \ggl{n}$ and $H = \gso{n}$ and ignoring constants, 
  we have:
  \begin{align}
    f \mapsto \int_{G} f(A) \dd{\mu_{G}(A)} = \int_{\shgspd{n}}\int_{H}f(S^{1/2}R)\dd{\mu_{H}(R)}\dd{\mu_{\shgspd{n}}(S)},~\forall f \in C_c(G)
  \end{align}
\end{restatable}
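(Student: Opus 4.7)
The plan is to combine the abstract quotient integral formula from Theorem~\ref{thm:decompose_haar_abstract}(1) with a direct Jacobian computation at the identity to pin down the constant $\beta_n$.

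First I would apply Theorem~\ref{thm:decompose_haar_abstract}(1) with $G=\gglfull{n}$ and $H=\go{n}$. Both groups are unimodular --- $\gglfull{n}$ because its standard Haar measure $|\det A|^{-n}\,dA$ is easily checked to be both left- and right-invariant, and $\go{n}$ because it is compact --- so $\Delta_G|_H\equiv 1\equiv\Delta_H$ and a $G$-invariant Radon measure $\mu_{G/H}$ exists on $G/H$. Using the polar decomposition from Theorem~\ref{thm:manifold_splitting_theorem}, $G/H\cong\shgspd{n}$ via $gH\mapsto gg^T$, and the left $G$-action becomes $S\mapsto BSB^T$. A short check shows that $(\det S)^{-(n+1)/2}\,dS$ is invariant under this action: the Jacobian $|\det B|^{n+1}$ of the linear map $T\mapsto BTB^T$ on $\gsym{n}$ cancels the factor $|\det B|^{-(n+1)}$ coming from $\det(BSB^T)=(\det B)^2\det S$. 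By uniqueness (up to scalar) of $G$-invariant Radon measures on a homogeneous space, $\mu_{G/H}$ and $\mu_{\shgspd{n}}$ agree up to a multiplicative constant, which I call $\beta_n$.

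To pin down $\beta_n$, I would compute the Jacobian of $\Phi\colon(S,R)\mapsto S^{1/2}R$ at $(I,I)$. Writing $S=I+tH$ with $H\in\gsym{n}$ and $R=I+tW$ with $W\in\lgo{n}$, expansion gives $\Phi(S,R)=I+t(\tfrac{1}{2}H+W)+O(t^2)$, so $d\Phi_{(I,I)}$ is the block map $\tfrac{1}{2}\,\mathrm{id}_{\gsym{n}}\oplus\mathrm{id}_{\lgo{n}}$ under the orthogonal decomposition $\gmatrix{n}=\gsym{n}\oplus\lgo{n}$. In entry-Lebesgue coordinates this produces two contributions: the elementary change of variables $(A_{ij},A_{ji})\leftrightarrow(\sigma_{ij},\omega_{ij})$ for $i<j$ contributes $2^{n(n-1)/2}$, while the scaling $\dot S\mapsto\tfrac{1}{2}\dot S$ on $\gsym{n}$ contributes $2^{-n(n+1)/2}$, for a net factor of $2^{-n}$. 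Since $|\det A|=\det S=1$ at $A=S=I$, this gives $d\mu_G=2^{-n}\,dS\,d\omega$ at the identity, where $d\omega$ is entry Lebesgue on $\lgo{n}$. Converting $d\omega$ to the Haar measure $d\mu_H$ normalized by $\mu_H(H)=1$ introduces exactly $\mathrm{Vol}(\go{n})$, yielding $\beta_n=\mathrm{Vol}(\go{n})/2^n$.

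To finish I would verify that the right-hand side of the claimed identity is itself a left-invariant Radon measure on $G$: under $A\mapsto BA$ the polar factors transform as $S\mapsto BSB^T$ (for which $\mu_{\shgspd{n}}$ is invariant) and $R\mapsto WR$ with $W=(BSB^T)^{-1/2}BS^{1/2}\in\go{n}$ (for which $\mu_H$ is left-invariant). Uniqueness of left Haar measure on $G$ then promotes the identity at $(I,I)$ to equality everywhere. The hard part will be the careful bookkeeping of the several Lebesgue normalizations on $\gmatrix{n}$, $\gsym{n}$, and $\lgo{n}$ so that the $2^{n(n-1)/2}$ from the off-diagonal rearrangement and the $2^{-n(n+1)/2}$ from the square-root differential combine cleanly into the stated $2^{-n}$; the remaining ingredients are essentially abstract nonsense from uniqueness of invariant measures.
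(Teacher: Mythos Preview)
Your proposal is correct and in fact more self-contained than what the paper does: the paper does not supply its own argument for Theorem~\ref{thm:polar_coordinates_haar_change} but simply cites the literature (\citet{gross1976bessel}, \citet{faraut1987bessel}, \citet{farrell2012multivariate}, \citet{chirikjian2012algebraic}) and then quotes the Lebesgue change-of-variables formula $dA=\beta_n\,|\det S|^{1/2}\,dO\,dS$ from Chirikjian without deriving $\beta_n$. Your route---abstract existence of $\mu_{G/H}$ via Theorem~\ref{thm:decompose_haar_abstract}(1), identification of $\mu_{G/H}$ with $\mu_{\shgspd{n}}$ by checking invariance of $(\det S)^{-(n+1)/2}dS$ under $S\mapsto BSB^{T}$, the Jacobian of $d\Phi_{(I,I)}=\tfrac12\,\id_{\gsym{n}}\oplus\id_{\lgo{n}}$ to fix the constant, and uniqueness of left Haar measure to globalize---is a standard and clean proof, and has the virtue of using the paper's own Theorem~\ref{thm:decompose_haar_abstract} rather than an external reference. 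One point to watch: when you write ``converting $d\omega$ to the Haar measure $d\mu_H$ normalized by $\mu_H(H)=1$ introduces exactly $\mathrm{Vol}(\go{n})$,'' make sure the convention you use for entry Lebesgue measure on $\lgo{n}$ matches the one underlying the formula $\mathrm{Vol}(\go{n})=2^{n}\pi^{n^{2}/2}/\Gamma_n(n/2)$ quoted in the paper---the Frobenius-orthonormal basis on $\lgo{n}$ differs from the coordinate basis $\{E_{ij}-E_{ji}\}_{i<j}$ by factors of $\sqrt{2}$, which is exactly the kind of bookkeeping you already flagged as the delicate step.
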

The Haar measure of $\gglfull{n}$ is $d\mu_{\gglfull{n}}(A) = \lvert \det(A) \rvert^{-n}dA$, with $dA$ the Lebesgue measure on $\sR^{n^2}$.
We now describe how to sample on the individual factors to obtain $\gglfull{n}$ samples.
\begin{restatable}{thm}{GLnProductSamplingIndep}\label{thm:product_sampling_independence}
If a random matrix ${A \in \gglfull{n}}$ has a left-$\go{n}$ invariant density 
function relative to $\lvert AA^{T} \rvert^{-n/2}dA$, then ${(AA^{T})^{1/2} = S^{1/2}}$ 
and $R = (AA^{T})^{-1/2}A$ are independent random matrices and $R$ 
has a uniform probability distribution on $\go{n}$.
The uniform distribution on $\go{n}$ will 
be the normalized Haar measure $\mu_{\go{n}}$.
Conversely, if $S \in \shgspd{n}$ has a density 
function $f: \shgspd{n} \to \sR_{\geq 0}$ relative to $\mu_{\shgspd{n}}$ and 
$R \in \go{n}$ is uniformly distributed with respect to the 
Haar measure $\mu_{\go{n}}$, then $A = S^{1/2}R$ has a density function $\beta_{n}^{-1}f(AA^{T})\lvert \det(A) \rvert^{-n}$ relative to $dA$.
\end{restatable}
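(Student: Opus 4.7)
The plan is to transfer the problem from $\gglfull{n}$ to the product space $\shgspd{n}\times\go{n}$ via the polar bijection $A\leftrightarrow(S,R)$, and to use the measure factorization of Theorem~\ref{thm:polar_coordinates_haar_change}, namely $d\mu_G(A) = \beta_n\,d\mu_{\shgspd{n}}(S)\,d\mu_{\go{n}}(R)$ under $A = S^{1/2}R$. Two structural facts will do most of the work: $\mu_{\shgspd{n}}$ is $\gglfull{n}$-invariant and hence $\go{n}$-conjugation invariant, while $\mu_{\go{n}}$ is bi-invariant and normalized so that $\mu_{\go{n}}(\go{n})=1$. The hypothesis that $A$ has a density $p$ relative to $|AA^T|^{-n/2}\,dA$ is equivalent to $p$ being a density with respect to $d\mu_G$.

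For the forward direction, pushing forward through the polar bijection yields a joint density of $(S,R)$ with respect to $d\mu_{\shgspd{n}}\otimes d\mu_{\go{n}}$ equal to $\tilde{p}(S,R) = \beta_n\,p(S^{1/2}R)$. The $\go{n}$-invariance of $p$ translates, through the action induced on the polar factors, into an invariance of $\tilde{p}$ along $\go{n}$-orbits; combining this with conjugation-invariance of $\mu_{\shgspd{n}}$ and bi-invariance of $\mu_{\go{n}}$ forces all $R$-dependence to drop out, giving the product form $\tilde{p}(S,R) = h(S)$. Independence of $S$ and $R$ is then immediate, and $R$ has constant marginal density $\int h\,d\mu_{\shgspd{n}}$, so it is Haar-uniform on $\go{n}$.

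The converse is the same change of variables run in reverse. With $S$ having density $f$ relative to $\mu_{\shgspd{n}}$ and $R$ independent Haar-uniform, the joint density on $\shgspd{n}\times\go{n}$ with respect to $d\mu_{\shgspd{n}}\otimes d\mu_{\go{n}}$ is simply $f(S)$. Pushing this forward via $(S,R)\mapsto S^{1/2}R$ and applying the measure factorization in reverse yields density $\beta_n^{-1}f(AA^T)$ for $A$ relative to $d\mu_G$; converting from $d\mu_G(A) = |\det A|^{-n}\,dA$ back to the Lebesgue measure $dA$ produces exactly the claimed $\beta_n^{-1}f(AA^T)\,|\det A|^{-n}$.

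The main obstacle is the forward implication: carefully routing the $\go{n}$-action on $A$ through the polar decomposition so as to cleanly eliminate the $R$-dependence of $\tilde{p}$. The cleanest route is to observe that the left polar factorization is compatible with an $\go{n}$-action that literally fixes $S = AA^T$ and only translates $R$ (sending $(S,R)\mapsto(S,RQ)$ on the product side), so that the assumed invariance of the density of $A$ descends to the statement $\tilde{p}(S,R)=\tilde{p}(S,RQ)$ for every $Q\in\go{n}$, i.e., $\tilde{p}(S,R)=h(S)$. The bi-invariance and normalization of $\mu_{\go{n}}$, together with the $\gglfull{n}$-invariance of $\mu_{\shgspd{n}}$, are what make the measure side compatible with this reduction, ensuring the density-level factorization lifts back to the joint distribution of $(S,R)$.
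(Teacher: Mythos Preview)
The paper does not supply a self-contained proof here: it cites Lemmas~5.2.4 and~5.2.8 of Farrell and Proposition~7.4 of Eaton, noting that one adapts from the right to the left polar decomposition. Your proposal is essentially the standard argument behind those references, organised around the measure factorization of Theorem~\ref{thm:polar_coordinates_haar_change}, and your converse direction is correct as written.

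There is, however, a genuine left/right mismatch in your forward direction. You assert that under the left polar factorization $A=S^{1/2}R$ with $S=AA^T$ there is an $\go{n}$-action fixing $S$ and sending $R\mapsto RQ$, and that the \emph{assumed} invariance of the density then yields $\tilde p(S,R)=\tilde p(S,RQ)$. The action you describe is the \emph{right} action $A\mapsto AQ$: indeed $(AQ)(AQ)^T=AA^T$ and $(AA^T)^{-1/2}AQ=RQ$. But the hypothesis is \emph{left}-$\go{n}$ invariance, $p(QA)=p(A)$. The left action sends $(S,R)\mapsto(QSQ^T,QR)$, so what the hypothesis actually delivers is $\tilde p(QSQ^T,QR)=\tilde p(S,R)$, which does not by itself force $R$-independence. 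For instance $p(A)=f(A^TA)$ is left-$\go{n}$ invariant but pushes forward to $\tilde p(S,R)=\beta_n f(R^TSR)$, which depends jointly on $S$ and $R$ whenever $f$ is not a spectral function. The references the paper cites pair left-$\go{n}$ invariance with the \emph{right} polar decomposition $A=RS^{1/2}$, $S=A^TA$ (where the left action really does fix $S$ and left-translate $R$), or equivalently pair the left polar decomposition with right-$\go{n}$ invariance. Your argument goes through cleanly once that pairing is respected; as written, the invariance you invoke does not match the action you apply.
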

Theorems~\ref{thm:polar_coordinates_haar_change} and~\ref{thm:product_sampling_independence} are known results that appear in the random matrix theory literature, but have not seen recent application in the context of deep learning. 
In (\ref{subsec:appendix_polar_haar}) we provide more details and references.
Using the decomposition $A = S^{1/2}R$ invariant 
integration problems on $G$ can be transferred to the product 
space $M \times H$, and we can express up to normalization the invariant measure 
$\mu_{G}$ as $\varphi_{\ast}(\mu_{M} \otimes \mu_{H})$.
To construct samples $\{\rmA_{1},\ldots,\rmA_{n}\}\sim \mu_{G}$ one produces 
samples $\{\rmR_{1}, \ldots, \rmR_{n}\} \sim \mu_{H}$ where $\mu_{H}$ will 
be the uniform distribution on $H$, and samples 
$\{\rmM_{1}, \ldots, \rmM_{n}\} \sim \mu_{M}$.
Then $\mu_{G}$-distributed random values are obtained 
by $\{\rmA_{1},\ldots,\rmA_{n}\} = \{\varphi(\rmM_{1}, \rmR_{1}), \ldots, \varphi(\rmM_{n}, \rmR_{n})\}$, where again $\varphi: M \times H \to G$ is given by $\varphi: (S, R) \mapsto  S^{1/2}R$.

\paragraph{Mapping to the Lie algebra and back}
Any $A \in G$ can be expressed uniquely as $A = e^{X}R$ 
for $X \in \mathfrak{m}$ and $R \in H$.
Since $H = \gso{n}$ in both cases, the fact that $\explie: \lgso{n} \to \gso{n}$ is 
surjective, allows us to write
it $A = e^{X}e^{Y}$, $Y \in \lgso{n}$.
The factors $X$ and $R = e^{Y}$ are obtained using $\varPhi^{-1}$ 
(\ref{eq:cartan_lie_algebra_inverse_diffeomorphism}).
Then by taking the principal branch of the matrix logarithm  
on $H = \gso{n}$, $Y = \logm(R)$.
A map $\xi^{-1}: G \to \mathfrak{g}$ as described in (\ref{eq:lie_conv_kernel}) 
and (\ref{eq:monte_carlo_cross_corr}) is constructed as $\xi^{-1} = (\id_{\mathfrak{m}} \times~\logm) \circ \varPhi^{-1}$.
More precisely, for any $A = e^{X}e^{Y} \in G$, using $\xi^{-1}$ we obtain 
the tangent vectors 
$(Y, X) \in \lgso{n} \times \mathfrak{m}$ and since 
$\mathfrak{g} = \lgso{n} \oplus \mathfrak{m}$ we have a unique 
$Z = X + Y \in \mathfrak{g}$.
Details are given in (\ref{subsec:lie_algebra_param_appendix}).

Define $\tilde{K}_{\theta} \coloneqq k_{\theta} \circ \xi^{-1}: G \to \sR$ as 
our Lie algebra kernel.
A Monte Carlo approximation of a 
cross-correlation operator $C_{k}: L^{2}(G) \to L^{2}(G)$ as 
in (\ref{eq:monte_carlo_cross_corr}) will be of the form:
\begin{align}\label{eq:cartan_kernel_map_monte_carlo_conv}
  C_{k}f: g \mapsto \frac{1}{N} \sum_{i=1}^{N} f(\tilde{g}_{i})\tilde{K}_{\theta}(\tilde{g}^{-1}_{i}g),~\tilde{g}_{i}\sim \mu_{G},\quad \forall g \in G
\end{align}

For affine groups, every element $(x, A)$ of $\sR^{n} \rtimes G$, can be 
uniquely decomposed as $(x, I)(0, A)$, with $I$ the $n \times n$ identity matrix.
One can use the fact that $\gL_{(x,A)} = \gL_{(x, I)}\gL_{(0, A)}$ to write:
\begin{align}\label{eq:kernel_homomorphism_conv2d}
  k((x, A)^{-1}(\tilde{x}, \tilde{A})) = \gL_{(x, A)}k(\tilde{x}, \tilde{A}) = \gL_{(x, I)}[\gL_{(0, A)}k(\tilde{x}, \tilde{A})] = \gL_{x}[k(A^{-1}\tilde{x}, A^{-1}\tilde{A})]
\end{align}
An efficient implementation of a convolutional layer can be realised in practice 
for $n \in \{2,3\}$ by 
first obtaining the transformed kernel $k(A^{-1}\tilde{x}, A^{-1}\tilde{A})$ and 
then applying the translation $\gL_{x}$ using an efficient convolution routine, 
as done for example in~\citet{cohen2016group, bekkers2019b}.

\revision{In practice, the exact discretization of the translation factor $\sR^{n}$ will depend on the support of the input data.}
For example, if our input signals are defined compactly on a grid (e.g. 2D images), 
we can approximate a continuous convolution~\citep{finzi2020generalizing} by sampling 
the translation factor in a uniform grid of coordinates $\tilde{x} \sim [-1, 1]^{n} \subset \sR^{n}$ as the parametrization $\xi_{\gaff{G}}: \sR^{n} \times \lggl{n} \to G$ is 
the identity map for the first factor.
We can then approximate a lifting cross-correlation layer by:
\begin{align}\label{eq:lifting_layer_concrete}
  C^{\uparrow}_{k}f: (x, A) &\mapsto \int_{\sR^{n}} f(\tilde{x})\gL_{x}k(A^{-1}\tilde{x})\delta(A^{-1})\dd{\tilde{x}}\\
    &\approx \frac{1}{N} \sum_{i=1}^{N}f(\tilde{x}_{i})\gL_{x}[k_{\theta}(A^{-1}\tilde{x}_{i})\delta(A^{-1})],~x_i \sim [-1, 1]^{n}
\end{align}
For the non-lifting layers, starting from 
(\ref{eq:cross_correlation_semi_dir_example}), denoting $\dd{\tilde{A}} = \dd{\mu_{G}(\tilde{A})}$ and applying (\ref{eq:kernel_homomorphism_conv2d}) we have:
  \begin{align}
    [C_{k}f](x, A) = \int_{\sR^{n}} \int_{G} f(\tilde{x}, \tilde{A}) \gL_{x}k(A^{-1}\tilde{x},A^{-1}\tilde{A})\delta(\tilde{A}^{-1}) \dd{\tilde{x}}\dd{\tilde{A}}
  \end{align}
  Using Theorem~\ref{thm:polar_coordinates_haar_change}, denote the invariant measures $\mu_{M}$ and $\mu_{H}$ by $\dd{S}$ and $\dd{R}$, we obtain: 
  \begin{align}
   [C_{k}f](x, A) = \beta_{n} \int_{\sR^{n}} \int_{H} \int_{M} f(\tilde{x},S^{1/2}R)\gL_{x}k(A^{-1}\tilde{x},A^{-1}S^{1/2}R)\delta(S^{-1/2}) \dd{S}\dd{R}\dd{\tilde{x}}
  \end{align}
  The kernel in (\ref{eq:cartan_kernel_map_monte_carlo_conv}) is 
  now of the form $K_{\theta}: \sR^{n} \rtimes G \to \sR$, giving us:
  \begin{align}\label{eq:semidir_final_lie_kernel_form}
    [C_{k}f](x, A) \approx \frac{V}{N}\sum_{i=1}^{N}f(\tilde{x_{i}}, S_{i}^{1/2}R_{i})\gL_{x}[K_{\theta}(A^{-1}\tilde{x_{i}},\xi^{-1}(A^{-1}S_{i}^{1/2}R_{i}))\delta(S_{i}^{-1/2})]
  \end{align}
  where $\tilde{x_i} \sim [-1, 1]^{n}$, $S_i \times R_i \sim (\mu_{M} \otimes \mu_{H})$, 
  and $R_i$ sampled uniformly with respect $\mu_{H}$.
  $V$ records both the volume of the integration space from the MC approximation as well as the constant $\beta_{n}$.

\section{Experiments}\label{sec:experiments}
For all experiments we use a ResNet-style architecture, replacing convolutional layers with cross-correlations that are equivariant (in expectation) with respect to the groups ${\sR^{2} \rtimes \ggl{2}}$ and ${\sR^{2} \rtimes \gsl{2}}$.
Details regarding the network architecture and training 
are given in Appendix~\ref{sec:experiment_details}. 
\vspace{-2.5mm}
\paragraph{Affine-transformation invariance}
We evaluate our model on a benchmark affine-invariant image classification task 
employing the affNIST dataset\footnote{\url{http://www.cs.toronto.edu/~tijmen/affNIST}}.
The main works we compare with 
are the affine-equivariant model of~\cite{macdonald2022enabling} and the 
Capsule Networks~\citet{de2020introducing, ribeiro2020capsule} which are state 
of the art for this task.
The experimental setup involves training on the standard set of $50000$ 
non-transformed MNIST images (padded to $40\times 40$), and evaluating on 
the affNIST test set, which consists of $320000$ affine-transformed 
MNIST images.
The model never sees the transformed affNIST images during training, and we 
do not use any data augmentation techniques.
In this case, robustness with respect to the larger groups of 
the affine family of transformations is needed.
For a fair comparison we roughly equalize the number of parameters 
with the referenced models.
\begin{table}[h]
    \small
    \centering
    \caption{affNIST classification accuracy, after training on MNIST.}
    \label{table:affnist_equivariance}
    \scalebox{0.9}{
    \begin{tabular}{ccccc}
        \toprule
        Model & affNIST Acc. & MNIST Acc. & Parameters & MC. Samples  \\
        \midrule
        $\sR^{2} \rtimes \gsl{2}$ & $98.5(\pm 0.1)$ & $99.55 (\pm 0.1)$ & $370$K & $10$\\
        \midrule
        VB CapsNet~\citep{ribeiro2020capsule} & $98.1$ & $99.7$ & $175$K & --- \\
        \midrule
        RU CapsNet~\citep{de2020introducing} & $97.69$ & $99.72$ & $>580$K & --- \\
        \midrule
        $\sR^{2} \rtimes \ggl{2}$ & $97.4(\pm 0.2)$ & $99.5 (\pm 0.1)$ & $395$K & $10$\\
        \midrule
        affConv~\citep{macdonald2022enabling} & $95.08$ & $98.7$ & $374$K & $100$\\
        \midrule
        affine CapsNet~\citep{gu2020improving} & $93.21$ & $99.23$ & --- & ---\\
        \midrule
        Equivariant CapsNet~\citep{lenssen2018group} & $89.1$ & $98.42$ & $235$K & ---\\
        \midrule
    \end{tabular}
  }
\end{table}

Table \ref{table:affnist_equivariance} reports the average test performance of 
our model at the final epoch, over five training runs with different initialisations.
We observe that our equivariant models are robust and generalize well, 
with the $\sR^{2} \rtimes \gsl{2}$ model outperforming all 
previous equivariant models and Capsule Networks.
Note that, compared to~\citet{macdonald2022enabling}, our sampling 
scheme requires $10$ times less samples to realize an accurate Monte Carlo 
approximation of the convolution.
The $\sR^{2} \rtimes \ggl{2}$ model performs slightly worse than 
the volume-preserving affine group $\sR^{2} \rtimes \gsl{2}$.
This can be explained by considering that the affNIST dataset contains 
only a small degree of scaling.
\paragraph{Homography transformations}
We further evaluate and report 
in Table~\ref{table:homnist_equivariance_mnist_generalization} the performance 
of the same model evaluate on the homNIST dataset of~\citet{macdonald2022enabling}.
The setup is identical to the affNIST case, with the images now being 
transformed by random homographies.
We observe a similar degree of robustness in this case, again outperforming previous methods applied to this task.
\begin{table}[ht]
    \small
    \centering
    \caption{homNIST classification.}
    \label{table:homnist_equivariance_mnist_generalization}
    \scalebox{1.0}{
    \begin{tabular}{ccc}
        \toprule
        Model & homNIST Acc. & MC. Samples\\
        \midrule
        $\sR^{2} \rtimes \gsl{2}$ & $98.3(\pm 0.1)$ & $10$\\
        \midrule
        $\sR^{2} \rtimes \ggl{2}$ & $97.71(\pm 0.1)$ & $10$\\
        \midrule
        affConv~\citep{macdonald2022enabling} & $95.71$ & $100$\\
        \midrule
    \end{tabular}
  }
\end{table}

\revision{
  As our models are only equivariant in expectation, we analyze numerically in Sec.~\ref{sec:equivariance_error_sec} the degree to which the \emph{equivariance error} is dependent on the number of Monte Carlo samples used to approximate the convolution/cross-correlation integral.
}

\section{Conclusion}\label{sec:conclusion} 
We have built a framework for constructing equivariant networks 
when working with matrix Lie groups that are not necessarily compact or abelian.
Using the structure theory of semisimple/reductive Lie groups we have shown one 
possible avenue for constructing invariant/equivariant (convolutional) 
layers primarily relying 
on tools which allow us to decompose larger groups into smaller ones.
In our preliminary experiments, the robustness and 
out-of-distribution capabilities of the equivariant 
models were shown to outperform previous proposals on tasks where the symmetry 
group of relevance is one of $\ggl{n}$ or $\gsl{n}$.

Our contribution is largely theoretical, providing a framework by which equivariance/invariance to complex symmetry groups can be obtained.
Further experiments will look to validate the applicability of our method to other data modalities, such as point clouds or molecules, as in~\citet{finzi2020generalizing}.

While we have primarily focused on convolution operators, we remark that the 
tools explored here are immediately applicable to closely-related 
machine learning models which employ Lie groups and their regular representation 
for invariance/equivariance. 
For example, the `LieTransformer' architecture 
proposed in~\cite{hutchinson2021lietransformer} opts to replace convolutional layers 
with self-attention layers, while still using the Lie algebra of the group 
as a mechanism for incorporating positional information.
They face the same challenge in that their parametrization is dependent 
on the mapping elements back and forth between a chosen Lie group and its Lie algebra, 
and they require a mechanism for sampling on the desired group.
The methods presented here are directly applicable in this case. 

Future work will explore expanding the class of Lie groups 
employed by such models using the tools presented here.
Another potential avenue to explore is the applicability of the presented 
tools to the problem of `partial' and `learned' invariance/equivariance~\citep{benton2020learning}.
The sampling mechanism of the product decomposition allows one to specify a probability distribution for the non-orthogonal factor, which could be learned from data.

\section*{Acknowledgments}
The presentation of this paper at the conference was financially supported
by the Amsterdam ELLIS Unit and Qualcomm.

\bibliography{references.bib}

\begin{thebibliography}{85}
\providecommand{\natexlab}[1]{#1}
\providecommand{\url}[1]{\texttt{#1}}
\expandafter\ifx\csname urlstyle\endcsname\relax
  \providecommand{\doi}[1]{doi: #1}\else
  \providecommand{\doi}{doi: \begingroup \urlstyle{rm}\Url}\fi

\bibitem[Abbaspour \& Moskowitz(2007)Abbaspour and
  Moskowitz]{abbaspour2007basic}
Hossein Abbaspour and Martin~A Moskowitz.
\newblock \emph{Basic Lie Theory}.
\newblock World Scientific, 2007.

\bibitem[Andruchow et~al.(2014)Andruchow, Larotonda, Recht, and
  Varela]{andruchow2014left}
Esteban Andruchow, Gabriel Larotonda, Lazaro Recht, and Alejandro Varela.
\newblock The left invariant metric in the general linear group.
\newblock \emph{Journal of Geometry and Physics}, 86:\penalty0 241--257, 2014.

\bibitem[Arsigny et~al.(2007)Arsigny, Fillard, Pennec, and
  Ayache]{arsigny2007geometric}
Vincent Arsigny, Pierre Fillard, Xavier Pennec, and Nicholas Ayache.
\newblock Geometric {Means} in a {Novel} {Vector} {Space} {Structure} on
  {Symmetric} {Positive‐Definite Matrices}.
\newblock \emph{SIAM Journal on Matrix Analysis and Applications}, 29\penalty0
  (1):\penalty0 328--347, 2007.

\bibitem[Ba et~al.(2016)Ba, Kiros, and Hinton]{ba2016layer}
Jimmy~Lei Ba, Jamie~Ryan Kiros, and Geoffrey~E. Hinton.
\newblock Layer {Normalization}.
\newblock In \emph{NeurIPS Deep Learning Symposium}, 2016.

\bibitem[Batatia et~al.(2023)Batatia, Geiger, Munoz, Smidt, Silberman, and
  Ortner]{batatia2023general}
Ilyes Batatia, Mario Geiger, Jose~M Munoz, Tess Smidt, Lior Silberman, and
  Christoph Ortner.
\newblock A {General Framework} for {Equivariant Neural Networks on Reductive
  Lie Groups}.
\newblock In \emph{Thirty-seventh Conference on Neural Information Processing
  Systems}, 2023.
\newblock URL \url{https://openreview.net/forum?id=3XStpETaO8}.

\bibitem[Bekkers(2019)]{bekkers2019b}
Erik~J Bekkers.
\newblock B-{Spline} {CNNs} on {Lie Groups}.
\newblock In \emph{International Conference on Learning Representations}, 2019.

\bibitem[Benton et~al.(2020)Benton, Finzi, Izmailov, and
  Wilson]{benton2020learning}
Gregory Benton, Marc Finzi, Pavel Izmailov, and Andrew~G Wilson.
\newblock Learning {Invariances} in {Neural Networks} from {Training Data}.
\newblock \emph{Advances in Neural Information Processing Systems},
  33:\penalty0 17605--17616, 2020.

\bibitem[Bogatskiy et~al.(2020)Bogatskiy, Anderson, Offermann, Roussi, Miller,
  and Kondor]{bogatskiy2020lorentz}
Alexander Bogatskiy, Brandon Anderson, Jan Offermann, Marwah Roussi, David
  Miller, and Risi Kondor.
\newblock Lorentz {Group Equivariant Neural Network} for {Particle Physics}.
\newblock In \emph{International Conference on Machine Learning}, pp.\
  992--1002. PMLR, 2020.

\bibitem[Bourbaki \& Berberian(2004)Bourbaki and
  Berberian]{bourbaki2004integration}
Nicolas Bourbaki and SK~Berberian.
\newblock \emph{Integration II}.
\newblock Springer, 2004.

\bibitem[Bridson \& Haefliger(2013)Bridson and Haefliger]{bridson2013metric}
Martin~R Bridson and Andr{\'e} Haefliger.
\newblock \emph{Metric {Spaces} of {Non-Positive Curvature}}, volume 319.
\newblock Springer Science \& Business Media, 2013.

\bibitem[Bronstein et~al.(2021)Bronstein, Bruna, Cohen, and
  Veli{\v{c}}kovi{\'c}]{bronstein2021geometric}
Michael~M Bronstein, Joan Bruna, Taco Cohen, and Petar Veli{\v{c}}kovi{\'c}.
\newblock Geometric {Deep Learning}: {Grids, Groups, Graphs, Geodesics, and
  Gauges}.
\newblock \emph{arXiv preprint arXiv:2104.13478}, 2021.

\bibitem[Cesa et~al.(2022)Cesa, Lang, and Weiler]{cesa2021ENsteerable}
Gabriele Cesa, Leon Lang, and Maurice Weiler.
\newblock A {Program} to {Build} {E(N)-Equivariant Steerable CNNs}.
\newblock In \emph{{International Conference on Learning Representations
  (ICLR)}}, 2022.
\newblock URL \url{https://openreview.net/forum?id=WE4qe9xlnQw}.

\bibitem[Chirikjian(2012)]{chirikjian2012algebraic}
Gregory~S. Chirikjian.
\newblock \emph{Stochastic Models, Information Theory, and Lie Groups, Volume
  2}.
\newblock Birkhäuser Boston, MA, 2012.
\newblock ISBN 978-0-8176-4943-2.

\bibitem[Cohen \& Welling(2016)Cohen and Welling]{cohen2016group}
Taco Cohen and Max Welling.
\newblock Group {Equivariant Convolutional Networks}.
\newblock In \emph{International Conference on Machine Learning}, pp.\
  2990--2999. PMLR, 2016.

\bibitem[Cohen \& Welling(2017)Cohen and Welling]{cohen2017steerable}
Taco~S. Cohen and Max Welling.
\newblock Steerable {CNN}s.
\newblock In \emph{International Conference on Learning Representations}, 2017.
\newblock URL \url{https://openreview.net/forum?id=rJQKYt5ll}.

\bibitem[Cohen et~al.(2019)Cohen, Geiger, and Weiler]{cohen2019general}
Taco~S Cohen, Mario Geiger, and Maurice Weiler.
\newblock A {General Theory} of {Equivariant CNNs} on {Homogeneous Spaces}.
\newblock \emph{Advances in Neural Information Processing Systems}, 32, 2019.

\bibitem[Dolcetti \& Pertici(2015)Dolcetti and Pertici]{dolcetti2014some}
Alberto Dolcetti and Donato Pertici.
\newblock Some differential properties of $\textnormal{GL}_{n}(\mathbb{R})$
  with the trace metric.
\newblock \emph{Rivista di Matematica della Università di Parma}, 6\penalty0
  (2):\penalty0 267--286, 2015.

\bibitem[Dolcetti \& Pertici(2019)Dolcetti and
  Pertici]{dolcetti2018differential}
Alberto Dolcetti and Donato Pertici.
\newblock Differential properties of spaces of symmetric real matrices.
\newblock \emph{Rendiconti del Seminario Matematico}, 77\penalty0 (1):\penalty0
  25--43, 2019.

\bibitem[Eaton(1983)]{eaton1983multivariate}
Morris~L Eaton.
\newblock \emph{Multivariate {Statistics}: {A Vector Space Approach}}.
\newblock Institute of Mathematical Statistics, 1983.

\bibitem[Faraut(2008)]{faraut2008analysis}
Jacques Faraut.
\newblock \emph{Analysis on Lie Groups: An Introduction}.
\newblock Cambridge Studies in Advanced Mathematics. Cambridge University
  Press, 2008.
\newblock \doi{10.1017/CBO9780511755170}.

\bibitem[Faraut \& Travaglini(1987)Faraut and Travaglini]{faraut1987bessel}
Jacques Faraut and Giancarlo Travaglini.
\newblock Bessel {Functions Associated} with {Representations} of {Formally
  Real Jordan Algebras}.
\newblock \emph{Journal of Functional Analysis}, 71\penalty0 (1):\penalty0
  123--141, 1987.

\bibitem[Farrell(2012)]{farrell2012multivariate}
Roger~H Farrell.
\newblock \emph{Multivariate {Calculation}: {Use} of the {Continuous Groups}}.
\newblock Springer Science \& Business Media, 2012.

\bibitem[Finzi et~al.(2020)Finzi, Stanton, Izmailov, and
  Wilson]{finzi2020generalizing}
Marc Finzi, Samuel Stanton, Pavel Izmailov, and Andrew~Gordon Wilson.
\newblock Generalizing {Convolutional Neural Networks} for {Equivariance} to
  {Lie Groups} on {Arbitrary Continuous Data}.
\newblock In \emph{International Conference on Machine Learning}, pp.\
  3165--3176. PMLR, 2020.

\bibitem[Finzi et~al.(2021)Finzi, Welling, and Wilson]{finzi2021practical}
Marc Finzi, Max Welling, and Andrew~Gordon Wilson.
\newblock A {Practical Method} for {Constructing Equivariant Multilayer
  Perceptrons} for {Arbitrary Matrix Groups}.
\newblock In \emph{International Conference on Machine Learning}, pp.\
  3318--3328. PMLR, 2021.

\bibitem[Folland(1999)]{folland1999real}
Gerald~B Folland.
\newblock \emph{Real {Analysis}: {Modern Techniques} and {Their Applications}},
  volume~40.
\newblock John Wiley \& Sons, 1999.

\bibitem[Folland(2016)]{folland2016course}
Gerald~B Folland.
\newblock \emph{A {Course} in {Abstract Harmonic Analysis}}, volume~29.
\newblock CRC press, 2016.

\bibitem[F{\"o}rstner \& Moonen(2003)F{\"o}rstner and
  Moonen]{forstner2003metric}
Wolfgang F{\"o}rstner and Boudewijn Moonen.
\newblock A {Metric} for {Covariance Matrices}.
\newblock \emph{Geodesy-the Challenge of the 3rd Millennium}, pp.\  299--309,
  2003.

\bibitem[Gallier \& Quaintance(2020)Gallier and
  Quaintance]{gallier2020differential}
Jean Gallier and Jocelyn Quaintance.
\newblock \emph{Differential {Geometry} and {Lie Groups}: {A Computational
  Perspective}}, volume~12.
\newblock Springer Nature, 2020.

\bibitem[Gawlik \& Leok(2018)Gawlik and Leok]{gawlik2018interpolation}
Evan~S Gawlik and Melvin Leok.
\newblock Interpolation on {Symmetric Spaces} via the {Generalized Polar
  Decomposition}.
\newblock \emph{Foundations of Computational Mathematics}, 18:\penalty0
  757--788, 2018.

\bibitem[Gross \& Kunze(1976)Gross and Kunze]{gross1976bessel}
Kenneth~I Gross and Ray~A Kunze.
\newblock Bessel {Functions} and {Representation Theory}. {I}.
\newblock \emph{Journal of Functional Analysis}, 22\penalty0 (2):\penalty0
  73--105, 1976.

\bibitem[Gu \& Tresp(2020)Gu and Tresp]{gu2020improving}
Jindong Gu and Volker Tresp.
\newblock Improving the {Robustness} of {Capsule Networks} to {Image Affine
  Transformations}.
\newblock In \emph{Proceedings of the IEEE/CVF Conference on Computer Vision
  and Pattern Recognition}, pp.\  7285--7293, 2020.

\bibitem[Hall(2015)]{hall2015lie}
B.~Hall.
\newblock \emph{Lie Groups, Lie Algebras, and Representations: An Elementary
  Introduction}.
\newblock Graduate Texts in Mathematics. Springer International Publishing,
  2015.
\newblock ISBN 9783319134673.
\newblock URL \url{https://books.google.ro/books?id=didACQAAQBAJ}.

\bibitem[Han et~al.(2022)Han, Rong, Xu, and Huang]{han2022geometrically}
Jiaqi Han, Yu~Rong, Tingyang Xu, and Wenbing Huang.
\newblock Geometrically {Equivariant Graph Neural Networks}: {A Survey}.
\newblock \emph{arXiv preprint arXiv:2202.07230}, 2022.

\bibitem[He et~al.(2016)He, Zhang, Ren, and Sun]{he2016deep}
Kaiming He, Xiangyu Zhang, Shaoqing Ren, and Jian Sun.
\newblock Deep {Residual Learning} for {Image Recognition}.
\newblock In \emph{Proceedings of the IEEE Conference on Computer Vision and
  Pattern Recognition}, pp.\  770--778, 2016.

\bibitem[Helgason(1979)]{helgason1979differential}
Sigurdur Helgason.
\newblock \emph{Differential {Geometry}, {Lie Groups}, and {Symmetric Spaces}}.
\newblock Academic press, 1979.

\bibitem[Helgason(1984)]{helgason1984groups}
Sigurdur Helgason.
\newblock \emph{Groups and {Geometric Analysis}: {Integral Geometry},
  {Invariant Differential Operators}, and {Spherical Functions}}, volume~83.
\newblock American Mathematical Society, 1984.

\bibitem[Helgason(2001)]{helgason2001differential}
Sigurdur Helgason.
\newblock \emph{Differential {Geometry} and {Symmetric Spaces}}, volume 341.
\newblock American Mathematical Soc., 2001.

\bibitem[Hendrycks \& Gimpel(2016)Hendrycks and Gimpel]{hendrycks2016gaussian}
Dan Hendrycks and Kevin Gimpel.
\newblock Gaussian {Error Linear Units} ({GELUs}).
\newblock \emph{arXiv preprint arXiv:1606.08415}, 2016.

\bibitem[Hermosilla et~al.(2018)Hermosilla, Ritschel, V{\'a}zquez, Vinacua, and
  Ropinski]{hermosilla2018monte}
Pedro Hermosilla, Tobias Ritschel, Pere-Pau V{\'a}zquez, {\`A}lvar Vinacua, and
  Timo Ropinski.
\newblock Monte {Carlo Convolution for Learning on Non-Uniformly Sampled Point
  Clouds}.
\newblock \emph{ACM Transactions on Graphics (TOG)}, 37\penalty0 (6):\penalty0
  1--12, 2018.

\bibitem[Herz(1955)]{herz1955bessel}
Carl~S Herz.
\newblock Bessel {Functions of Matrix Argument}.
\newblock \emph{Annals of Mathematics}, pp.\  474--523, 1955.

\bibitem[Hewitt \& Ross(2012)Hewitt and Ross]{hewitt2012abstract}
Edwin Hewitt and Kenneth~A Ross.
\newblock \emph{Abstract Harmonic Analysis: Volume I Structure of Topological
  Groups Integration Theory Group Representations}, volume 115.
\newblock Springer Science \& Business Media, 2012.

\bibitem[Horn \& Johnson(2012)Horn and Johnson]{horn2012matrix}
Roger~A Horn and Charles~R Johnson.
\newblock \emph{Matrix {Analysis}}.
\newblock Cambridge University Press, 2012.

\bibitem[Hutchinson et~al.(2021)Hutchinson, Le~Lan, Zaidi, Dupont, Teh, and
  Kim]{hutchinson2021lietransformer}
Michael~J Hutchinson, Charline Le~Lan, Sheheryar Zaidi, Emilien Dupont,
  Yee~Whye Teh, and Hyunjik Kim.
\newblock {LieTransformer}: {Equivariant self-attention for Lie Groups}.
\newblock In \emph{International Conference on Machine Learning}, pp.\
  4533--4543. PMLR, 2021.

\bibitem[Jost \& Jost(2008)Jost and Jost]{jost2008riemannian}
J{\"u}rgen Jost and Jeurgen Jost.
\newblock \emph{Riemannian {Geometry and Geometric Analysis}}, volume 42005.
\newblock Springer, 2008.

\bibitem[Kaniuth \& Taylor(2013)Kaniuth and Taylor]{kaniuth2013induced}
Eberhard Kaniuth and Keith~F Taylor.
\newblock \emph{Induced {Representations of Locally Compact Groups}}.
\newblock Number 197. Cambridge University Press, 2013.

\bibitem[Kingma \& Ba(2014)Kingma and Ba]{kingma2014adam}
Diederik~P Kingma and Jimmy Ba.
\newblock Adam: {A Method for Stochastic Optimization}.
\newblock \emph{arXiv preprint arXiv:1412.6980}, 2014.

\bibitem[Knigge et~al.(2022)Knigge, Romero, and Bekkers]{knigge2022exploiting}
David~M Knigge, David~W Romero, and Erik~J Bekkers.
\newblock Exploiting {Redundancy: Separable Group Convolutional Networks on Lie
  Groups}.
\newblock In \emph{International Conference on Machine Learning}, pp.\
  11359--11386. PMLR, 2022.

\bibitem[Kondor \& Trivedi(2018)Kondor and Trivedi]{kondor2018generalization}
Risi Kondor and Shubhendu Trivedi.
\newblock On the {Generalization of Equivariance and Convolution in Neural
  Networks to the Action of Compact Groups}.
\newblock In \emph{International Conference on Machine Learning}, pp.\
  2747--2755. PMLR, 2018.

\bibitem[Lang \& Weiler(2021)Lang and Weiler]{lang2021wigner}
Leon Lang and Maurice Weiler.
\newblock A {Wigner-Eckart Theorem for Group Equivariant Convolution Kernels}.
\newblock In \emph{International Conference on Learning Representations}, 2021.
\newblock URL \url{https://openreview.net/forum?id=ajOrOhQOsYx}.

\bibitem[Lang(2012)]{lang2012real}
Serge Lang.
\newblock \emph{Real and {Functional Analysis}}, volume 142.
\newblock Springer Science \& Business Media, 2012.

\bibitem[LeCun et~al.(1995)LeCun, Bengio, et~al.]{lecun1995convolutional}
Yann LeCun, Yoshua Bengio, et~al.
\newblock Convolutional {Networks for Images, Speech, and Time Series}.
\newblock \emph{The Handbook of Brain Theory and Neural Networks},
  3361\penalty0 (10):\penalty0 1995, 1995.

\bibitem[Lee(2010)]{lee2010introduction}
John Lee.
\newblock \emph{Introduction to {Topological Manifolds}}, volume 202.
\newblock Springer Science \& Business Media, 2010.

\bibitem[Lee(2013)]{lee2013smooth}
John~M Lee.
\newblock Smooth {Manifolds}.
\newblock In \emph{Introduction to Smooth Manifolds}, pp.\  1--31. Springer,
  2013.

\bibitem[Lenssen et~al.(2018)Lenssen, Fey, and Libuschewski]{lenssen2018group}
Jan~Eric Lenssen, Matthias Fey, and Pascal Libuschewski.
\newblock Group {Equivariant Capsule Networks}.
\newblock \emph{Advances in Neural Information Processing Systems}, 31, 2018.

\bibitem[Lezcano-Casado(2021)]{lezcano2022geometric}
Mario Lezcano-Casado.
\newblock Geometric {Optimisation on Manifolds with Applications to Deep
  Learning}.
\newblock \emph{DPhil Thesis, University of Oxford}, 2021.

\bibitem[MacDonald et~al.(2022)MacDonald, Ramasinghe, and
  Lucey]{macdonald2022enabling}
Lachlan~E MacDonald, Sameera Ramasinghe, and Simon Lucey.
\newblock Enabling equivariance for arbitrary {Lie} groups.
\newblock In \emph{Proceedings of the IEEE/CVF Conference on Computer Vision
  and Pattern Recognition}, pp.\  8183--8192, 2022.

\bibitem[Martin \& Neff(2016)Martin and Neff]{martin2014minimal}
Robert~J. Martin and Patrizio Neff.
\newblock Minimal geodesics on {${\rm GL}(n)$} for left-invariant, right-{${\rm
  O}(n)$}-invariant {R}iemannian metrics.
\newblock \emph{J. Geom. Mech.}, 8\penalty0 (3):\penalty0 323--357, 2016.
\newblock ISSN 1941-4889.
\newblock \doi{10.3934/jgm.2016010}.
\newblock URL \url{https://doi.org/10.3934/jgm.2016010}.

\bibitem[Mildenhall et~al.(2021)Mildenhall, Srinivasan, Tancik, Barron,
  Ramamoorthi, and Ng]{mildenhall2021nerf}
Ben Mildenhall, Pratul~P Srinivasan, Matthew Tancik, Jonathan~T Barron, Ravi
  Ramamoorthi, and Ren Ng.
\newblock {NeRF}: {Representing Scenes as Neural Radiance Fields for View
  Synthesis}.
\newblock \emph{Communications of the ACM}, 65\penalty0 (1):\penalty0 99--106,
  2021.

\bibitem[Muirhead(2009)]{muirhead2009aspects}
Robb~J Muirhead.
\newblock \emph{Aspects of {Multivariate Statistical Theory}}.
\newblock John Wiley \& Sons, 2009.

\bibitem[Munthe-Kaas et~al.(2001)Munthe-Kaas, Quispel, and
  Zanna]{munthe2001generalized}
Hans~Z Munthe-Kaas, GRW Quispel, and Antonella Zanna.
\newblock Generalized {Polar Decompositions on Lie Groups with Involutive
  Automorphisms}.
\newblock \emph{Foundations of Computational Mathematics}, 1:\penalty0
  297--324, 2001.

\bibitem[Munthe-Kaas et~al.(2014)Munthe-Kaas, Quispel, and
  Zanna]{munthe2014symmetric}
Hans~Z Munthe-Kaas, Gilles Reinout~W Quispel, and Antonella Zanna.
\newblock Symmetric spaces and {Lie} triple systems in numerical analysis of
  differential equations.
\newblock \emph{BIT Numerical Mathematics}, 54:\penalty0 257--282, 2014.

\bibitem[O'Neill(1983)]{o1983semi}
B.~O'Neill.
\newblock \emph{Semi-Riemannian Geometry With Applications to Relativity}.
\newblock ISSN. Elsevier Science, 1983.
\newblock ISBN 9780080570570.
\newblock URL \url{https://books.google.ro/books?id=CGk1eRSjFIIC}.

\bibitem[Pennec(2020)]{pennec2020manifold}
Xavier Pennec.
\newblock Manifold-valued image processing with {SPD} matrices.
\newblock In \emph{Riemannian Geometric Statistics in Medical Image Analysis},
  pp.\  75--134. Elsevier, 2020.

\bibitem[Rentmeesters et~al.(2013)]{rentmeesters2013algorithms}
Quentin Rentmeesters et~al.
\newblock \emph{Algorithms for data fitting on some common homogeneous spaces}.
\newblock PhD thesis, Ph. D. thesis, Universit{\'e} Catholique de Louvain,
  Louvain, Belgium, 2013.

\bibitem[Ribeiro et~al.(2020{\natexlab{a}})Ribeiro, Leontidis, and
  Kollias]{de2020introducing}
Fabio De~Sousa Ribeiro, Georgios Leontidis, and Stefanos Kollias.
\newblock Introducing {Routing Uncertainty in Capsule Networks}.
\newblock \emph{Advances in Neural Information Processing Systems},
  33:\penalty0 6490--6502, 2020{\natexlab{a}}.

\bibitem[Ribeiro et~al.(2020{\natexlab{b}})Ribeiro, Leontidis, and
  Kollias]{ribeiro2020capsule}
Fabio De~Sousa Ribeiro, Georgios Leontidis, and Stefanos~D Kollias.
\newblock Capsule {Routing via Variational Bayes.}
\newblock In \emph{AAAI}, pp.\  3749--3756, 2020{\natexlab{b}}.

\bibitem[Romero et~al.(2022)Romero, Kuzina, Bekkers, Tomczak, and
  Hoogendoorn]{romero2022ckconv}
David~W. Romero, Anna Kuzina, Erik~J Bekkers, Jakub~Mikolaj Tomczak, and Mark
  Hoogendoorn.
\newblock {CKC}onv: {Continuous Kernel Convolution For Sequential Data}.
\newblock In \emph{International Conference on Learning Representations}, 2022.
\newblock URL \url{https://openreview.net/forum?id=8FhxBtXSl0}.

\bibitem[Said et~al.(2017)Said, Bombrun, Berthoumieu, and
  Manton]{said2017riemannian}
Salem Said, Lionel Bombrun, Yannick Berthoumieu, and Jonathan~H Manton.
\newblock Riemannian {Gaussian Distributions on the Space of Symmetric Positive
  Definite Matrices}.
\newblock \emph{IEEE Transactions on Information Theory}, 63\penalty0
  (4):\penalty0 2153--2170, 2017.

\bibitem[Saragadam et~al.(2023)Saragadam, LeJeune, Tan, Balakrishnan,
  Veeraraghavan, and Baraniuk]{saragadam2023wire}
Vishwanath Saragadam, Daniel LeJeune, Jasper Tan, Guha Balakrishnan, Ashok
  Veeraraghavan, and Richard~G Baraniuk.
\newblock {WIRE}: {Wavelet Implicit Neural Representations}.
\newblock In \emph{Proceedings of the IEEE/CVF Conference on Computer Vision
  and Pattern Recognition}, pp.\  18507--18516, 2023.

\bibitem[Schwartzman(2016)]{schwartzman2016lognormal}
Armin Schwartzman.
\newblock Lognormal {Distributions and Geometric Averages of Symmetric Positive
  Definite Matrices}.
\newblock \emph{International Statistical Review}, 84\penalty0 (3):\penalty0
  456--486, 2016.

\bibitem[Sitzmann et~al.(2020)Sitzmann, Martel, Bergman, Lindell, and
  Wetzstein]{sitzmann2020implicit}
Vincent Sitzmann, Julien Martel, Alexander Bergman, David Lindell, and Gordon
  Wetzstein.
\newblock Implicit {Neural Representations with Periodic Activation Functions}.
\newblock \emph{Advances in neural information processing systems},
  33:\penalty0 7462--7473, 2020.

\bibitem[Sosnovik et~al.(2020)Sosnovik, Szmaja, and
  Smeulders]{Sosnovik2020Scale}
Ivan Sosnovik, Michał Szmaja, and Arnold Smeulders.
\newblock Scale-{Equivariant Steerable Networks}.
\newblock In \emph{International Conference on Learning Representations}, 2020.
\newblock URL \url{https://openreview.net/forum?id=HJgpugrKPS}.

\bibitem[Stegemeyer \& H{\"u}per(2021)Stegemeyer and
  H{\"u}per]{stegemeyer2021endpoint}
Maximilian Stegemeyer and Knut H{\"u}per.
\newblock Endpoint {Geodesics on the Set of Positive Definite Real Matrices}.
\newblock In \emph{CONTROLO 2020: Proceedings of the 14th APCA International
  Conference on Automatic Control and Soft Computing, July 1-3, 2020,
  Bragan{\c{c}}a, Portugal}, pp.\  435--444. Springer, 2021.

\bibitem[Terras(2016)]{terras2016harmonic}
Audrey Terras.
\newblock \emph{Harmonic {Analysis on Symmetric Spaces—Higher Rank Spaces,
  Positive Definite Matrix Space and Generalizations}}.
\newblock Springer, 2016.

\bibitem[Thanwerdas \& Pennec(2023)Thanwerdas and Pennec]{thanwerdas2023n}
Yann Thanwerdas and Xavier Pennec.
\newblock {O(n)}-invariant {Riemannian metrics on SPD matrices}.
\newblock \emph{Linear Algebra and its Applications}, 661:\penalty0 163--201,
  2023.

\bibitem[Thomas et~al.(2018)Thomas, Smidt, Kearnes, Yang, Li, Kohlhoff, and
  Riley]{thomas2018tensor}
Nathaniel Thomas, Tess Smidt, Steven Kearnes, Lusann Yang, Li~Li, Kai Kohlhoff,
  and Patrick Riley.
\newblock Tensor field networks: {Rotation}-and translation-equivariant neural
  networks for {3D} point clouds.
\newblock \emph{arXiv preprint arXiv:1802.08219}, 2018.

\bibitem[Wang(1969)]{wang1969discrete}
Hsien-Chung Wang.
\newblock Discrete nilpotent subgroups of lie groups.
\newblock \emph{Journal of Differential Geometry}, 3\penalty0 (3-4):\penalty0
  481--492, 1969.

\bibitem[Warner(1983)]{warner1983foundations}
Frank~W Warner.
\newblock \emph{Foundations of {Differentiable Manifolds and Lie Groups}},
  volume~94.
\newblock Springer Science \& Business Media, 1983.

\bibitem[Weiler \& Cesa(2019)Weiler and Cesa]{weiler2019general}
Maurice Weiler and Gabriele Cesa.
\newblock General {E(2)-Equivariant Steerable CNNs}.
\newblock \emph{Advances in Neural Information Processing Systems}, 32, 2019.

\bibitem[Weiler et~al.(2021)Weiler, Forr{\'e}, Verlinde, and
  Welling]{weiler2021coordinate}
Maurice Weiler, Patrick Forr{\'e}, Erik Verlinde, and Max Welling.
\newblock Coordinate {Independent Convolutional Networks}--{Isometry and Gauge
  Equivariant Convolutions on Riemannian Manifolds}.
\newblock \emph{arXiv preprint arXiv:2106.06020}, 2021.

\bibitem[Weiler et~al.(2023)Weiler, Forré, Verlinde, and
  Welling]{weiler2023EquivariantAndCoordinateIndependentCNNs}
Maurice Weiler, Patrick Forré, Erik Verlinde, and Max Welling.
\newblock \emph{{Equivariant and Coordinate Independent Convolutional
  Networks}}.
\newblock 2023.
\newblock URL
  \url{https://maurice-weiler.gitlab.io/cnn_book/EquivariantAndCoordinateIndependentCNNs.pdf}.

\bibitem[Wijsman(1990)]{wijsman1990invariant}
R.A. Wijsman.
\newblock \emph{Invariant Measures on Groups and Their Use in Statistics}.
\newblock IMS Lecture Notes. Institute of Mathematical Statistics, 1990.
\newblock ISBN 9780940600195.
\newblock URL \url{https://books.google.ro/books?id=GSk4ueHzo30C}.

\bibitem[Zacur et~al.(2014)Zacur, Bossa, and Olmos]{zacur2014left}
Ernesto Zacur, Matias Bossa, and Salvador Olmos.
\newblock Left-{Invariant Riemannian Geodesics on Spatial Transformation
  Groups}.
\newblock \emph{SIAM Journal on Imaging Sciences}, 7\penalty0 (3):\penalty0
  1503--1557, 2014.

\bibitem[Zaheer et~al.(2017)Zaheer, Kottur, Ravanbakhsh, Poczos, Salakhutdinov,
  and Smola]{zaheer2017deep}
Manzil Zaheer, Satwik Kottur, Siamak Ravanbakhsh, Barnabas Poczos, Russ~R
  Salakhutdinov, and Alexander~J Smola.
\newblock Deep {Sets}.
\newblock \emph{Advances in Neural Information Processing Systems}, 30, 2017.

\bibitem[Ziller(2010)]{ziller2010lie}
Wolfgang Ziller.
\newblock Lie {Groups, Representation Theory and Symmetric Spaces}.
\newblock \emph{Lecture Notes (preliminary version)}, 2010.

\end{thebibliography}
\bibliographystyle{iclr2024_conference}

\newpage
\appendix
\part{Appendix} 
{
  \hypersetup{linkcolor=black}
  \parttoc
}
% \section*{Appendix}
\section{Additional background}\revision{
A symmetry group refers to a set of transformations which preserve some underlying structure present in the data.
Formally, a group $G$ is a set together with an associative binary operation $G \times G \to G$ which tells us how group elements can be composed to form another.
Every element $g \in G$ has an inverse $g^{-1} \in G$, and the group has an identity element $e \in G$ such that $gg^{-1} = e$.
}

\subsection{Topological groups and the Haar measure}\label{subsec:appendix_haar}
A Hausdorff\footnote{A topological space where any two distinct points 
have disjoint neighbourhoods is called Hausdorff.} topological 
space $M$ is \textbf{locally compact} if each point $m \in M$ 
has a compact neighborhood.
A \textbf{topological group} $G$ is a group as well as a Hausdorff 
topological space such that the group operation $(g, h) \mapsto gh$ and inversion
map $g \mapsto g^{-1}$ are continuous.
For the following and more details on Radon measures 
see~\citet{folland1999real, lang2012real}.
\paragraph{Radon measures}
  Let $(X, \gB_{X}, \mu)$ be a measure space with Hausdorff topological space $X$,
  $\gB_{X}$ the $\sigma$-algebra of Borel sets and $\mu: \gB_{X} \to [0, \infty]$ any measure on $\gB_{X}$ 
  (referred to as a Borel measure).
  The measure $\mu$ is called \textbf{locally finite} if every point $x \in X$ has
  an open neighborhood $U \ni x$ for which $\mu(U) < \infty$.
  A Borel set $E \subseteq X$ ($E \in B_X$) is called \textbf{inner regular} if:
  \begin{align}
    \mu(E) = \sup\{\mu(K) \mid K \subseteq E,\text{ $K$ compact}\}
  \end{align}
  Respectively, a Borel set $E \subseteq X$ is called \textbf{outer regular} if:
  \begin{align}
    \mu(E) = \inf\{\mu(V) \mid V \supseteq E,\text{ V open}\}
  \end{align}
  The measure $\mu$ is called \textbf{inner} (\textbf{outer}) regular if all Borel
  sets are inner (outer) regular.
  It is called \textbf{regular} if it is both inner and outer regular.
  $\mu$ is a \textbf{Radon measure} if it is locally finite, inner regular on
    open sets and outer regular.
  $\mu$ is \textbf{$\sigma$-finite} if there exists a countable family of
    Borel sets $\{B_n\}_{n \in \sN}$, where $\mu(B_n) < \infty, ~\forall n \in N$ and
    $\bigcup_{n \in N} B_n = X$.
If $\mu$ is a Borel measure on a Hausdorff topological space $X$, local finiteness
will imply that $\mu$ is finite on compact subsets of $X$.
\begin{thm}\footnote{See Proposition 2.9 and Theorems 2.10 \& 2.20 of \citet{folland2016course}.}\label{thm:haar_measure_exists}
    Every locally compact group $G$ has a left (right) nonzero Radon
    measure $\mu_G$ such that $\mu_G(gB) = \mu_G(B)$
    (respectively $\mu_G(Bg) = \mu_G(B)$) for any Borel subset
    $B \subseteq G$ and any $g \in G$.
    The measure $\mu_G$ is called the left (right) Haar measure of $G$ and
    if $\nu_G$ is another Haar measure on $G$, then $\mu_G = c \cdot \nu_G$ for some 
    $c \in \sR_{>0}$.
\end{thm}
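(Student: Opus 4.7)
The plan is to prove existence of the Haar measure via Haar's classical covering-number construction and uniqueness via a Fubini-style argument. Both halves are standard but interlock in a subtle way.

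For existence, I would first fix a compact reference set $K_{0} \subseteq G$ with nonempty interior as a normalization anchor. For any compact $K \subseteq G$ and any open neighborhood $U$ of the identity $e$, define the covering number $(K : U)$ as the minimum number of left translates of $U$ needed to cover $K$; this is finite by compactness of $K$ together with local compactness of $G$. Set $m_{U}(K) = (K : U)/(K_{0} : U)$. A direct verification shows that $m_{U}$ is left-invariant, monotone, subadditive, normalized by $m_{U}(K_{0}) = 1$, and additive on pairs of compact sets that are separated by a small enough $U$-neighborhood.

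Next, observe that each $m_{U}(K)$ lies in the bounded interval $[1/(K_{0} : K),\,(K : K_{0})]$, so the family $\{m_{U}\}_{U}$, indexed by the neighborhood filter of $e$, sits inside a compact product space by Tychonoff. Extract a cluster point $m$; it inherits left-invariance and monotonicity, and approximate additivity sharpens to genuine additivity in the limit. Extend $m$ to a left-invariant positive linear functional on $C_{c}(G)$ by the standard inner-regular sup over compact supports, and apply the Riesz representation theorem to produce a left-invariant Radon measure $\mu_{G}$. The right Haar measure follows symmetrically, or by pushing $\mu_{G}$ forward under inversion $g \mapsto g^{-1}$.

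For uniqueness, suppose $\mu$ and $\nu$ are both left Haar measures. Fix $h \in C_{c}(G)$ with $\int h\,\dd\nu > 0$. For arbitrary $f \in C_{c}(G)$, consider the double integral $\int\int f(x)\,h(x^{-1}y)\,\dd\mu(x)\,\dd\nu(y)$. Evaluating it in the two possible orders, and in each order using left-invariance of the inner measure through the substitution $y \mapsto xy$, produces two expressions for the same quantity. Rearranging gives $\int f\,\dd\nu = c \int f\,\dd\mu$ with $c = \int h\,\dd\mu/\int h\,\dd\nu$ independent of $f$. Since $C_{c}(G)$ separates Radon measures, $\nu = c\,\mu$.

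The main obstacle is the convergence step of the existence proof. The $m_{U}$ are only \emph{approximately} additive, and making the Tychonoff limit simultaneously preserve additivity, left-invariance, and the correct regularity properties requires careful exploitation of compact separation in the locally compact Hausdorff setting. A secondary obstacle is promoting the limiting content from a finitely additive object on compact sets to a Radon measure in the strong sense (inner regular on opens and outer regular on all Borel sets) so that Riesz yields exactly the object claimed by the theorem and used throughout Section~\ref{sec:background}.
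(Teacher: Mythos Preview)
The paper does not give its own proof of this theorem; it simply records the statement and cites Folland's \emph{A Course in Abstract Harmonic Analysis} (Proposition 2.9, Theorems 2.10 and 2.20) for both existence and uniqueness. Your sketch is essentially the classical argument that Folland presents, so in spirit you are aligned with the cited source rather than diverging from it.

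Two small technical remarks on your write-up. First, in the Tychonoff step the lower bound $1/(K_{0}:K)$ is only meaningful when $K$ has nonempty interior; for general compact $K$ the covering number $(K_{0}:K)$ need not be finite. The upper bound $(K:K_{0})$ alone already confines each coordinate to a compact interval, which is all the compactness argument needs. Second, your uniqueness computation as written does not quite close: integrating $y$ first and using left invariance of $\nu$ gives $(\int f\,\dd\mu)(\int h\,\dd\nu)$ cleanly, but after the substitution $x\mapsto yx$ in the other order you are left with $\int h(x^{-1})\bigl[\int f(yx)\,\dd\nu(y)\bigr]\dd\mu(x)$, and $\int f(yx)\,\dd\nu(y)$ is a \emph{right} translate, which left invariance of $\nu$ does not control unless $G$ is unimodular. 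The standard fix (as in Folland's Theorem 2.20) is to work instead with the ratio $\int f\,\dd\mu / \int f\,\dd\nu$ and show it is independent of $f$ via a more carefully chosen double integral, or to bring in the modular function. The overall strategy is correct; only the bookkeeping in the second Fubini evaluation needs adjustment.
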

When integrating with respect to the left Haar measure $\mu_G$ we have 
for any $f \in C_c(G)$:
\begin{align}\label{eq:haar_invariance_int}
    \int_G f(yx) \dd{\mu_G} = \int_G f(x) \dd{\mu_G},~\forall y \in G
\end{align}
All of the groups and topological spaces in the main text will be $\sigma$-compact.
A locally compact space $X$ is $\sigma$-compact (or countable at infinity) if it 
is a countable union of compact subsets.
We will use the notation $\mu_G$ to refer to the left Haar measure and when needed
the notation $\mu_L(\cdot)$ and $\mu_R(\cdot)$ will be used to differentiate left and
right Haar measures.
\begin{remark}
  If $X$ is a homogeneous space of $G$ (but not $G$ itself), 
  then a $G$-invariant Radon measure $\dd{\mu_{X}}$ on $X$ (if it exists) respects 
  the same invariance property presented in Thm.~\ref{thm:haar_measure_exists} and (\ref{eq:haar_invariance_int}), and we simply refer to $\dd{\mu_{X}}$ as a $G$-invariant measure.
  For a review of such measures, see~\cite[Chapter 2.6]{folland2016course}.
\end{remark}
\paragraph{Function spaces}
Suppose $(X, \gB_{X}, \mu)$ is a measure space where $X$ is locally compact 
Hausdorff space and $\mu$ a Radon measure on $X$.
$L_{\mu}^p(X, \sR) \coloneqq L^p(X)$ for $1 <= p < \infty$ denotes 
the space of equivalence classes of functions $\{f: X \to \sR \mid f \text{ Borel measurable and } \int_X \lvert f \rvert^p \dd{\mu} < \infty \}$
that agree $\mu$-almost everywhere.
Equipped with the norm $\|f\|_{p} = (\int_{X} \lvert f \rvert^{p} \dd{\mu})^{1/p}$, $L^{p}(X)$ is a Banach space.
$C(X) \coloneqq C(X, \sR)$ denotes the space of continuous
real-valued functions on $X$.
The support of $f \in C(X)$ is defined as $\text{supp}(f) = \overline{\{x
\in X \mid f(x) \neq 0\}}$.
We state that a function $f$ has compact support whenever $\text{supp}(f)$ is
compact.
$C_c(X)$ denotes the subspace of $C(X)$ of continuous functions with
compact support.

\subsection{Equivariant convolutional operators}\label{subsec:appendix_equivop}
\revision{In this section we show that the convolution/cross-correlation operators (\ref{eq:conv_cross_ops}), (\ref{eq:conv_cross_ops2}) as well as the lifting cross-correlation (\ref{eq:lifting_cross_correlation}) are equivariant.
We then clarify the existence and range of these operators.}
For a non-lifting convolution/cross-correlation operator, recall that $X = Y = G$ and
$k: Y \times X \to \sR$ is a bi-invariant kernel:
\begin{align}
  k(gx, gy) = k(x, y),\quad \forall (x, y) \in Y \times X,~\forall g \in G
\end{align}
$\mu_X$ is a $G$-invariant Radon measure on $X$. 
The operator $C_{k}$ was defined for any $f \in L^{1}_{\mu_{X}}(X)$:
\begin{align}
  C_k: f \mapsto C_{k}f(y) = \int_{X} f(x) k(x, y) \dd{\mu_X(x)},\quad \forall y \in Y
\end{align}
$C_{k}: f \mapsto C_{k}f$ maps an element of $L^{1}(X)$ to $L^{1}(G)$, respecting $\gL_g \circ C_{k} = C_{k} \circ \gL_g$ for any $g \in G$:
\begin{align} 
  \gL_g(C_{k}f)(y) = C_{k}f(g^{-1}y) &= \int_{X} f(x) k(x, g^{-1}y) \dd{\mu_X(x)} \\
    (\text{Haar invariance})~&= \int_{X} f(g^{-1}x) k(g^{-1}x, g^{-1}y) \dd{\mu_X(x)} \\
  (\text{Kernel bi-invariance})&= \int_{X} f(g^{-1}x) k(x, y) \dd{\mu_X(x)} = C_k(\gL_g[f])(y)
\end{align}

A lifting cross-correlation operator $C^{\uparrow}_{k}$ was defined for $f \in L^{1}(X)$ by:
\begin{align}
  C^{\uparrow}_{k}: f \mapsto C^{\uparrow}_{k}f, \quad C^{\uparrow}_{k}f: g \mapsto \int_{X} f(x) k(g^{-1}x) \delta(g^{-1}) \dd{\mu_X(x)},~\forall g \in G
\end{align}
where $X$ is a homogeneous $G$-space with Radon measure $\dd{\mu_X}$ and 
$\delta: G \to \sR^{\times}_{>0}$ records the change of variables produced by the action of $G$:
\begin{align}
  \int_{X} f(x) \dd{\mu_{X}(x)} = \int_{X} f(gx) \delta(g) \dd{\mu_{X}(x)}
\end{align}
\revision{In the case where $X = \sR^{n}$ and $Y = \gaff{G}$, for $\gaff{G}$ the semi-direct 
product ${\gaff{G} = \sR^n \rtimes G}$, $G \leq \gglfull{n}$,
we have $\gaff{G}$ acting transitively on $X$, and we can 
make the identification ${X \cong \gaff{G} / G}$.
Each element $g \in \gaff{G}$ can be represented as 
$g = (x, h)$ with $x \in \sR^{n}, h \in G$.}
In this case, we have $\delta(g) = \lvert \det(g) \rvert = \lvert \det(h) \rvert$.
Note that the form of the kernel can equivalently be derived 
through an equivariance constraint relation as in~\citet[Theorem 1]{bekkers2019b}.
The lifting layer is then more concretely of 
the form $C^{\uparrow}_{k}f = f \star k$ where:
\begin{align}
  (f \star k)(g) = \int_{\sR^n} f(x) k(g^{-1} x) \frac{\dd{x}}{\lvert \det(h) \rvert}
\end{align}
The lifting layer is equivariant 
($\gL_{\tilde{g}}[f \star k] = [\gL_{\tilde{g}}f] \star k$), since for any 
$\tilde{g} = (\tilde{x}, \tilde{h}) \in \gaff{G}$ we have:
\begin{align}
  \gL_{\tilde{g}}[f \star k](g) = (f \star k)(\tilde{g}^{-1}g) &= \int_{\sR^{n}} f(x) k(g^{-1}\tilde{g}x)\frac{\dd{x}}{\lvert \det(\tilde{h}^{-1}h) \rvert} \\
  (x \mapsto\tilde{g}^{-1}x)&=\int_{\sR^{n}} f(\tilde{g}^{-1}x)k(g^{-1}x)\lvert \det(\tilde{h}^{-1}) \rvert \frac{\dd{x}}{\lvert \det(\tilde{h}^{-1}h) \rvert} \\
&= \int_{\sR^{n}} \gL_{\tilde{g}}f(x) k(g^{-1}x) \frac{\dd{x}}{\lvert \det(h) \rvert}= ([\gL_{\tilde{g}}f] \star k)(g)
\end{align}
A similar derivation appears in~\citet[Theorem 4.2]{macdonald2022enabling}.
\revision{The lifting layer equivariance here was derived for the case where $X$ is the 
homogeneous space $\sR^{n} = \gaff{G} / G$, however a similar process is available
for more general homogeneous spaces. One only has to identify 
the appropriate \emph{relatively} invariant measure $\delta(\cdot)^{-1}\dd{\mu_{X}}$ which appears in (\ref{eq:lifting_cross_correlation}). For more details on relatively invariant measures see~\citet{hewitt2012abstract}.}

\revision{
\subsubsection{Existence and range of convolution operators}\label{subsubsec:conv_range_appendix}
}
If $G$ is a locally compact Hausdorff group, $C_c(G)$ is dense 
in $L^{p}(G)$ for $1 \leq p < \infty$\footnote{\cite[Proposition 7.9]{folland1999real}.}.
We can therefore approximate functions in $L^{p}(G)$ using functions in $C_c(G)$.
While some results hold in a more general setting, we assume
that all topological groups are (locally) compact Hausdorff and second countable 
(and therefore $\sigma$-compact), 
as the Lie groups of interest satisfy these properties.
\begin{prop}\footnote{\cite[Propositions 2.40 \& 2.41]{folland2016course}.}\label{prop:prop_conv}
We record the following results concerning the existence and range of 
the convolution operators for a locally compact group $G$.
  \begin{enumerate}
  \item If $f \in L^{1}(G)$ and for any $k \in L^{p}(G)$ ($1 <= p <= \infty$) then 
    $\int_G f(\tilde{g})k(\tilde{g}^{-1}g)\dd{\mu_G(\tilde{g})}$ converges 
      absolutely, $f * k \in L^{p}(G)$ and $\|f * k\|_p \leq\|f\|_1\|k\|_p$.
    \item If $G$ is not unimodular, $f \in L^{p}(G)$ and 
      $k \in L^{1}(G) \cap C_c(G)$ then $f * k \in L^{p}(G)$.
    \item If $f, k \in L^{2}(G)$ then $f * k \in C_0(G)$,
      see also~\cite[Theorem 20.16]{hewitt2012abstract}.
  \end{enumerate}
\end{prop}
For the cross-correlation operator, if we define the involution 
$f^{*}(g) = f(g^{-1})$, we can write $f \star k$ as $f * k^{*}$, and reuse
the results of Proposition~\ref{prop:prop_conv}.
In the case of a nonunimodular semi-direct product group $G = N \rtimes H$, where
$H$ \emph{is} unimodular, the Haar measure is of 
the form $\mu_{G} = \Delta_{G}\cdot\mu_{N} \otimes \mu_{H}$\footnote{\revision{Corresponds to Thm.~\ref{thm:decompose_haar_abstract}(2) in integral form. A more precise reference is~\citet[Corollary 7.6.3]{wijsman1990invariant}.}}.
Since $\Delta_G: G \to (0, \infty)$ is unbounded, we therefore always make the 
assumption that the support of $fk$ is a 
compact set where $\Delta_G$ is bounded.
\begin{assumption}\label{assum:compat_support}
We assume $f, k \in L^{1}(G) \cap L^{2}(G)$ and additionally
$k \in C_c(G)$.
When working with image data, one can also take $f \in C_c(G)$ directly.
Going forward we therefore establish for $p \in \{1, 2\}$, 
$C^{\uparrow}_{k}: L^{p}(X) \to L^{p}(G)$ defined 
by (\ref{eq:lifting_cross_correlation}) and $C^{*}_{k}$ or $C_{k}$ are 
similarly operators $L^{p}(G) \to L^{p}(G)$ given by (\ref{eq:conv_cross_ops}) 
and (\ref{eq:conv_cross_ops2}).
\end{assumption}

The restriction to a compact \emph{subset} of $G$ can also be motivated if we
wish to employ a Monte Carlo approximation of the integral using a uniform 
distribution, since the Haar measure is finite on compact subsets.
However, the restriction will not tied to the injectivity/surjectivity radius of the
group exponential map, as we will use a different parametrization 
as shown in the main text.

\revision{\subsection{Related work}\label{subsec:appendix_related}}
The theory behind constructing equivariant convolutional layers when our input space is a homogeneous space of some locally compact topological group is covered in~\citet{cohen2016group,kondor2018generalization,cohen2019general, bekkers2019b}.
A differential geometric formulation not necessarily limited to homogeneous spaces
is given in~\citet{weiler2021coordinate, weiler2023EquivariantAndCoordinateIndependentCNNs} and a review focused on 
the application of induced representations in the context of
neural networks can be found in~\citet{kondor2018generalization, cohen2019general}.

Employing Monte Carlo integration to approximate convolution integrals has also been proposed e.g. in~\citet{hermosilla2018monte,finzi2020generalizing,romero2022ckconv,knigge2022exploiting}.
In~\cite{hutchinson2021lietransformer} the Lie algebra parametrization is employed with convolution operators being replaced with self-attention layers.
Steerable CNNs make use of group representation theory to parametrise convolution kernels by solving a kernel steerability constraint~\citet{cohen2017steerable, weiler2019general, lang2021wigner, cesa2021ENsteerable}.

In the context of finite-dimensional group representation~\citet{finzi2021practical} present a general solution for constructing equivariant MLPs. They present a framework for solving the equivariance constraint by making use of the generators of the Lie algebra of a group. The resulting linear system is solved for finite dimensional representations by using the singular value decomposition. More general solutions are developed in~\citet{bogatskiy2020lorentz,batatia2023general} for a wider class of Lie groups and representations.

\paragraph{\cite{macdonald2022enabling}}
The limitations of previous Lie algebra methods (as reviewed in Sec.~\ref{sec:contribution}) are also discussed in~\cite{macdonald2022enabling}, which proposes 
a possible solution while still working with the group exponential.
To overcome its lack of surjectivity and be able to sample with respect 
to the Haar measure of $\gaff{\gglfull{n}} = \sR^{n} \rtimes \gglfull{n}$,
the domain of integration itself is restricted and the convolution integral 
is reformulated to ensure that the group elements $\tilde{g}^{-1}g$ (using the convolution operator notation)
are within the injectivity radius of the exponential map.
This is done by first changing the convolution operator after the lifting layer
to work with the right Haar measure\footnote{See~\cite[Theorem 4.3]{macdonald2022enabling} or~\cite[(2.32) \& (2.36)]{folland2016course}. 
Here, we used the convolution operator to be 
consistent with the derivation of~\cite{macdonald2022enabling}.
}:
\begin{align}\label{eq:right_haar_cross}
  (f * k)(g) = \int_{G} f(\tilde{g})k(\tilde{g}^{-1}g)\dd{\mu_{L}(\tilde{g})} = \int_{G} f(g\tilde{g}^{-1}) k(\tilde{g}) \dd{\mu_{R}(\tilde{g})}
\end{align}
For non-lifting layers instead of the kernel $k(\cdot)$, 
the feature map $f(\cdot)$ is now evaluated at $f(g\tilde{g}^{-1})$. 
\begin{prop}\footnote{\cite[Theorem 1.14]{helgason1984groups},
  ~\cite[Theorem 4.4]{macdonald2022enabling}.}\label{prop:haar_canonical_coordinates}
  Let $G$ be a Lie group with Lie algebra $\mathfrak{g}$ and suppose 
  $U \subseteq \mathfrak{g}$ is a neighborhood of $0 \in \mathfrak{g}$ and $\explie(U)$
  a neighborhood in $e \in G$ such that the group exponential $\explie: \mathfrak{g} \to G$
  is a diffeomorphism of $U$ onto $\explie(U)$.
  For $f \in C_c(G)$ with support in $\explie(U)$ we have:
\begin{align}
    \label{eq:group_exp_cov}
    \int_{G} f(g) \dd \mu_G(g) = \int_{\mathfrak{g}} f(\explie(X))\groupdexp{X} \dd{X}
\end{align}
where $\groupdexp{X}$ is the Jacobian determinant of the differential of $\explie$
and $\dd{X}$ is the Lebesgue measure on $\mathfrak{g}$.
\end{prop}
The change of variables of Proposition~\ref{prop:haar_canonical_coordinates} and 
the expression (\ref{eq:right_haar_cross}) allow~\cite{macdonald2022enabling} 
to define the non-lifting convolutional layers to be of the form:
\begin{align}\label{eq:macdonald_convolution}
  (f * k)(g) = \int_{G} f(g\tilde{g}^{-1}) k(\tilde{g}) \dd{\mu_{R}(\tilde{g})} = \int_{\mathfrak{g}} f(ge^{-X})\tilde{k}_{\theta}(X)\groupdexp{X}\dd{X}
\end{align}
with $\tilde{k}_{\theta}$ again a learnable map that takes in Lie algebra elements
approximating $k \circ \explie: \mathfrak{g} \to \sR$.

The difference here lies in the fact that one does not need to use the inverse 
map $\xi^{-1}$ to map back to $\mathfrak{g}$.
By treating $\groupdexp{X}$ as (proportional to) a density function, sampling 
is realised directly on the Lie algebra using standard MCMC methods.
One starts the sampling process from the outer-most convolution integral and 
rejects samples that lie outside of the support of the exponential map.
While the approach can be applied to any matrix Lie group, 
in practice because every possible $\tilde{g}^{-1}g$ element must be 
precalculated and kept in memory before the forward pass,
the scalability of the method is greatly limited due to its memory requirements.
Further numerical errors are introduced due to the fact that $\groupdexp{X}$ 
is approximated with limited precision using the power series expression $\groupexpnod{X} = \sum_{k=0}^{\infty}\frac{(-1)^{k}}{(k + 1)!}(\ad_X)^{k}$.
Additionally, while a discretization of the integral is always necessary,
this approach is limited in that the domain of integration must still be restricted
to the injectivity/surjectivity radius of the group exponential for the change of variables 
to apply.
In this paper, we also employ a change of variables in the context of invariant integration with respect to the Haar measure.
However, rather than working with the tangent space of the group as in Prop.~\ref{prop:haar_canonical_coordinates}, 
we make use of group-level decompositions into independent factor spaces, and show that the Haar measure decomposes as a product of invariant measures, allowing us to construct a sampling scheme on the lower-dimensional subcomponents, as explained in Sections~\ref{subsec:liegrdecomp_theory} and~\ref{subsec:appendix_polar_haar}.

\section{Lie group decompositions}\subsection{Lie groups}\label{subsec:appendix_liegroups}
A \textbf{Lie group} $G$ is a group as well as a smooth manifold, such that both 
the group operation and the inversion map are smooth.
Lie groups are therefore second countable Hausdorff topological spaces.
An abelian Lie group $G$ is a Lie group and an abelian group, i.e. a group for which 
the order of the group operation does not matter $gh = hg,~\forall g,h \in G$.
$\gmatrix{nn} \coloneqq \gmatrix{n}$ denotes the vector 
space of $n \times n$ matrices. 
It is canonically isomorphic to $\sR^{n^2}$, which is locally compact.

Closed or open subsets of $\gmatrix{n}$ will be locally compact 
with respect to the induced topology\footnote{\cite[Proposition 4.66]{lee2010introduction}.}. 
One such open subset is $\gglfull{n}$, the Lie group of invertible matrices:
\begin{align}
    \gglfull{n} = \{X \in \gmatrix{n} \mid \det(X) \neq 0\}
\end{align}
The notation $H \leq G$ ($H < G$) is used 
  to indicate that $H$ is a (proper) subgroup of $G$, 
rather than just a (proper) subset $H \subseteq G$ ($H \subset G$).
We are only interested in closed Lie subgroups of $\gglfull{n}$.

A (closed) \emph{Lie subgroup}\footnote{
An abstract subgroup $H$ of $G$ is a submanifold iff $H$ is closed 
with the induced topology (\cite[Theorem 19.18]{gallier2020differential}).
} $H$ of a Lie group $G$ will refer to 
a closed subgroup and a submanifold of $G$ (with the induced topology).
A \emph{linear} or \textbf{matrix Lie group} is defined to be a Lie subgroup
of $\gglfull{n}$, and will therefore be locally compact and second countable.
$\gglfull{n}$, the translation group $(\sR^{n}, +)$ and the family of 
affine groups $\sR^{n} \rtimes H$, 
$H \leq \gglfull{n}$ are our primary interest, with $H$ being one of the groups:
\begin{itemize}
  \item $\ggl{n} = \{X \in \gglfull{n} \mid \det(X) > 0\}$, the identity component 
    of $\gglfull{n}$;
    It it also referred to as the \emph{positive} general linear group;
  \item $\gsl{n} = \{X \in \gglfull{n} \mid \det(X) = 1\}$, the special linear group;
  \item $\go{n} = \{X \in \gglfull{n} \mid X^{T}X = \giden{n}\}$, the orthogonal group;
  \item $\gso{n} = \{X \in \go{n} \mid \det(X) = 1\}$, the special orthogonal group.
\end{itemize}

\begin{prop}\footnote{\cite[Proposition 20.8]{lee2013smooth}.}\label{prop:lie_exp_prop}
  The group exponential map $\explie: \mathfrak{g} \to G$ is smooth with 
  $d(\explie)_{0} = \id$, making $\explie$ a diffeomorphism $\explie|_{U}: U \to V$ 
  of some neighborhood $U$ of $0 \in \mathfrak{g}$ onto a neighborhood $V$ of $e \in G$.
\end{prop}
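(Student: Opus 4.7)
The plan is to define $\explie$ via one-parameter subgroups (or, equivalently, integral curves of left-invariant vector fields), verify smoothness from ODE theory, compute its differential at the origin, and then invoke the inverse function theorem.

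First I would recall the construction: for each $X \in \mathfrak{g}$, let $X^{L}$ be the unique left-invariant vector field on $G$ with $X^{L}_{e} = X$, and let $\gamma_{X}: \mathbb{R} \to G$ be its integral curve with $\gamma_{X}(0) = e$. Completeness of left-invariant vector fields (together with the one-parameter subgroup property $\gamma_{X}(s+t) = \gamma_{X}(s)\gamma_{X}(t)$) means $\gamma_{X}$ is defined on all of $\mathbb{R}$, and we set $\explie(X) \coloneqq \gamma_{X}(1)$. Smoothness of $\explie$ then follows from smooth dependence of the flow of a vector field on parameters: one packages the collection $\{X^{L}\}_{X \in \mathfrak{g}}$ into a single smooth vector field on the product manifold $G \times \mathfrak{g}$ (tangent to the $G$-factor) and applies the smoothness of its flow. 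Evaluating at time $1$ and at $(e, X)$ yields $\explie(X)$, giving smoothness in $X$.

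Next I would compute $d(\explie)_{0}$. For any $X \in \mathfrak{g}$, the curve $t \mapsto \explie(tX)$ is an integral curve of $X^{L}$ starting at $e$, so by the chain rule
\begin{align}
d(\explie)_{0}(X) = \left.\frac{d}{dt}\right|_{t=0} \explie(tX) = \left.\frac{d}{dt}\right|_{t=0} \gamma_{X}(t) = X^{L}_{e} = X,
\end{align}
where the first equality uses the rescaling identity $\explie(tX) = \gamma_{X}(t)$ coming from the one-parameter-subgroup property. Hence $d(\explie)_{0} = \id_{\mathfrak{g}}$ under the canonical identification $T_{0}\mathfrak{g} \cong \mathfrak{g}$.

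Finally, since $\mathfrak{g} = T_{e}G$ and $G$ have the same (finite) dimension and $d(\explie)_{0}$ is the identity and in particular an isomorphism, the inverse function theorem yields open neighborhoods $U \ni 0$ in $\mathfrak{g}$ and $V \ni e$ in $G$ such that $\explie|_{U}: U \to V$ is a diffeomorphism. The only genuinely subtle point is the smooth dependence step, where one must assemble the left-invariant vector fields into a joint smooth object over $G \times \mathfrak{g}$ before invoking the flow theorem; once that is in place, the differential computation and the inverse function theorem are routine.
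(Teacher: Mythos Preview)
Your argument is correct and is the standard proof of this classical fact. Note, however, that the paper does not supply its own proof: the proposition is stated with a footnote citation to \cite[Proposition 20.8]{lee2013smooth} and left at that. Your outline is essentially the proof Lee gives there, so there is nothing to compare against beyond the cited reference.
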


The notation $\explie: \mathfrak{g} \to G$ for the Lie group exponential is used 
to differentiate it from the Riemannian exponential, which is mentioned later on.
Unless the group $G$ can be equipped with a bi-invariant Riemannian metric, 
the exponentials do not coincide (see Proposition~\ref{prop:bi_metric_exp_properties}).

$\gmatrix{n}$ equipped with the matrix commutator 
$[X, Y] = XY - YX$ for $X, Y \in \gmatrix{n}$ is a Lie algebra, and more precisely 
it is (canonically isomorphic to) the Lie algebra of 
$\gglfull{n}$\footnote{\cite[Example 8.36 \& Proposition 8.41]{lee2013smooth}.}.
We use the notation $\lggl{n} = \gmatrix{n}$ when working with 
this identification.
For $G = \gglfull{n}$ with $\mathfrak{g} = \lggl{n}$, the 
group exponential is given by the matrix exponential:
\begin{align}
  \explie: \lggl{n} \to \gglfull{n},\quad X \mapsto e^{X} = \sum_{k=0}^{\infty} \frac{1}{k!} X^{k}
\end{align}
From Prop.~\ref{prop:lie_exp_prop} we can define the 
inverse of the group exponential $(\explie|_{U})^{-1}: V \to U$ which is a diffeomorphism 
of $V$ onto $U$.
For matrix Lie groups this map is the matrix logarithm which we denote by 
$\logm(\cdot)$.
Its power series expression is:
\begin{align}
  \logm(A) = \sum_{i=1}^{\infty} \frac{(-1)^{k+1}}{k}(A - I)^{k}, \quad A \in \gglfull{n}
\end{align}

The existence of the inverse of the matrix exponential is characterized as follows. 
\begin{prop}\footnote{\cite[Theorem 2.2.1]{faraut2008analysis}.}\label{prop:matrix_log_existence}
  Let $B(I, 1) = \{X \in \gmatrix{n} \mid \|X - I\| < 1\}$
  where $\|\cdot\|$ is a norm on $\gmatrix{n}$ (e.g. Frobenius norm) and $I$ 
  the identity matrix.
  Note that $B(I, 1) \subseteq \gglfull{n}$. Then for every $g \in B(I, 1)$ we 
  have $\explie(\logm(g)) = g$ and for every $X \in B(0, \log(2))$, 
  we have $\logm(\explie(X)) = X$.
\end{prop}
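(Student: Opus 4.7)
The plan is to reduce both identities to the well-known scalar analytic statements
$\exp(\log(1+z)) = 1+z$ for $|z|<1$ and $\log(\exp(w)) = w$ for $|w|<\log 2$, and then lift them to $\gmatrix{n}$ via absolutely convergent power series. Throughout I will take $\|\cdot\|$ to be a sub-multiplicative matrix norm (any such norm works, and all norms on the finite-dimensional space $\gmatrix{n}$ are equivalent, so the disk sizes can be adjusted accordingly).

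First I would verify the three preliminary facts. \textbf{(a)} $B(I,1)\subseteq\gglfull{n}$: for $\|g-I\|<1$ the Neumann series $\sum_{k=0}^\infty (I-g)^k$ converges absolutely and yields an inverse of $g$. \textbf{(b)} The series $\logm(g)=\sum_{k=1}^\infty \frac{(-1)^{k+1}}{k}(g-I)^k$ converges absolutely on $B(I,1)$ since $\sum_k \|g-I\|^k/k<\infty$, while $\explie(X)=\sum_k X^k/k!$ converges absolutely for every $X\in\gmatrix{n}$. \textbf{(c)} For $X\in B(0,\log 2)$ one has
\begin{equation}
\|\explie(X)-I\| \;\le\; \sum_{k=1}^{\infty}\frac{\|X\|^k}{k!}\;=\;e^{\|X\|}-1\;<\;e^{\log 2}-1\;=\;1,
\end{equation}
so $\logm(\explie(X))$ is well defined. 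Dually, a similar estimate shows $\explie(\logm(g))$ is well defined for $g\in B(I,1)$.

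Next I would carry out the key step of proving the two identities. For each fixed matrix in the relevant disk, I substitute into the double series and interchange the order of summation; this is justified by absolute convergence together with sub-multiplicativity of $\|\cdot\|$, which bounds $\|X^kX^\ell\|\le\|X\|^{k+\ell}$ and reduces the rearrangement to the scalar case. After reindexing, the matrix power series collapse exactly as in the scalar identities, because the coefficient of each monomial $X^n$ (respectively $(g-I)^n$) is a \emph{universal} polynomial identity in one variable that is known to equal $0$ or $1$ in the appropriate sense. Equivalently, one can observe that on the dense subset of diagonalizable matrices $X=P D P^{-1}$ one has $\explie(X)=P\,\explie(D)\,P^{-1}$ and $\logm$ behaves likewise, so the matrix identity reduces to the scalar identity applied to each eigenvalue (which sits in the scalar disk of convergence); continuity of $\explie$ and $\logm$ on their open domains, together with density of diagonalizable matrices in $\gmatrix{n}$, then extends the identities to all of $B(0,\log 2)$ and $B(I,1)$.

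The main obstacle is the rigorous justification of ``plugging matrices into a scalar power series identity'': the direct route requires Fubini-type interchange of two infinite matrix sums, which is delicate without sub-multiplicativity, while the diagonalization route requires a continuity/density argument because not every matrix in $B(0,\log 2)$ is diagonalizable over $\mathbb{R}$. I would favour the second route, extending to complex diagonalizable matrices (which are dense in $\mathrm{M}_n(\mathbb{C})$) and then restricting back to $\gmatrix{n}\subset\mathrm{M}_n(\mathbb{C})$, since this avoids any combinatorial bookkeeping on the series coefficients.
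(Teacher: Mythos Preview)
The paper does not supply its own proof of this proposition; it merely cites \cite[Theorem 2.2.1]{faraut2008analysis}. Your proposal is correct and is essentially the standard argument found in that reference: Faraut's proof proceeds exactly along your first route, reducing the matrix identities to the scalar identities $\exp(\log(1+z))=1+z$ and $\log(\exp w)=w$ on the appropriate disks and then invoking absolute convergence (with a sub-multiplicative norm) to justify substituting a matrix into a one-variable power-series identity. Your preliminary estimates (a)--(c) are also the ones Faraut establishes.

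Your alternative diagonalization-plus-density route is a genuinely different argument and is also valid; it trades the Fubini/rearrangement bookkeeping for a continuity argument over $\mathrm{M}_n(\mathbb{C})$. One small point worth tightening there: when you pass to a diagonalizable approximant $X'=PDP^{-1}$ of $X\in B(0,\log 2)$, you need the eigenvalues of $X'$ to lie in the scalar disk $|w|<\log 2$. This follows because the spectral radius is bounded by any sub-multiplicative norm and $B(0,\log 2)$ is open, so you can take $X'$ close enough that $\|X'\|<\log 2$ as well; the analogous remark applies to approximants of $g\in B(I,1)$. With that caveat made explicit, either route closes the argument.
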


Recall from Section~\ref{sec:contribution} that if the parametrization map $\xi: \mathfrak{g} \to G$ is chosen to be the group exponential $\xi = \explie$, then $\xi^{-1}$ is given by the matrix logarithm:
\begin{align}
    \xi^{-1}(g^{-1}\tilde{g}) = \logm(g^{-1}\tilde{g}),\quad \logm: G \to \mathfrak{g}
\end{align}
For the following, see~\citet[Chapter 5]{hall2015lie}.
Assuming there exist $X$ and $Y$ such that $e^{X} = g^{-1}$ and
$e^{Y} = \tilde{g}$, (\ref{eq:inverse_psi_lieconv})
can be rewritten as $\logm(e^{X}e^{Y})$.
The Baker-Campbell-Hausdorff (BCH) formula states 
that there exists a sufficiently small open subset
$0 \in U \subset \mathfrak{g}$ so that
$e^{X}e^{Y} \in e^{U}$ and one has:
\begin{align}\label{eq:BCH_approx}
  \logm(e^{X}e^{Y})=X+Y+\frac{1}{2}[X, Y]+\frac{1}{12}[X,[X, Y]]-\frac{1}{12}[Y,[X, Y]]+\ldots
\end{align}
For abelian Lie groups this reduces to:
\begin{align}
    \label{eq:BCH_kernel}
    \logm(e^{X}e^{Y}) = X + Y
\end{align}

For $V$ a finite-dimensional vector space over $\sR$, we denote by $\genlin{V}$ the
group of invertible linear maps of $V$ and by 
$\lggl{V} = \textnormal{End}(V)$ the space of linear maps $V \to V$.
$\genlin{V}$ admits a Lie group structure as it is isomorphic to $\gglfull{n}$ once
a basis is chosen.
The space $\lggl{V}$ can be made into a Lie algebra under 
the commutator bracket and it is isomorphic to $\gmatrix{n} = \lggl{n}$.

\begin{defn}
  Let $V$ be a vector space.
  A (finite-dimensional real) representation of a Lie group $G$ is a 
  Lie group homomorphism $\rho: G \to \genlin{V}$.
  For $\mathfrak{g}$ a (real) Lie algebra, a representation of $\mathfrak{g}$ is a 
  Lie algebra homomorphism $\phi: \mathfrak{g} \to \lggl{V}$.
\end{defn}
The conjugation (inner automorphism) map $C_g: G \to G$ 
is defined such that $C_g = L_g \circ R_{g^{-1}}$:
\begin{align}
    C_g: G \to G,\quad C_g: h \mapsto ghg^{-1}
\end{align}
The \textbf{adjoint representation} $G$, is given 
by the homomorphism\footnote{\cite[Proposition 20.24]{lee2013smooth}.}
\begin{align}
  \Ad: G \to \genlin{\mathfrak{g}},~g \mapsto \Ad_g
\end{align}
where $\Ad_{g}: \mathfrak{g} \to \mathfrak{g},~\Ad_{g} = d(C_g)_e = (dL_g)_{g^{-1}} \circ (dR_{g^{-1}})_e$.
The differential of $\Ad$ is used to define the \textbf{adjoint representation}
of $\mathfrak{g}$, denoted by $\ad$:
\begin{align}
\ad: \mathfrak{g} \to \mathfrak{gl}(\mathfrak{g}),~\ad_X = d(\Ad)_e(x)
\end{align}

\subsection{Primer on Riemannian Geometry}
For more details on smooth and Riemannian manifolds see~\citet{lee2013smooth}.
\paragraph{Riemannian manifolds}
A Riemannian metric (tensor) $g$ on a smooth manifold $M$ is a covariant 2-tensor field smoothly assigning to each $p \in M$, an inner product $g_p: T_p M \times T_p M \to \sR$ at its tangent space $T_p M$:
\begin{align}
p \mapsto g_p(\cdot, \cdot) = \langle \cdot, \cdot \rangle_p
\end{align}
A smooth manifold $M$ with a Riemannian metric $g$ is a \emph{Riemannian manifold} $(M, g)$.
\begin{defn}\label{def:riemannian_isometry}
Let $(M, g)$ and $(N, h)$ be Riemannian manifolds.
A map $\phi: M \to N$ is an \textbf{isometry} if $\phi$ is a
diffeomorphism and $g = \phi^{\ast}h$.
Equivalently, $\phi$ is bijective, smooth and $\forall p \in M$,
$d\phi_p: T_p M \to T_{\phi(p)} N$ is a linear isometry:
\begin{align}
    g_p(u, v) = h_{\phi(p)}(d\phi_p(u), d\phi_p(v)),~\forall u, v \in T_p M
\end{align}
\end{defn}
An \emph{affine connection} is a bilinear map $\nabla$ that 
maps a pair of vector fields $X, Y$ to another vector field $\nabla_{X}Y$, which is the 
covariant derivative of $Y$ with respect to $X$.
The affine connection allows us to define the notion of a parallel vector field.
If $M$ is a smooth manifold and $\nabla$ a connection on $M$, 
then for $\gamma: I \to M$ a smooth curve, any vector field $X$ along $\gamma$ 
is called parallel if $\nabla_{\dot{\gamma}(t)}X = 0$ for any $t \in I$, where 
$\dot{\gamma}\left(t_0\right) \coloneqq d \gamma_{t_0}\left(\left.\frac{d}{d t}\right|_{t_0}\right)$.
A smooth curve $\gamma: I \to M$ is a \emph{geodesic} (with respect to $\nabla$) 
iff $\dot{\gamma}(t)$ is parallel along $\gamma$, 
that is $\nabla_{\dot{\gamma}(t)}\dot{\gamma}(t) = 0,~\forall t \in I$.
For every point $p \in M$ and every tangent vector $v \in T_p M$ there
exists some interval ${I = (-\eta, \eta)}$, ${\eta > 0}$ around $0$ and a
unique geodesic $\gamma: I \to M$ satisfying:
\begin{align}\label{eq:geodesic_initial_conditions}
    \gamma(0) = p,~\text{and}~\gamma^{\prime}(0) = v
\end{align}
There exists a unique geodesic $\gamma$ satisfying
these conditions and for which the domain $I$ cannot be extended.
In this case, $\gamma$ is the unique \textbf{maximal geodesic}
satisfying the initial conditions (\ref{eq:geodesic_initial_conditions}).
We denote it by $\gamma_{p, v}$, and say that $\gamma_{p, v}$ is 
a geodesic through $p$ with initial velocity $v$.
\begin{defn}[Exponential map of connection]\label{def:exponential_of_connection}
    Let $M$ be a manifold and $\nabla$ a connection on $M$.
    Define for a point $p \in M$ the set $D(p) = \{v \in T_p M \mid \gamma_{p, v}(1)~\text{defined}\}$.

    The exponential map $\exp_p: D(p) \to M$ is given by:
    \begin{align}
        \exp_p: v \mapsto \gamma_{p, v}(1)
    \end{align}
\end{defn}
\begin{prop}\footnote{\cite[Propositions 16.4 \& 16.5]{gallier2020differential}.}\label{prop:riem_exp_props}
The differential of the exponential map
$d(\exp_p)$ at $0$ is the identity on
$T_p M$.
For every $p \in M$, the exponential $\exp_p$ is a diffeomorphism
from an open subset\\ $U \subseteq T_p M$ centered at $0$ such that
$\exp_p(U) \subseteq M$ is open.
\end{prop}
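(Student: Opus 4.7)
The plan is to prove the two assertions in turn, with the second following directly from the first via the inverse function theorem. First I would identify $T_0(T_p M)$ canonically with $T_p M$ itself, as is standard for vector spaces: a tangent vector at $0 \in T_p M$ is represented by a curve $t \mapsto tv$ in $T_pM$, and is identified with $v$.

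Next I would compute $d(\exp_p)_0(v)$ directly from Definition~\ref{def:exponential_of_connection}. Using the curve $t \mapsto tv$ as a representative of $v \in T_0(T_p M) \cong T_p M$, we have
\begin{align}
d(\exp_p)_0(v) = \left.\frac{d}{dt}\right|_{t=0} \exp_p(tv) = \left.\frac{d}{dt}\right|_{t=0} \gamma_{p, tv}(1).
\end{align}
The key ingredient here is the \emph{rescaling property} of geodesics: for any $s \in \sR$ for which it is defined, $\gamma_{p, sv}(t) = \gamma_{p, v}(st)$. This follows by a uniqueness argument applied to the geodesic ODE $\nabla_{\dot\gamma}\dot\gamma = 0$, since the reparametrized curve $t \mapsto \gamma_{p,v}(st)$ is itself a geodesic (the geodesic equation is invariant under affine reparametrizations of this form) with initial conditions $\gamma(0) = p$ and velocity $sv$, matching those of $\gamma_{p, sv}$. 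Substituting this back gives
\begin{align}
d(\exp_p)_0(v) = \left.\frac{d}{dt}\right|_{t=0} \gamma_{p, v}(t) = \dot\gamma_{p,v}(0) = v,
\end{align}
which establishes that $d(\exp_p)_0 = \id_{T_p M}$.

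For the second assertion, since $d(\exp_p)_0$ is the identity map, it is in particular a linear isomorphism $T_p M \to T_p M$. The exponential map is smooth on the open subset $D(p) \subseteq T_p M$ (this smoothness follows from smooth dependence of solutions of the geodesic ODE on initial conditions, and $0 \in D(p)$ since the constant geodesic at $p$ is defined on all of $\sR$). The inverse function theorem applied to $\exp_p$ at $0$ then yields an open neighborhood $U \subseteq T_p M$ of $0$ such that $\exp_p|_U : U \to \exp_p(U) \subseteq M$ is a diffeomorphism onto an open subset of $M$.

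The main obstacle is the rescaling property of geodesics, which is the only nontrivial ingredient; once that is in hand, the computation of the differential is a one-line consequence of the chain rule, and the diffeomorphism statement is a direct application of the inverse function theorem. The smoothness of $\exp_p$ on $D(p)$ is a standard ODE-theoretic fact that I would cite rather than prove in detail.
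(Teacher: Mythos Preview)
Your proof is correct and follows the standard textbook approach: establish the rescaling identity $\gamma_{p,sv}(t) = \gamma_{p,v}(st)$ via uniqueness for the geodesic ODE, use it to compute $d(\exp_p)_0 = \id_{T_pM}$, then invoke the inverse function theorem. This is precisely the argument one finds in the cited reference.

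Note, however, that the paper does not supply its own proof of this proposition: it is stated as background with a citation to \cite[Propositions 16.4 \& 16.5]{gallier2020differential} and left unproved. So there is nothing in the paper to compare against beyond that citation; your argument is exactly what the cited source gives, and is the correct way to fill in the proof if one were to expand it.
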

It is therefore possible to build a local chart ($\exp_p(U)$, $\exp^{-1}_p$) 
around every point $p \in M$ using the inverse of the exponential map $\exp^{-1}_p$.
The Levi-Civita connection is the unique affine connection
that is metric-compatible and torsion free.
\begin{defn}\footnote{\cite[Definitions 16.8-16.10]{gallier2020differential}.}\label{def:riemannian_exp_log}
  The exponential map of the Levi-Civita connection will be called
  the \textbf{Riemannian exponential}.
  For any $p \in M$, a \textbf{normal neighborhood} of $p$ is an open neighborhood
  $U_p = \exp_p(B(0, \epsilon))$ where $\exp_p$ is a diffeomorphism from the open ball
  $B(0, \epsilon) \subseteq T_{p}M$ onto $U_p$.
  The \textbf{injectivity radius} $\textnormal{inj}_{M}(p)$ at $p$ is the least
  upper bound value $\epsilon > 0$ such that $\exp_p$ is a diffeomorphism 
  on $B(0, \epsilon)$.
  The chart ($\exp_p(B(0, \textnormal{inj}_M(p)))$, $\exp^{-1}_p$) is called a 
  \textbf{normal chart} and the inverse of the exponential $\exp^{-1}_p$ is 
  the \textbf{Riemannian logarithm}.
\end{defn}

\subsubsection{Lie groups as Riemannian manifolds}\label{subsec:lie_group_riemannian}

A Riemannian metric on a Lie group $G$ is called \emph{left-invariant} iff the 
left-translation map is an isometry:
  \begin{align}
      \langle u, v \rangle_{x} = \langle (dL_a)_x u, (dL_a)_x v \rangle_{ax},~\forall a,x \in G,~\forall u, v \in T_x G
  \end{align}
\textbf{Right-invariant} metrics are defined analogously.
A \emph{bi-invariant metric} on $G$ is a Riemannian metric
that is both left and right invariant.
To specify an invariant metric we can use the following.
\begin{prop}\footnote{\cite[Propositions 21.1 \& 21.2]{gallier2020differential}}
  \label{prop:left_inv_metric_inner_prods}
  For $G$ a Lie group with Lie algebra $\mathfrak{g}$ there is a one-to-one 
  correspondence between inner products on 
  $\mathfrak{g}$ and left (right) invariant metrics on $G$.
\end{prop}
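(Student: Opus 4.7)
The plan is to construct two explicit maps, in opposite directions, and then verify they are mutual inverses.

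For the forward direction, given a left-invariant metric $g$ on $G$, I simply restrict it to the identity, obtaining an inner product $\langle \cdot, \cdot \rangle_{\mathfrak{g}} \coloneqq g_e$ on $T_e G = \mathfrak{g}$. For the reverse direction, given an inner product $\langle \cdot, \cdot \rangle$ on $\mathfrak{g}$, I define, for each $x \in G$ and $u, v \in T_x G$,
\[
g_x(u, v) \coloneqq \langle (dL_{x^{-1}})_x u,\ (dL_{x^{-1}})_x v \rangle.
\]
Since $L_{x^{-1}}: G \to G$ is a diffeomorphism with $L_{x^{-1}}(x) = e$, the differential $(dL_{x^{-1}})_x : T_x G \to T_e G = \mathfrak{g}$ is a linear isomorphism, hence $g_x$ is a well-defined inner product on $T_x G$.

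Next I verify three properties. First, left-invariance: for any $a \in G$ and $u, v \in T_x G$, using $L_{(ax)^{-1}} \circ L_a = L_{x^{-1}}$ and the chain rule, we obtain $(dL_{(ax)^{-1}})_{ax} \circ (dL_a)_x = (dL_{x^{-1}})_x$, so $g_{ax}((dL_a)_x u, (dL_a)_x v) = g_x(u, v)$. Second, smoothness of the tensor field $x \mapsto g_x$: the cleanest route is to evaluate $g$ on a global smooth frame of left-invariant vector fields. If $\{E_1, \dots, E_n\}$ is a basis of $\mathfrak{g}$ and $\{\widetilde{E}_1, \dots, \widetilde{E}_n\}$ are the associated left-invariant vector fields on $G$, then $g_x(\widetilde{E}_i(x), \widetilde{E}_j(x)) = \langle E_i, E_j \rangle$ is constant in $x$, which is manifestly smooth; since left-invariant vector fields trivialize $TG$, this implies smoothness of $g$ as a $(0,2)$-tensor field. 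Third, bijectivity of the two constructions: starting from an inner product on $\mathfrak{g}$, extending as above, and restricting back to $e$ recovers the original inner product since $(dL_{e})_e = \id$; conversely, starting from a left-invariant metric $g$, restricting to $e$ and re-extending recovers $g$ at every point by the very property of left-invariance.

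The right-invariant case is handled by the same argument with $R_{x^{-1}}$ in place of $L_{x^{-1}}$. The only genuine obstacle is the smoothness verification, which is why I would argue via the left-invariant global frame rather than through the coordinate-level expression $g_x(u, v) = \langle (dL_{x^{-1}})_x u, (dL_{x^{-1}})_x v \rangle$ directly; invoking smoothness of multiplication and inversion is essential here, but the left-invariant-frame argument makes this automatic. Everything else reduces to routine linear algebra and the defining compatibility with the group action.
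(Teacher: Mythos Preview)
Your proof is correct and follows exactly the approach indicated in the paper: the paper does not give a self-contained argument but simply records the transport formula $\langle u, v \rangle_x = \langle (dL_{x^{-1}})_x u, (dL_{x^{-1}})_x v \rangle_e$ and defers the rest to the cited reference. You have filled in precisely those deferred details (left-invariance via the chain rule on $L_{(ax)^{-1}} \circ L_a = L_{x^{-1}}$, smoothness via the global left-invariant frame, and the two-sided inverse check), so there is nothing to add.
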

Left (right) invariant metrics can therefore be determined uniquely by specifying
an inner product on $\mathfrak{g}$. 
This can be seen since for any $x \in G$ and $u, v \in T_x G$:
\begin{align}
    \label{eq:group_left_inv_metric}
    \langle u, v \rangle_x = \langle \dd (L_{x^{-1}})_{x} u, \dd (L_{x^{-1}})_{x} v \rangle_{e}
\end{align}
The analogue result for bi-invariant metrics is the following.
\begin{prop}\footnote{\cite[Proposition 21.3]{gallier2020differential}.}
  There is a one-to-one correspondence between $\Ad$-invariant inner products 
  on $\mathfrak{g}$ and bi-invariant metrics on $G$.
  An $\Ad$-invariant inner product on $\mathfrak{g}$ is defined 
  such that for any $g \in G$, $\Ad_g$ is a linear isometry:
  \begin{align}
      \langle u, v \rangle = \langle \Ad_{g}(u), \Ad_{g}(v) \rangle,~\forall g \in G,~\forall u, v \in \mathfrak{g}
  \end{align}
\end{prop}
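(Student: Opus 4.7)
The plan is to leverage the previous proposition (which identifies left-invariant metrics on $G$ with inner products on $\mathfrak{g}$ via restriction at $e$ and left-translation extension via equation~\ref{eq:group_left_inv_metric}) and show that, under this identification, bi-invariance corresponds precisely to $\Ad$-invariance. Since every bi-invariant metric is in particular left-invariant, and every $\Ad$-invariant inner product trivially defines an inner product, the content of the statement is the equivalence: for a left-invariant metric $\langle \cdot, \cdot \rangle$ on $G$, the metric is also right-invariant if and only if the associated inner product $\langle \cdot, \cdot \rangle_e$ on $\mathfrak{g}$ is $\Ad$-invariant.

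For the forward direction, suppose $\langle \cdot, \cdot \rangle$ is bi-invariant. Then for every $g \in G$, both $L_g$ and $R_{g^{-1}}$ are isometries, hence so is the composition $C_g = L_g \circ R_{g^{-1}}$. Because $C_g(e) = e$, its differential $d(C_g)_e = \Ad_g : \mathfrak{g} \to \mathfrak{g}$ is a linear isometry with respect to $\langle \cdot, \cdot \rangle_e$, which is exactly the $\Ad$-invariance condition in the statement.

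For the converse, start with an $\Ad$-invariant inner product $\langle \cdot, \cdot \rangle_e$ on $\mathfrak{g}$ and extend it to a left-invariant metric via equation~\ref{eq:group_left_inv_metric}. To check right-invariance, I would fix $a, x \in G$ and $u, v \in T_x G$ and rewrite
\begin{align}
\langle dR_a u, dR_a v \rangle_{xa} = \langle d(L_{(xa)^{-1}})\, dR_a u,\ d(L_{(xa)^{-1}})\, dR_a v \rangle_e.
\end{align}
Using that $L$ and $R$ commute on $G$ together with $L_{(xa)^{-1}} = L_{a^{-1}} \circ L_{x^{-1}}$, the composition $d(L_{(xa)^{-1}}) \circ dR_a$ can be rearranged as $d(L_{a^{-1}} \circ R_a)_{x^{-1}} \circ d(L_{x^{-1}})_x = \Ad_{a^{-1}} \circ d(L_{x^{-1}})_x$. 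Applying $\Ad$-invariance on $\mathfrak{g}$ then collapses the $\Ad_{a^{-1}}$ factor, leaving $\langle d(L_{x^{-1}}) u, d(L_{x^{-1}}) v \rangle_e = \langle u, v \rangle_x$, which is right-invariance.

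The main obstacle is purely bookkeeping with the tangent maps at the correct base points and exploiting the commutativity $L_a \circ R_b = R_b \circ L_a$ to recognize the conjugation $C_{a^{-1}}$, together with the identity $\Ad_{a^{-1}} = d(C_{a^{-1}})_e$. Smoothness and the one-to-one nature of the assignment are already supplied by the preceding proposition, so no additional analytic work is needed beyond verifying the invariance equivalence at $e$.
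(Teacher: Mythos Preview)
Your argument is correct and is in fact the standard one: restrict to the identity using the bijection of Proposition~\ref{prop:left_inv_metric_inner_prods}, then observe that right-invariance of a left-invariant metric is equivalent to $\Ad$-invariance of the inner product on $\mathfrak{g}$ via the factorization $L_{(xa)^{-1}}\circ R_a = C_{a^{-1}}\circ L_{x^{-1}}$. The base-point bookkeeping you flag is handled cleanly by first pushing $T_xG$ to $\mathfrak{g}$ with $d(L_{x^{-1}})_x$ and only then applying $\Ad_{a^{-1}} = d(C_{a^{-1}})_e$.

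The paper, however, does not supply its own proof of this proposition: it simply records the statement and defers to \cite[Proposition 21.3]{gallier2020differential} via the footnote. So there is nothing to compare your approach against beyond noting that you have written out what the reference would contain, and that your write-up is self-contained while the paper's treatment is a bare citation.
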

Lie groups with bi-invariant metrics are convenient to work with 
as the group and Riemannian exponential maps coincide at the identity.
\begin{prop}\footnote{\cite[Propositions 21.6 \& 21.20]{gallier2020differential}.}\label{prop:bi_metric_exp_properties}
  If a Lie group $G$ is compact, then it has a bi-invariant metric.
  If $G$ has a bi-invariant metric, then the Riemannian exponential at the identity 
  $\exp_e: T_e G \to G$ coincides with the 
  Lie group exponential $\explie: \mathfrak{g} \to G$. 
\end{prop}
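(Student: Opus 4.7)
The proposition splits cleanly into two independent claims, and I would handle them in order. For the first, the plan is the standard averaging trick. Starting from any inner product $\langle \cdot, \cdot \rangle_0$ on $\mathfrak{g}$, I would define
\begin{align*}
\langle u, v \rangle \coloneqq \int_G \langle \Ad_g(u), \Ad_g(v) \rangle_0 \, \dd{\mu_G(g)},
\end{align*}
where $\mu_G$ is the (bi-invariant, since $G$ is compact and hence unimodular) Haar measure on $G$, normalized so $\mu_G(G)=1$. Finiteness of $\mu_G$ guarantees the integral converges, and positivity of the integrand combined with continuity of $g \mapsto \Ad_g$ gives a positive-definite symmetric bilinear form. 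The key step is verifying $\Ad$-invariance: for any fixed $h \in G$, the substitution $g \mapsto gh$ together with right-invariance of $\mu_G$ yields $\langle \Ad_h u, \Ad_h v \rangle = \langle u, v\rangle$. Then the previous proposition converts this $\Ad$-invariant inner product into a bi-invariant Riemannian metric on $G$.

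For the second claim, the plan is to identify one-parameter subgroups with Levi-Civita geodesics. By definition, $\explie(X) = \gamma(1)$ where $\gamma$ is the integral curve through $e$ of the left-invariant vector field $\tilde{X}$ extending $X$, and $\exp_e(X) = \sigma(1)$ where $\sigma$ is the geodesic with $\sigma'(0)=X$. So the whole statement reduces to showing $\nabla_{\dot{\gamma}}\dot{\gamma} = 0$, i.e., integral curves of left-invariant vector fields are geodesics of the Levi-Civita connection $\nabla$ associated to the bi-invariant metric.

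The technical core is the lemma that bi-invariance forces $\nabla_{\tilde{X}}\tilde{Y} = \tfrac{1}{2}[\tilde{X},\tilde{Y}]$ for any left-invariant $\tilde{X},\tilde{Y}$. I would derive this from the Koszul formula
\begin{align*}
2\langle \nabla_{\tilde{X}}\tilde{Y},\tilde{Z}\rangle &= \tilde{X}\langle \tilde{Y},\tilde{Z}\rangle + \tilde{Y}\langle \tilde{X},\tilde{Z}\rangle - \tilde{Z}\langle \tilde{X},\tilde{Y}\rangle \\
&\quad + \langle[\tilde{X},\tilde{Y}],\tilde{Z}\rangle - \langle[\tilde{X},\tilde{Z}],\tilde{Y}\rangle - \langle[\tilde{Y},\tilde{Z}],\tilde{X}\rangle,
\end{align*}
by observing that inner products of left-invariant fields are constant functions on $G$, so the first three terms vanish, and that $\Ad$-invariance of $\langle \cdot,\cdot\rangle$ on $\mathfrak{g}$ differentiates to skew-symmetry of $\ad_{X}$ with respect to this inner product, which collapses the last two terms onto the first bracket term. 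Specializing to $\tilde{Y}=\tilde{X}$ gives $\nabla_{\tilde{X}}\tilde{X}=0$, so $\gamma$ is a geodesic and $\explie(X)=\exp_e(X)$.

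The main obstacle is this connection-to-bracket identity, which is the only step that does genuine geometric work; once it is established, both claims follow from direct substitutions and the defining properties of the exponential maps recalled in Definition~\ref{def:exponential_of_connection} and Proposition~\ref{prop:lie_exp_prop}.
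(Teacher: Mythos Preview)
Your proposal is correct and follows the standard textbook argument. Note, however, that the paper does not supply its own proof of this proposition: it is stated as background material with a footnote citation to \cite[Propositions 21.6 \& 21.20]{gallier2020differential}, so there is no in-paper argument to compare against. Your averaging construction for the $\Ad$-invariant inner product and the Koszul-formula derivation of $\nabla_{\tilde{X}}\tilde{Y} = \tfrac{1}{2}[\tilde{X},\tilde{Y}]$ are exactly the approach one finds in the cited reference, so your write-up would serve as a faithful expansion of what the paper leaves implicit.
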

The Lie groups of interest $\gsl{n}$, $\gglfull{n}$ or $\gse{n}$ 
do not admit bi-invariant Riemannian metrics\footnote{A connected Lie group $G$ has a bi-invariant metric iff it is isomorphic to the Cartesian product of a compact group and an additive vector group $(\sR^{n}, +)$ for $n \geq 0$ (\cite[Theorem 21.9]{gallier2020differential}).}.
When only a left (right) invariant metric is available, it is still 
possible in some cases to obtain closed-form expressions for geodesics such as 
the Riemannian exponential.
Suppose the group $\gglfull{n}$ is endowed with the canonical 
left-invariant metric determined by the inner product $\langle X, Y \rangle \coloneqq \textnormal{tr}(X^{T}Y)$ for $X, Y \in \lggl{n}$. Then for any $A \in \gglfull{n}$:
\begin{align}\label{eq:left_invariant_inner_prod_gln}
  g_{A}(X_1, X_2) = g_{I}(A^{-1}X_1, A^{-1}X_2) = \langle A^{-1}X_1, A^{-1}X_2 \rangle,\quad \forall X_1, X_2 \in T_{A}\gglfull{n}
\end{align}
A closed-form expression for the Riemannian exponential map at the identity 
is given by\footnote{See~\cite{andruchow2014left, martin2014minimal}.
Note that in this case, the metric will be left invariant and right-$\go{n}$-invariant.
}: 
\begin{align}\label{eq:riem_exp_identity_gln}
  \exp_{I}(X) = e^{X^{T}}e^{X - X^{T}}, \quad \forall X \in \lggl{n}
\end{align}
On the right-hand side we have used the Lie group exponential, 
given by the matrix exponential.
The same expression holds 
for $\gsl{n}$\footnote{See~\cite[Section 8.9]{zacur2014left} which 
references~\cite{wang1969discrete,helgason1979differential}.} and can be used to 
define the exponential at any point.

\begin{remark}\label{rem:riemannian_logarithm_choice}
If we equip $\ggl{n}$ or $\gsl{n}$ with their canonical left-invariant metric, 
the Riemannian exponential is available 
in closed form given by (\ref{eq:riem_exp_identity_gln}).
As opposed to the Lie group exponential, the Riemannian exponential \emph{is}
surjective, and one could see it as a possible choice for 
the parametrization map $\xi: \mathfrak{g} \to G$.

However, as explained in Section~\ref{sec:contribution}, if one cannot work 
only at the level of the Lie algebra, and group elements 
have to be mapped from the group $G$ to $\mathfrak{g}$, the map $\xi^{-1}$ 
would also need to be available.
We are not aware of a closed-form expression for the 
Riemannian logarithm on the 
groups $\ggl{n}$, $\gsl{n}$, corresponding to the canonical left-invariant metric.

One could employ a \emph{shooting} or \emph{relaxation} method to compute the 
Riemannian logarithm\footnote{See~\citet[Section 9]{zacur2014left} 
for a general discussion on obtaining the Riemannian logarithm in the context 
of matrix Lie groups.}, 
as done for example in~\cite[Chapter 6.2]{rentmeesters2013algorithms}.
However, since $\xi^{-1}$ is used at every (lifting) cross-correlation layer, 
this would add a large computational cost during the forward pass.
This issue motivated the search for an alternative solution, 
as the one proposed in the main text.
\end{remark}

\subsection{Group actions \& Homogeneous spaces}\label{subsec:appendix_homspace}
Let $G$ be a group and $M$ a set.
The left action of $G$ on $M$ is a map
$\lambda: G \times M \to M$ satisfying for any $m \in M$ and group elements $h, g \in G$:
    \begin{align}
      \lambda(e, m) = m\text{ and } \lambda(h, \lambda(g, m)) = \lambda(hg, m)
    \end{align}
Right actions are defined analogously.
Where it is clear we are referring to a left action we use the
notation $g \cdot m$ or $gm$ for $\lambda(g, m)$.
Using the action we can define the map
$\lambda_g: M \to M$ given by $\lambda_g: x \mapsto g \cdot x$.
Since Lie groups are locally compact topological groups
some results will be useful if stated more generally.
If $G$ is a topological group and $M$ a topological
space, then $\lambda$ is continuous and $\lambda_g$ will be a homeomorphism.
Whereas when $G$ is a Lie group and $M$ a smooth manifold the
action is smooth and $\lambda_g$ will be a diffeomorphism.
The action of $G$ on $M$ is \textbf{transitive} if:
\begin{align}
    \forall x, y \in M:~\exists g \in G: g \cdot x = y
\end{align}
If a group $G$ acts transitively on a set $M$, then $M$ is
called a homogeneous space of $G$.
For any point $x \in M$, the set of group elements that fix $x$ form
  a subgroup of $G$ called the \textbf{isotropy group} or stabilizer of $x$, 
  and denoted by $G_{x}$.
  The \textbf{orbit} of a point $x \in M$ is denoted by $O_{x}$:
\begin{align}
  \label{eq:stabilizer}G_x = \{g \in G \mid g \cdot x = x\}\\
  \label{eq:orbit}O_x = G \cdot x = \{g \cdot x \mid g \in G\} \subseteq M
\end{align}
Let $G$ also act on a set $N$. 
A map $f: M \to N$ is \textbf{equivariant} if it commutes with the action of $G$:
\begin{align}\label{eq:equivarint_basic}
  f(g \cdot m) = g \cdot f(m),\quad \forall m \in M,~\forall g \in G 
\end{align}
\begin{prop}\footnote{See for example~\cite[Proposition 5.2]{gallier2020differential}.}\label{prop:passing_quotient}
Let $\lambda: G \times M \to M$ be a transitive left action of a group $G$ 
on a set $M$, and denote by $H = G_x$ the stabilizer of $x \in M$.
The map by $\pi: G \to G/H$ denotes the canonical projection $\pi: g \mapsto gH$ 
on the left cosets for any $g \in G$.
For any such $x \in M$ the projection (or orbit) map $\pi_x: G \to M$ is a 
surjective map defined by:
\begin{align}
  \pi_x: G \to M,~\pi_x: g \mapsto \lambda(g,x) = g \cdot x
\end{align}
Since $\pi_x$ is surjective and we have
$\pi_x(gH) = gH \cdot x = g \cdot Hx = g \cdot x = \pi_x(g)$ for any $g \in G$,
it induces a bijection $\phi_x: G/H \to M$ by passing to the quotient:
\begin{align}\label{eq:set_quotient_bijection}
  \pi_x = \phi_x \circ \pi,\quad \phi_x: \pi(g) \mapsto g \cdot x
\end{align}
\end{prop}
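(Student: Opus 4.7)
The plan is to produce $\phi_x$ by passing to the quotient in the usual sense: exhibit $\phi_x$ as the unique set map making the diagram $\pi_x = \phi_x \circ \pi$ commute, and then verify it is a bijection. Three ingredients need checking --- well-definedness, surjectivity, and injectivity --- each of which is a one-line consequence of the definitions of transitive action, stabilizer, and quotient set.

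First I would check that $\pi_x$ is constant on left cosets of $H = G_x$, so that it descends to the quotient. If $g' \in gH$ then $g' = gh$ for some $h \in H$, and by associativity of the action together with the defining property of the stabilizer one has
\begin{align}
\pi_x(g') = (gh)\cdot x = g \cdot (h \cdot x) = g \cdot x = \pi_x(g).
\end{align}
The universal property of the quotient set then yields a unique map $\phi_x: G/H \to M$ with $\phi_x \circ \pi = \pi_x$, explicitly $\phi_x(gH) = g \cdot x$.

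Next I would verify the two bijectivity conditions. Surjectivity of $\phi_x$ is immediate from surjectivity of $\pi_x$, which in turn is just the transitivity hypothesis: for every $y \in M$ there exists $g \in G$ with $g \cdot x = y$, whence $\phi_x(gH) = y$. For injectivity, suppose $\phi_x(g_1 H) = \phi_x(g_2 H)$, i.e.\ $g_1 \cdot x = g_2 \cdot x$. Acting by $g_2^{-1}$ gives $(g_2^{-1} g_1) \cdot x = x$, so $g_2^{-1} g_1 \in G_x = H$, and therefore $g_1 H = g_2 H$.

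There is no genuine obstacle here: the statement is purely formal and the proof uses the stabilizer condition in both directions (once to descend $\pi_x$ to $G/H$, once to recover equality of cosets from equality of images). The only remark worth making is that the proposition asserts only a set-theoretic bijection; if one wanted $\phi_x$ to be a homeomorphism or diffeomorphism, one would additionally need continuity of the action, closedness of $H$, and in the smooth case properness of $\pi_x$ --- but none of that is claimed in the statement, so the above short argument suffices.
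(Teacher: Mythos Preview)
Your proof is correct and matches the paper's approach: the paper embeds the well-definedness check $\pi_x(gH) = g \cdot Hx = g \cdot x = \pi_x(g)$ directly in the statement and then cites \cite[Proposition 5.2]{gallier2020differential} rather than writing out a separate proof. Your version is in fact more complete, since you explicitly verify injectivity via $g_2^{-1}g_1 \in H$, which the paper leaves implicit in the citation.
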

\begin{thm}\footnote{\cite[Proposition 5.7 \& Theorems 5.13-5.14]{gallier2020differential}.}\label{thm:homogeneous_space_topo}
  Let $G$ be a locally compact Hausdorff group that is 
  also $\sigma$-compact.
  Suppose $G$ acts transitively 
  and continuously on a locally compact Hausdorff space $M$.
  For any $x \in M$, the stabilizer $G_x$ is a closed subgroup of $G$ and
  the quotient space $G / G_x$ is Hausdorff.
  Denoting $G_{x} = H$, the projection $\pi: G \to G/H$ is a continuous open map, and the orbit map $\pi_x: G \to M$
  is also continuous.
  Furthermore, the map $\phi_x: G / H \to M$ is a homeomorphism, 
  and it is $G$-equivariant, where the action of $G$ on $G/H$ 
  is defined as in (\ref{eq:g_action_gh})\footnote{\cite[Theorem 0.4.5]{abbaspour2007basic}.}.
\end{thm}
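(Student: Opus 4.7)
The plan is to organize the statement into subclaims and dispatch them in an order that isolates the one non-routine step—openness of $\pi_x$—at the end.

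First I would establish the easy topological facts. The orbit map $\pi_x$ factors as $g \mapsto (g,x) \mapsto \lambda(g,x)$, so continuity of the action gives continuity of $\pi_x$; since $M$ is Hausdorff, $\{x\}$ is closed and hence $G_x = \pi_x^{-1}(\{x\})$ is a closed subgroup (subgroup-ness is a direct check of the action axioms). For the canonical projection $\pi: G \to G/G_x$, continuity is automatic, and openness follows from $\pi^{-1}(\pi(U)) = U G_x = \bigcup_{h \in G_x} Uh$ being a union of open translates. For Hausdorff-ness of $G/G_x$, the standard route is to observe that the equivalence relation $R = \{(g,g') : g^{-1}g' \in G_x\}$ is the preimage of the closed set $G_x$ under the continuous map $(g,g') \mapsto g^{-1}g'$; since $\pi$ is an open surjection, for any two non-equivalent points one produces disjoint saturated open neighborhoods in $G$ that descend to separating open sets in $G/G_x$.

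Next, Proposition~\ref{prop:passing_quotient} already provides the set-theoretic bijection $\phi_x: G/G_x \to M$ with $\pi_x = \phi_x \circ \pi$. Continuity of $\pi_x$ together with the universal property of the quotient topology gives continuity of $\phi_x$, and equivariance is a one-line calculation: $\phi_x(h \cdot gG_x) = \phi_x((hg)G_x) = hg \cdot x = h \cdot \phi_x(gG_x)$.

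The main obstacle is showing $\phi_x$ is open, equivalently that $\pi_x$ is open; this is the only step that genuinely uses both $\sigma$-compactness of $G$ and local compactness of $M$, and it runs via the Baire category theorem. Given an open neighborhood $U$ of $e \in G$, I would choose a compact symmetric neighborhood $V$ of $e$ with $V^2 \subseteq U$ (possible since $G$ is locally compact and $(g,h) \mapsto gh^{-1}$ is continuous). By $\sigma$-compactness, cover $G$ by countably many translates $g_n V$, so $M = \bigcup_n g_n \cdot \pi_x(V)$; each $g_n \cdot \pi_x(V)$ is compact (continuous image of a compact set), hence closed in the Hausdorff space $M$. Because $M$ is locally compact Hausdorff it is a Baire space, so some $g_n \cdot \pi_x(V)$ has nonempty interior; translating by the homeomorphism induced by $g_n^{-1}$ produces a $v \in V$ with $v \cdot x$ in the interior of $\pi_x(V)$, whence $x = v^{-1}(v \cdot x) \in v^{-1} \cdot \mathrm{int}(\pi_x(V)) \subseteq V^{-1} V \cdot x \subseteq U \cdot x$. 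Thus $U \cdot x$ is a neighborhood of $x$. For an arbitrary open $W \subseteq G$ and $g \in W$, applying this to $U = g^{-1}W$ shows $g \cdot x \in \mathrm{int}(W \cdot x)$, so $\pi_x(W)$ is open, and the homeomorphism claim follows.
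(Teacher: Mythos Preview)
Your proof is correct and follows the standard route; the Baire-category step you isolate is exactly the substantive point, and your execution of it is clean. The paper itself does not prove this theorem—it is stated with citations to \cite{gallier2020differential} (Proposition 5.7 and Theorems 5.13--5.14) and \cite{abbaspour2007basic} (Theorem 0.4.5) in lieu of a proof—so there is no in-paper argument to compare against. Your write-up is essentially the proof one finds in those references.
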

If $M$ and $N$ are smooth manifolds, $\pi: M \to N$ a smooth map, 
and ${d\pi_p: T_{p}M \to T_{\pi(p)}N}$ its differential at $p \in M$.
  $\pi$ is a \textbf{smooth submersion} if $d\pi_{p}$ is surjective for every $p \in M$.
  The subset $\pi^{-1}(x)$ for any $x \in N$ is referred to as the \textbf{fiber (of $\pi$) over $x$}, and it is a properly embedded submanifold\footnote{\cite[Corollaries 5.13 \& 5.14]{lee2013smooth}.}. 
The `analogue' of 
Theorem~\ref{thm:homogeneous_space_topo} in the Lie group setting 
are the following results.
\begin{thm}\footnote{\cite[Theorem 21.17]{lee2013smooth}.}\label{thm:homogeneous_space_maniffold}
    Suppose $H$ is a closed Lie subgroup of a Lie group $G$.
    There exists a unique smooth structure on the set of 
        left cosets $G / H$ so that the canonical 
        projection $\pi: G \to G / H$ is 
        a smooth submersion.
      Furthermore, the left action of $G$ on the cosets:
        \begin{align}\label{eq:g_action_gh}
        \tau: G \times G / H \to G / H,~(g_1, g_2H) \mapsto g_1g_2H
    \end{align}
    is transitive and smooth,~i.e. $G /H$ is a homogeneous $G$-space.
\end{thm}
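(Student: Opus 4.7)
The plan is to build a smooth atlas on $G/H$ by transporting a local slice for $H$ in $G$ to every coset via left translation, then verify the submersion property by computing $\pi$ in those coordinates, and finally deduce smoothness of the action from smoothness of multiplication using the universal property of submersions.

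First I would choose any vector-space complement $\mathfrak{n}$ of $\mathfrak{h} = T_e H$ in $\mathfrak{g} = T_e G$, so that $\mathfrak{g} = \mathfrak{h} \oplus \mathfrak{n}$, and define $\Psi: \mathfrak{n} \times H \to G$ by $\Psi(X, h) = \explie(X) \cdot h$. Its differential at $(0, e)$ is the identity map $\mathfrak{n} \oplus \mathfrak{h} \to \mathfrak{g}$, so the inverse function theorem gives a diffeomorphism of some open $U \times V \subseteq \mathfrak{n} \times H$ (with $0 \in U$, $e \in V$) onto an open neighborhood of $e$ in $G$. Shrinking $U$ so that the map $X \mapsto \explie(X) H$ is injective into $G/H$, this yields a homeomorphism from $U$ onto an open neighborhood of $eH$, whose inverse I would declare to be the chart at $eH$. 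Charts at other cosets $gH$ are defined by translation, $\bar{\varphi}_g(X) = \explie(X) g H$, using the induced homeomorphism of $G/H$ coming from left multiplication by $g$ on $G$.

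Chart transitions amount to composing smooth translations with $\Psi^{-1}$, so they are smooth. To see that $\pi: G \to G/H$ is a smooth submersion, express it in the chart $\Psi^{-1}$ near $e$: it reads $(X, h) \mapsto X$, i.e.\ projection onto the first factor, which is a submersion. Left-translation in $G$ extends this to all of $G$. Uniqueness follows by noting that the local section $s: U \to G$, $X \mapsto \explie(X)$, of $\pi$ forces any compatible smooth structure to coincide with the constructed one near $eH$, and left-invariance propagates this to the whole quotient. Transitivity of the action is immediate from $g_2 g_1^{-1} \cdot g_1 H = g_2 H$. For smoothness of $\tau: G \times G/H \to G/H$, observe that $\id_G \times \pi : G \times G \to G \times G/H$ is a surjective submersion, and the composite $(g_1, g_2) \mapsto g_1 g_2 H$ from $G \times G$ to $G/H$ is smooth and constant on its fibers, so by the universal property of submersions the induced map $\tau$ is smooth.

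The main obstacle is verifying that the slice $\explie(U)$ descends to an open, injective map into $G/H$ near $eH$: a priori, distinct $X, X' \in U$ could map to the same coset if $\explie(X)^{-1} \explie(X') \in H$. Ruling this out requires the closedness of $H$ as a topological subgroup of $G$, which forces $H \cap \Psi(W \times V)$ to coincide with $\Psi(\{0\} \times V)$ for sufficiently small open $W \subseteq \mathfrak{n}$. Without closedness, $H$ could be a dense immersed subgroup such as the image of an irrational flow on a torus, the quotient would fail to be Hausdorff, and no manifold structure on $G/H$ would make $\pi$ a submersion.
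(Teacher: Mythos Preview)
The paper does not supply its own proof of this theorem; it simply cites \cite[Theorem~21.17]{lee2013smooth} in a footnote and moves on. Your proposal is the standard slice-chart construction that Lee himself uses, so there is nothing to compare against in the paper itself---your argument is essentially the proof found in the cited reference.

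One small slip worth fixing: when you transport the chart at $eH$ to $gH$ ``using the induced homeomorphism of $G/H$ coming from left multiplication by $g$,'' the translated chart should read $X \mapsto g\,\explie(X)\,H$, not $\explie(X)\,g\,H$. Left multiplication by $g$ sends $xH$ to $gxH$, so the image of $\explie(X)H$ is $g\explie(X)H$. This does not affect the logic---either family of maps yields a smooth atlas---but as written the formula does not match the description, and the chart-transition computation is cleaner with the correct version.
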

$G / H$ is also referred to as a \emph{coset manifold}.
Note that $\pi: G \to G/H$ is $G$-equivariant, and we have 
a diffeomorphism $\tau_h: G/H \to G/H,~\tau_{h}: gH \mapsto hgH$ such that: 
\begin{align}\label{eq:projection_coset_action_relation}
    \tau_g \circ \pi = \pi \circ L_g,\quad \tau_{gh} = \tau_g \circ \tau_h,\quad \forall g,h \in G
\end{align}
\begin{thm}\footnote{See \cite[Theorem 21.18]{lee2013smooth} and~\cite[Proposition 11.13]{o1983semi}.}\label{thm:equivariant_diffeomorphism}
    Let $\lambda: G \times M \to M$ be a smooth transitive left action of a Lie
    group $G$ on a smooth manifold $M$.
    For any $x \in M$, let $G_x = H$ denote its stabilizer.
      The stabilizer $H$ is a closed Lie subgroup of $G$.
        $\phi_x: G / H \to M$ defined as in (\ref{eq:set_quotient_bijection}):
       \begin{align}
            \phi_x: G/H \to M,~gH \mapsto \lambda(g, x) = g \cdot x
       \end{align}
    is a diffeomorphism.
    Furthermore, $\phi_x$ is equivariant with respect
to the action of $G$ on $G / H$ and the action of
    $G$ on $M$.
  The projection map $\pi_x: G \to M,~g \mapsto \lambda(g, x) = g \cdot x$ (which 
    can be expressed as $\pi_{x} = \phi_x \circ \pi$) is a smooth submersion.
\end{thm}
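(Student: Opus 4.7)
The plan is to establish the four assertions in sequence, using Theorem~\ref{thm:homogeneous_space_maniffold} to supply the smooth structure on $G/H$. First, I would observe that $H = G_x$ is the preimage of $\{x\}$ under the continuous orbit map $g \mapsto \lambda(g, x)$, hence a closed subgroup of $G$; by the closed subgroup theorem cited in the footnote of Section~\ref{subsec:appendix_liegroups} (an abstract subgroup is a submanifold iff it is closed with the induced topology), $H$ is then automatically a closed Lie subgroup. With $H$ closed, Theorem~\ref{thm:homogeneous_space_maniffold} endows $G/H$ with the unique smooth structure for which $\pi: G \to G/H$ is a smooth submersion and the induced left action $\tau: G \times G/H \to G/H$ from~\eqref{eq:g_action_gh} is smooth and transitive.

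Next, I would verify that $\phi_x: gH \mapsto g \cdot x$ is a well-defined equivariant bijection. Well-definedness follows since $g_1 H = g_2 H$ forces $g_2^{-1}g_1 \in H$ and hence $g_1 \cdot x = g_2 \cdot x$; surjectivity is transitivity, and injectivity is the defining property of the stabilizer. Equivariance reduces to associativity of the action: $\phi_x(\tau_g(g'H)) = (gg') \cdot x = g \cdot \phi_x(g'H)$. Smoothness of $\phi_x$ then follows from the universal property of smooth submersions applied to $\pi$: the orbit map $\pi_x: g \mapsto \lambda(g, x)$ is smooth as the composition of $g \mapsto (g, x)$ with $\lambda$, and since $\pi_x = \phi_x \circ \pi$ factors through the smooth submersion $\pi$, the induced map $\phi_x$ on the quotient is itself smooth.

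The main obstacle is upgrading $\phi_x$ from a smooth bijection to a diffeomorphism, i.e.\ checking that $\phi_x^{-1}$ is smooth. The cleanest route is the equivariant rank theorem. Because $\phi_x$ intertwines the transitive $G$-actions on $G/H$ and on $M$, one has $\phi_x \circ \tau_g = \lambda_g \circ \phi_x$ for every $g \in G$; since $\tau_g$ and $\lambda_g$ are diffeomorphisms, their differentials are linear isomorphisms, so $d(\phi_x)_{gH}$ and $d(\phi_x)_H$ have the same rank and $\phi_x$ has globally constant rank. A constant-rank smooth bijection between manifolds is necessarily a diffeomorphism: the common rank must equal $\dim G/H$ by injectivity and $\dim M$ by surjectivity (via the local normal form from the rank theorem), so $\phi_x$ is a local diffeomorphism everywhere, hence a global diffeomorphism. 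Finally, $\pi_x = \phi_x \circ \pi$ is a smooth submersion as the composition of a diffeomorphism with the smooth submersion $\pi$, which closes out the last claim.
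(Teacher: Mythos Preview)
Your proof is correct and complete. Note, however, that the paper does not actually supply a proof of this theorem: it simply states the result and defers to the cited references (Lee, \emph{Introduction to Smooth Manifolds}, Theorem~21.18, and O'Neill, \emph{Semi-Riemannian Geometry}, Proposition~11.13). Your argument is precisely the standard one given in Lee's book --- closedness of the stabilizer via continuity of the orbit map, smoothness of $\phi_x$ from the universal property of the submersion $\pi$, and the upgrade to a diffeomorphism via the equivariant rank theorem --- so there is nothing to compare beyond observing that you have reproduced the referenced proof rather than merely citing it.
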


In the main text, we employ the decomposition of 
a group $G$ as $G/H \times H$ for $H \leq G$ a closed subgroup.
The following sections describe the specific class of homogeneous 
spaces $G/H$ for which these decompositions are realised.
Our starting point is the following general result.

\begin{prop}\footnote{
The same result is obtained in the case of topological groups when 
$\sigma: G \to M$ is a continuous 
cross section of $\pi$, in which case $\varphi: M \times H \to G$ is 
then a homeomorphism.}\label{prop:global_cross_section}
  Let $G$ be a Lie group, $H$ a closed Lie subgroup, and denote $M = G / H$.
  If the projection $\pi: G \to M$ has a smooth cross section 
  $\sigma: M \to G$ ($\pi \circ \sigma = \id_{M}$) then:
  \begin{align}\label{eq:bimeasurable_forward}
    \varphi: M \times H \to G,~\varphi(m, h) = \sigma(m)h
  \end{align}
  defines a diffeomorphism from the product space $M \times H$ onto $G$.
  \begin{proof}
    The proof is given in~\cite[Lemma 11.16]{o1983semi}. 
    Here we give a more verbose
    description of the construction since the inverse of this map is mentioned
    in the following sections.
    Let $H = G_x$ be the stabilizer of a point $x \in M$, where $M = G/H$.
    To show that $\varphi: M \times H \to G$ is a diffeomorphism 
    we define the inverse map $\psi: G \to M \times H$ such that:
    \begin{align}\label{eq:bimeasurable_inverse}
      \psi: g \mapsto (\pi(g), (\sigma(\pi(g)))^{-1}g),\quad \forall g \in G
    \end{align}
    $\psi$ is smooth as it is a composition of smooth maps.
    To show that $\psi$ is well-defined one shows $\sigma(\pi(g))^{-1}g \in H = G_x$.
    Recall from Proposition~\ref{prop:passing_quotient} the projection 
    $\pi_x(g) = g \cdot x$ for any $g \in G$ and that
    we've assumed $\pi \circ \sigma = \id_{M}$:
    \begin{align}
      \sigma(\pi(g)) \cdot x = \pi_x(\sigma(\pi(g))) = \pi_x(g) = g \cdot x
    \end{align}
    Then 
    $(\sigma(\pi(g)))^{-1}g \cdot x = (\sigma(\pi(g)))^{-1}\sigma(\pi(g))\cdot x = x$,
    such that $\sigma(\pi(g))^{-1}g \in H = G_x$. 
    We therefore have $\psi(g) \in M \times H$, and it is clear that 
    $\varphi \circ \psi = \id_{G}$ and $\psi \circ \varphi = \id_{M \times H}$, 
    the result follows.
  \end{proof}
\end{prop}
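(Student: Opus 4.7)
The plan is to construct an explicit smooth inverse to $\varphi$ and verify that the two compositions yield identity maps. Smoothness of $\varphi$ itself is immediate since it is obtained by composing the smooth cross section $\sigma$, the inclusion $H \hookrightarrow G$, and the smooth multiplication map $G \times G \to G$, so the real content is producing a smooth inverse.

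First, I would guess the formula for the inverse. Given $g \in G$, applying $\pi$ to $g = \sigma(m) h$ and using $\pi(g' h') = \pi(g')$ for all $h' \in H$ (since right-multiplication by $H$ preserves left cosets) forces $m = \pi(g)$, and then $h = \sigma(\pi(g))^{-1} g$. This motivates defining
\begin{equation*}
\psi : G \to M \times H, \qquad \psi(g) = \bigl(\pi(g),\; \sigma(\pi(g))^{-1} g\bigr).
\end{equation*}

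The key step, which I expect to be the main (though mild) obstacle, is verifying that $\psi$ is well-defined, i.e.\ that the second component actually lies in $H$. Using $\pi \circ \sigma = \id_M$, we get $\pi(\sigma(\pi(g))) = \pi(g)$, which translates through the identification $M = G/H$ (or equivalently, through the orbit picture of Proposition~\ref{prop:passing_quotient} with $H = G_x$) into $\sigma(\pi(g))^{-1} g \in H$. Once this is in hand, smoothness of $\psi$ is automatic: it is a composition of the smooth maps $\pi$, $\sigma$, group inversion, and group multiplication.

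Finally, I would verify that $\varphi \circ \psi = \id_G$ and $\psi \circ \varphi = \id_{M \times H}$ by direct computation, using in the second case the fact that $\pi(\sigma(m) h) = \pi(\sigma(m)) = m$. This yields that $\varphi$ is a smooth bijection with smooth inverse, hence a diffeomorphism. In the topological-group/continuous-section setting the identical argument, with ``smooth'' replaced by ``continuous'', gives that $\varphi$ is a homeomorphism, which is the version actually needed downstream for the Haar-measure factorization.
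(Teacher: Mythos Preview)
Your proposal is correct and follows essentially the same approach as the paper: both construct the explicit inverse $\psi(g) = (\pi(g), \sigma(\pi(g))^{-1}g)$, verify well-definedness via the stabilizer characterization of $H$ (invoking Proposition~\ref{prop:passing_quotient}), note smoothness as a composition of smooth maps, and check the two compositions are identities. Your version adds a brief motivation for the inverse formula and explicitly records that $\varphi$ itself is smooth, but the argument is the same.
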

Lemma 11.27 of \cite{o1983semi} gives a method for constructing such a map 
for a class of homogeneous spaces $M = G/H$ called \emph{naturally reductive}.
\revision{Before reviewing these spaces, we need to define \emph{Riemannian submersions}.
These class of submersions will allow us to describe the 
geometry of $G/H$ using the geometry of $G$.
For a comprehensive description one can consult~\citet[Chapter 7]{o1983semi}
or~\cite[Section 18.3]{gallier2020differential}, which serve as our main references.

Suppose $M$ and $N$ are smooth manifolds, and $\pi: M \to N$ a submersion.
For any $x \in \pi(M)$, the fiber above $x$ given by $\pi^{-1}(x)$
is a submanifold of $M$.
For any $p \in \pi^{-1}(x)$ then ${T_p \pi^{-1}(x) = \ker d\pi_p}$.
Any complement of $\ker d\pi_p = T_p \pi^{-1}(x)$ in $T_p M$ will be isomorphic to
$T_{\pi(p)}N$.
In the Riemannian case for $(M, g)$ and $(N, h)$ smooth manifolds endowed with metrics,
the fibers $\pi^{-1}(x)$ will be Riemannian submanifolds of $M$, and we
can define an orthogonal decomposition with respect to the metric.
The orthogonal subspaces are referred to as horizontal and vertical subspaces.
More precisely, for each $x \in \pi(M) \subseteq N$ and $p \in \pi^{-1}(x)$, the tangent space $T_p M$ can be
decomposed into orthogonal subspaces ${T_p M = \ker d\pi_p \oplus (\ker d\pi_p)^{\bot} = V_p \oplus H_p}$.
Tangent vectors $v \in T_p M$, can be written uniquely using horizontal and vertical components:
\begin{align}
    \label{eq:tangent_vector_decomp}
    v = v_H + v_V, \quad v_H \in H_p, v_V \in V_p
\end{align}
If $v \in H_p$ ($V_p$), then $v$ is called a \textbf{horizontal (vertical)
  tangent vector}.
The differential $d\pi_p$ of a submersion being surjective for any $p \in M$ 
allows us to construct a vector space isomorphism  $d\pi_p\vert_{H_p}: H_p \to T_{\pi(p)}N$ 
between horizontal spaces $H_p$ and $T_{\pi(p)}N$.
$\pi$ is a \textbf{Riemannian submersion} if for all $p \in M$, the differential
$\dd\pi_p$ restricted to $H_p$ is a linear isometry onto $T_{\pi(p)}N$:
\begin{align}
  \label{eq:isometry_riem_subm}
  g_p(u, v) = h_{\pi(p)}(d\pi_p(u), d\pi_p(v)), \quad \forall u,
  v \in H_p
\end{align}

The main utility of Riemannian submersions in our case comes from the next theorem 
which describes how to express geodesics in $N$ as projections of horizontal 
geodesics in $M$.
\begin{thm}\footnote{See~\cite[Proposition 18.8]{gallier2020differential} 
or~\cite[Lemma 7.45 \& Corollary 7.46]{o1983semi}.}
\label{thm:riemannian_submersion_props}
    Let $\pi: M \to N$ be a Riemannian submersion between Riemannian manifolds $(M, g)$
    and $(N, h)$ equipped with the Levi-Civita connection.
  If $\overline{\gamma}: I \to M$ is a geodesic that starts
      horizontally,~i.e. $\overline{\gamma}^{\prime}(0)$ is a horizontal vector,
      then $\overline{\gamma}$ is a \textbf{horizontal geodesic}
      ($\overline{\gamma}^{\prime}(t)$ is horizontal for all $t \in I$).
      Furthermore, the projection $\pi \circ \overline{\gamma} = \gamma$ is a geodesic
      in $N$ of the same length as $\overline{\gamma}$.
      Conversely, for any $p \in M$, if $\gamma$ is a geodesic in $N$
      with $\gamma(0) = \pi(p)$, there exists a unique local horizontal lift
      $\overline{\gamma}$ of $\gamma$ such that $\overline{\gamma}(0) = p$ and
      $\overline{\gamma}$ is a geodesic in $M$.
\end{thm}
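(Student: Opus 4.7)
The plan is to derive the whole theorem from one core lemma: the horizontal lift of any geodesic in $N$ through a point $p \in M$ is itself a geodesic in $M$. With this in hand, the converse statement is immediate, the forward direction follows by a uniqueness argument for geodesics in $M$, and the length equality drops out of the isometry on horizontal spaces.

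First I would introduce horizontal lifts of vector fields. For every $X$ on $N$, its horizontal lift $\tilde{X}$ on $M$ is defined pointwise by $\tilde{X}_{p} = (d\pi|_{H_{p}})^{-1} X_{\pi(p)}$; because $\pi$ is a Riemannian submersion, one has $\langle \tilde{X}, \tilde{Y}\rangle_{M} = \langle X, Y\rangle_{N} \circ \pi$, and $[\tilde{X}, \tilde{Y}] - \widetilde{[X,Y]}$ is vertical. Applying the Koszul formula for $\nabla^{M}$ to three horizontal lifts and comparing term by term with the Koszul formula for $\nabla^{N}$, I would establish O'Neill's identity
\begin{equation*}
  d\pi\bigl( (\nabla^{M}_{\tilde{X}} \tilde{Y})_{p} \bigr) = (\nabla^{N}_{X} Y)_{\pi(p)},
\end{equation*}
i.e., the horizontal component of $\nabla^{M}_{\tilde{X}} \tilde{Y}$ is the horizontal lift of $\nabla^{N}_{X} Y$.

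Next I would prove the core lemma. Given a geodesic $\gamma: I \to N$ with $\gamma(0) = \pi(p)$, the horizontal lift $\overline{\gamma}$ through $p$ is the unique solution of the first-order ODE $\overline{\gamma}'(t) = (d\pi|_{H})^{-1}\gamma'(t)$, which exists on some subinterval. To see $\overline{\gamma}$ is a geodesic, locally extend $\gamma'$ to a vector field $X$ on $N$, lift to $\tilde{X}$, and note $\overline{\gamma}'(t) = \tilde{X}_{\overline{\gamma}(t)}$. By the identity above, the horizontal part of $\nabla^{M}_{\tilde{X}}\tilde{X}$ lifts $\nabla^{N}_{X} X$, which vanishes along $\gamma$. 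The vertical part equals $\tfrac{1}{2}[\tilde{X},\tilde{X}]^{V} = 0$ (this is exactly O'Neill's relation $A_{\tilde{X}}\tilde{X} = 0$, coming from antisymmetry of the bracket). Hence $\nabla^{M}_{\overline{\gamma}'}\overline{\gamma}' = 0$, so $\overline{\gamma}$ is a geodesic.

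For the forward direction, suppose $\overline{\gamma}$ is a geodesic in $M$ with $\overline{\gamma}'(0) \in H_{p}$, $p = \overline{\gamma}(0)$. Set $v = d\pi \cdot \overline{\gamma}'(0)$, let $\gamma$ be the geodesic in $N$ with $\gamma(0) = \pi(p)$ and $\gamma'(0) = v$, and let $\overline{\gamma}_{0}$ be its horizontal lift through $p$. The core lemma says $\overline{\gamma}_{0}$ is a geodesic in $M$, and $\overline{\gamma}_{0}'(0) = (d\pi|_{H_{p}})^{-1} v = \overline{\gamma}'(0)$, so uniqueness of geodesics in $M$ forces $\overline{\gamma} = \overline{\gamma}_{0}$; in particular $\overline{\gamma}$ stays horizontal and $\pi \circ \overline{\gamma} = \gamma$ is a geodesic in $N$. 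The length equality follows from $|\overline{\gamma}'(t)|_{M} = |d\pi \cdot \overline{\gamma}'(t)|_{N} = |\gamma'(t)|_{N}$ and integrating. The main obstacle is the derivation of O'Neill's identity --- specifically, checking that the vertical part of $\nabla^{M}_{\tilde{X}} \tilde{Y}$ is tensorial and antisymmetric in $(X, Y)$ --- which requires careful bookkeeping with Koszul's formula but no deep argument.
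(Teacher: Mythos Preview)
The paper does not supply its own proof of this theorem; it is stated with a citation to \cite[Proposition 18.8]{gallier2020differential} and \cite[Lemma 7.45 \& Corollary 7.46]{o1983semi} and used as a black box. Your outline is correct and is precisely the standard argument found in those references (O'Neill's approach via the Koszul formula and the $A$-tensor), so there is nothing to compare against beyond noting that you have reproduced the cited proof rather than diverged from it.
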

Theorem~\ref{thm:riemannian_submersion_props} states that 
if a Riemannian submersion $\pi: M \to N$ is 
available, horizontal geodesics in $M$ are mapped to geodesics in $N$.
As $d\pi$ is an isomorphism when restricted to horizontal spaces, 
if we have a smooth cross-section $\sigma: N \to M$ we can 
express the Riemannian exponential on $N$ using the Riemannian exponential on $M$:
\begin{align}
  \exp_x(v) = \pi \circ \exp_{\sigma(x)}(\overline{v}),\quad \forall x \in N, v \in T_x N
\end{align}
}
\subsubsection{Naturally reductive \& Symmetric spaces}
\begin{defn}\footnote{\cite[Definition 23.8]{gallier2020differential}.}\label{def:reductive_homogeneous_space}
  Let $G$ be a Lie group, $H \leq G$ a closed subgroup $\Ad: G \to \genlin{\mathfrak{g}}$ be the adjoint representation of $G$.
  A homogeneous space $\text{G} / \text{H}$ is \textbf{reductive} if there is a
    subspace $\mathfrak{m}$ of $\mathfrak{g}$ where:
    \begin{align}
      \mathfrak{g} = \mathfrak{h} \oplus \mathfrak{m},~\text{and}~\Ad_h(\mathfrak{m}) \subseteq \mathfrak{m},~\forall h \in \text{H}
    \end{align}
    That is, $G / H$ is reductive if we 
    can find an $\Ad(H)$-invariant 
    subspace $\mathfrak{m}$ complementary to $\mathfrak{h}$ in $\mathfrak{g}$.
\end{defn}
The following property gives a recipe for constructing a $G$-invariant metric 
on $G/H$ and extending it to a left-invariant metric on $G$ that is right 
$H$-invariant such that $\pi: G \to G/H$ is a Riemannian submersion, 
with $\mathfrak{h}$ and $\mathfrak{m}$ being the vertical and horizontal 
subspaces at $e \in G$. 
 \begin{prop}\footnote{Corresponds 
   to \cite[Propositions 23.23]{gallier2020differential}.}\label{prop:reductive_hom_props}
  Let $G$ be a Lie group, $H$ a closed subgroup and 
  $G / H$ a reductive homogeneous 
  space with reductive decomposition $\mathfrak{g} = \mathfrak{h} \oplus \mathfrak{m}$.
  \begin{enumerate}
  \item There is a one-to-one correspondence
    between $G$-invariant metrics on $G / H$ and $\Ad(H)$-invariant 
    inner products on $\mathfrak{m}$. 
    The correspondence can be established by making $d\pi_e|_{\mathfrak{m}}: \mathfrak{m} \to T_{o}(G/H)$
    into a linear isometry, where $o = \pi(e) = eH$.
    A $G$-invariant metric on $G/H$ exists iff the closure of 
    $\Ad(H)(\mathfrak{m})$ is compact.
    If $H$ is compact then $\Ad(H)(\mathfrak{m})$ is compact, so there exists a
    $G$-invariant metric on $G / H$.

  \item Let $\mathfrak{m}$ have an $\Ad(H)$-invariant inner product. 
    If we extend it to an inner product on $\mathfrak{g} = \mathfrak{h} \oplus \mathfrak{m}$ 
    such that $\mathfrak{h}^{\bot} = \mathfrak{m}$,
    and endow $G$ with the corresponding left-invariant metric then the canonical map
    $\pi: G \to G/H$ is a Riemannian submersion.
  \end{enumerate}
\end{prop}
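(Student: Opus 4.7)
For part (1), my plan is to construct the claimed bijection through the linear isomorphism $d\pi_e|_{\mathfrak{m}}: \mathfrak{m} \to T_o(G/H)$ induced by the reductive decomposition, and to identify the isotropy representation of $H$ on $T_o(G/H)$ with $\Ad|_H$ restricted to $\mathfrak{m}$. For part (2), I exploit left-invariance of the metric on $G$ together with the $G$-equivariance $\tau_g \circ \pi = \pi \circ L_g$ from (\ref{eq:projection_coset_action_relation}) to reduce the Riemannian submersion condition at an arbitrary point $g \in G$ to the corresponding condition at $e$, which holds by construction.

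\textbf{Part (1).} The key identity is that for any $h \in H$ one has $\tau_h \circ \pi = \pi \circ C_h$, since both sides send $g \in G$ to $hgH = (hgh^{-1})H$ using $h^{-1} \in H$. Differentiating at the identity and restricting to $\mathfrak{m}$ gives
\begin{equation*}
d(\tau_h)_o \circ (d\pi_e|_{\mathfrak{m}}) = (d\pi_e|_{\mathfrak{m}}) \circ \Ad_h|_{\mathfrak{m}}.
\end{equation*}
Because $\ker d\pi_e = \mathfrak{h}$ (the fiber through $e$ is $H$) and $\mathfrak{g} = \mathfrak{h} \oplus \mathfrak{m}$, the map $d\pi_e|_{\mathfrak{m}}$ is a linear isomorphism onto $T_o(G/H)$. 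It therefore intertwines the isotropy representation of $H$ on $T_o(G/H)$ with $\Ad|_H$ acting on $\mathfrak{m}$, which preserves $\mathfrak{m}$ by reductivity. A standard fact about homogeneous spaces, following from Theorem~\ref{thm:equivariant_diffeomorphism} and the relation $\tau_g \circ \pi = \pi \circ L_g$, is that $G$-invariant tensor fields on $G/H$ are in bijection with tensors on $T_o(G/H)$ invariant under the isotropy representation; transposing through $d\pi_e|_{\mathfrak{m}}$ converts this to a bijection with $\Ad(H)$-invariant inner products on $\mathfrak{m}$. For existence, an invariant inner product can be manufactured by Haar averaging $\langle X, Y \rangle = \int_K B(kX, kY) \, d\mu_K(k)$ over $K = \overline{\Ad(H)|_{\mathfrak{m}}} \subseteq \textnormal{GL}(\mathfrak{m})$ provided this closure is compact; when $H$ itself is compact, $\Ad(H)$ is a continuous image of a compact set and is therefore already compact, giving the final clause.

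\textbf{Part (2).} By left-invariance, $L_g: G \to G$ is an isometry for every $g$, so the fiber $\pi^{-1}(\pi(g)) = gH$ has vertical tangent space $V_g = (dL_g)_e \mathfrak{h}$, while the horizontal subspace becomes $H_g = (dL_g)_e \mathfrak{m}$ since $\mathfrak{h} \perp \mathfrak{m}$ in $\mathfrak{g}$ by our choice of extension. For $X \in \mathfrak{m}$, set $u = (dL_g)_e X \in H_g$; left-invariance gives $\|u\|_g = \|X\|_{\mathfrak{m}}$. The relation $\pi \circ L_g = \tau_g \circ \pi$ yields $d\pi_g(u) = d(\tau_g)_o (d\pi_e X)$, and since $\tau_g$ is an isometry of the $G$-invariant metric on $G/H$ (by part (1)) and $d\pi_e|_{\mathfrak{m}}$ is an isometry by construction, $\|d\pi_g(u)\|_{\pi(g)} = \|X\|_{\mathfrak{m}}$. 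Polarization extends this norm equality to the inner product, so $d\pi_g|_{H_g}$ is a linear isometry onto $T_{\pi(g)}(G/H)$ and $\pi$ is a Riemannian submersion.

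\textbf{Main obstacle.} The most delicate point is the well-definedness of the map from $\Ad(H)$-invariant inner products on $\mathfrak{m}$ to $G$-invariant metrics on $G/H$: transporting the inner product at $T_o(G/H)$ to $T_{gH}(G/H)$ via $d(\tau_g)_o$ must be independent of the choice of representative $g$ of the coset, and this independence is \emph{precisely} the $\Ad(H)$-invariance condition, obtained through the intertwining identity above. Making that well-definedness and the converse pullback direction fully rigorous — rather than just routine — is the heart of part (1); the averaging half is more mechanical once compactness of $\overline{\Ad(H)|_{\mathfrak{m}}}$ is established.
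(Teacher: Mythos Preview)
The paper does not supply its own proof of this proposition; it is stated with a footnote citing \cite[Propositions 23.23]{gallier2020differential} and no proof environment follows. Your proposal is therefore not being compared against an argument in the paper but against a deferred reference.

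That said, your argument is the standard one and is correct. The intertwining identity $d(\tau_h)_o \circ d\pi_e|_{\mathfrak{m}} = d\pi_e|_{\mathfrak{m}} \circ \Ad_h|_{\mathfrak{m}}$ is the right mechanism for part~(1), and your reduction in part~(2) via $\pi \circ L_g = \tau_g \circ \pi$ together with left-invariance on $G$ and $G$-invariance on $G/H$ is exactly how the Riemannian submersion property is established at an arbitrary point. One minor omission: in part~(1) you give the ``if'' direction of the existence criterion (compact closure $\Rightarrow$ averaging works) but not the ``only if'' direction; the latter follows because an $\Ad(H)$-invariant inner product forces $\Ad(H)|_{\mathfrak{m}}$ into the compact orthogonal group of $(\mathfrak{m}, \langle\cdot,\cdot\rangle)$, so its closure is compact. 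With that clause added, your write-up would stand as a complete proof where the paper offers only a citation.
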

The reductive homogeneous spaces of interest are the following.
\begin{defn}\footnote{\cite[Definition 23.9]{gallier2020differential}.}\label{def:nat_reductive}
    Let $G$ be a Lie group and $H$ a closed subgroup of $G$.
    The homogeneous space $G / H$ is \textbf{naturally reductive} if it is
    reductive with decomposition $\mathfrak{g} = \mathfrak{h} \oplus \mathfrak{m}$, 
    has a $G$-invariant metric and satisfies: 
    \begin{align}
        \langle [X, Z]_{\mathfrak{m}}, Y \rangle = \langle X, [Z, Y]_{\mathfrak{m}} \rangle,~\forall X, Y, Z \in \mathfrak{m}
    \end{align}
\end{defn}
In this case, it is possible to express geodesics in $G / H$ with respect 
to the Levi-Civita connection as orbits of one-parameter subgroups generated 
by the tangent vectors in $\mathfrak{m}$.
\begin{prop}\footnote{\cite[Propositions 23.25-23.27]{gallier2020differential}.}\label{prop:geodesics_natred_homog}
  Suppose $G / H$ is a naturally reductive homogeneous space, 
  and we have $\mathfrak{g} = \mathfrak{h} \oplus \mathfrak{m}$.
  Using the $G$-invariant metric of $G/H$, a left-invariant metric is 
  constructed on $G$, such that its restriction to on $\mathfrak{m}$ 
  is $\Ad(H)$-invariant and we have $\mathfrak{m} = \mathfrak{h}^{\bot}$,
  and recall that in this case $\pi: G \to G/H$ is a Riemannian submersion.
  For every $X \in \mathfrak{m}$ the geodesic starting at $o = \pi(e) = eH$ with 
  initial velocity $d\pi_e(X)$ is given by:
\begin{align}\label{eq:geodesic_commuting_identity}
  \gamma_{o, d\pi_e(X)}(t) = \explie(tX) \cdot o = \pi \circ \explie(tX),\quad \forall t \in \sR
\end{align}
\end{prop}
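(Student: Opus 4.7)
The plan is to lift the candidate curve to $G$ and invoke the horizontal-geodesic principle for Riemannian submersions (Theorem~\ref{thm:riemannian_submersion_props}). Set $\bar{\gamma}(t) = \explie(tX)$ for $X \in \mathfrak{m}$; it suffices to show that $\bar{\gamma}$ is a horizontal geodesic of $G$ with respect to the left-invariant metric constructed in Proposition~\ref{prop:reductive_hom_props}. Then $\pi\circ\bar{\gamma}$ is automatically a geodesic in $G/H$, and the initial conditions $(\pi\circ\bar{\gamma})(0)=o$ and $(\pi\circ\bar{\gamma})'(0)=d\pi_e(X)$ follow by the chain rule, yielding the claimed identity $\gamma_{o,d\pi_e(X)}(t) = \pi\circ\explie(tX) = \explie(tX)\cdot o$.

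Horizontality is the easy half. The velocity $\dot{\bar{\gamma}}(t) = (dL_{\explie(tX)})_e(X)$ lies in $(dL_{\explie(tX)})_e(\mathfrak{m})$. Because the metric on $G$ is left-invariant and satisfies $\mathfrak{m} = \mathfrak{h}^{\perp}$ at $e$, left translation carries the horizontal space at $e$ to the horizontal space at $\explie(tX)$, so $\dot{\bar{\gamma}}(t)$ is horizontal for every $t$.

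The substantive step is showing $\nabla_{\dot{\bar{\gamma}}}\dot{\bar{\gamma}} = 0$; by left-invariance of the Levi-Civita connection it suffices to check this at $t=0$. Viewing $X$ and $Z$ as left-invariant vector fields, the inner products $\langle X, Z\rangle$ are constant on $G$, so the derivative terms in the Koszul formula vanish and one obtains
\begin{align*}
2\langle \nabla_X X, Z\rangle \;=\; \langle [X,X], Z\rangle - \langle [X,Z], X\rangle + \langle [Z,X], X\rangle \;=\; -2\langle [X,Z], X\rangle.
\end{align*}
I would then decompose $Z = Z_{\mathfrak{h}} + Z_{\mathfrak{m}}$. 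For the $\mathfrak{h}$-piece, differentiating the $\Ad(H)$-invariance of the inner product on $\mathfrak{m}$ gives that $\ad_{Z_{\mathfrak{h}}}$ is skew-symmetric on $\mathfrak{m}$, while reductivity $\Ad_h(\mathfrak{m})\subseteq\mathfrak{m}$ ensures $[X, Z_{\mathfrak{h}}]\in\mathfrak{m}$; hence $\langle [X,Z_{\mathfrak{h}}], X\rangle = 0$. For the $\mathfrak{m}$-piece, orthogonality $\mathfrak{h}\perp\mathfrak{m}$ lets me replace $[X,Z_{\mathfrak{m}}]$ by its $\mathfrak{m}$-component, and the natural reductivity identity of Definition~\ref{def:nat_reductive}, applied with both outer arguments equal to $X$, yields $\langle [X, Z_{\mathfrak{m}}]_{\mathfrak{m}}, X\rangle = 0$.

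The main obstacle is really just bookkeeping across the splitting: one must recognise that $\Ad(H)$-invariance of the inner product on $\mathfrak{m}$ handles the $\mathfrak{h}$-direction, whereas the naturally reductive hypothesis is precisely what is needed to handle the $\mathfrak{m}$-direction, and that in each case the computation reduces to the same skew-symmetry statement $\langle [X, Z], X\rangle = 0$. Once $\nabla_X X|_e = 0$ is established, left-invariance propagates this identity along $\bar{\gamma}$, Theorem~\ref{thm:riemannian_submersion_props} descends the horizontal geodesic $\bar{\gamma}$ to a geodesic $\pi\circ\bar{\gamma}$ in $G/H$, and comparing initial velocities closes the argument.
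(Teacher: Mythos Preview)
Your argument is correct. The paper does not supply its own proof of this proposition; it is quoted from \cite[Propositions 23.25--23.27]{gallier2020differential}, so there is nothing to compare your route against beyond the cited reference. Your strategy---show that the one-parameter subgroup $\bar{\gamma}(t)=\explie(tX)$ is a horizontal geodesic for the left-invariant metric on $G$ and then invoke Theorem~\ref{thm:riemannian_submersion_props}---is the standard one, and your Koszul computation is clean: the key identity $\langle [X,Z],X\rangle=0$ for $X\in\mathfrak{m}$ splits exactly as you describe, with $\Ad(H)$-invariance of the inner product on $\mathfrak{m}$ dispatching $Z\in\mathfrak{h}$ and natural reductivity (Definition~\ref{def:nat_reductive}) dispatching $Z\in\mathfrak{m}$ after discarding the $\mathfrak{h}$-component of the bracket via $\mathfrak{h}\perp\mathfrak{m}$. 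One small cosmetic point: rather than saying ``check at $t=0$ and propagate by left-invariance'', it is cleaner to observe that the Koszul identity you derived already holds pointwise on $G$ (all quantities are left-invariant and hence constant), so $\nabla_X X\equiv 0$ as a global vector-field equation and $\bar{\gamma}$ is a geodesic everywhere without a separate propagation step.
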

Since the one-parameter subgroups $t \mapsto \explie(tX)$ are defined for any $t \in \sR$, 
by the preceding proposition so are maximal geodesics through $o$ and therefore 
through any point since we are working with a homogeneous space.
Naturally reductive homogeneous spaces 
are therefore complete\footnote{\cite[pp. 313]{o1983semi}.}.
\begin{defn}\label{def:riemannian_symmetric_space}
   A connected Riemannian manifold $(M, g)$ is a \textbf{(Riemannian) symmetric space}
   if for every point $p \in M$ there exists a unique isometry $s_p: M \to M$
   such that $s_p(p) = p$ and $(ds_p)_p = -\id_p$.
   Equivalently, for every $p \in M$, the map $s_p$ is an
   involutive isometry ($s^{2}_p = \id$) having $p$ as its only fixed point.
\end{defn}
The isometry $s_p: M \to M$ is called the \emph{global symmetry} of $M$ at the $p$.
Symmetric spaces can be constructed from 'Lie group data'. 
The connection can be made clear with a few more definitions.

An \textbf{involutive automorphism} of a Lie group $G$ is
an automorphism $\sigma: G \to G$ such that $\sigma \neq \id$
and $\sigma^{2} = \id$.
For $\sigma$ an involutive automorphism $G$, 
$G^{\sigma} = \{g \in G \mid \sigma(g) = g\}$ will denote the 
closed subgroup of fixed points of $\sigma$ and $G^{\sigma}_{0}$
its identity component.

\begin{defn}\footnote{\cite[Chapter IV, $\S3$]{helgason2001differential}.}
   \label{def:symmetric_pair}
   A \textbf{symmetric pair} is a triplet $(G, H, \sigma)$ where $G$ is a connected Lie group,
  $H$ a closed Lie subgroup of $G$, and $\sigma: G \to G$ an involutive
  automorphism of $G$ such that $G^{\sigma}_0 \subseteq H \subseteq G^{\sigma}$.
  If additionally $\Ad(H) \subseteq \genlin{\mathfrak{g}}$ is compact (where 
  $\Ad: G \to \genlin{\mathfrak{g}}$ is the adjoint representation of $G$), 
  then $(G, H, \sigma)$ is a \textbf{Riemannian symmetric pair}.
\end{defn}
The differential $d\sigma_e: \mathfrak{g} \to \mathfrak{g}$ 
of an involutive automorphism $\sigma: G \to G$ defines the $\pm 1$ 
eigenspaces:
\begin{align}\label{eq:cartan_decomp_eigenspaces}
  \mathfrak{h} = \{X \in \mathfrak{g} \mid d\sigma_e(X) = X\},\quad \mathfrak{m} = \{X \in \mathfrak{g} \mid d\sigma_e(X) = -X\}
\end{align}

\begin{thm}\footnote{\cite[Theorem 23.33]{gallier2020differential}.}
  \label{thm:cartan_decomp_theorem}
  Suppose that $(G, H, \sigma)$ is a symmetric pair.
  Then the following properties hold. 
  Note that items 1-3 make $G/H$ into a reductive homogeneous space.
  \begin{enumerate}
    \item $\mathfrak{h} = \{X \in \mathfrak{g} \mid d\sigma_e(X) = X\}$ 
      is the Lie algebra of $H$.
    \item $\mathfrak{g} = \mathfrak{h} \oplus \mathfrak{m}$, where 
      $\mathfrak{m} = \{X \in \mathfrak{g} \mid d\sigma_e(X) = -X\}$.
      The decomposition follows from $d\sigma_e: \mathfrak{g} \to \mathfrak{g}$ also being an involution $d\sigma_e^{2} = \id$ and the identity:
      \begin{align}\label{eq:lie_algebra_involution_decomp}
        X = \frac{1}{2} (X + d\sigma_e(X)) + \frac{1}{2}(X - d\sigma_e(X)),\quad \forall X \in \mathfrak{g}
      \end{align}
    \item $\Ad_k(\mathfrak{m}) \subseteq \mathfrak{m},~\forall k \in K$.
    \item $
    [\mathfrak{h}, \mathfrak{h}] \subseteq \mathfrak{h},
      [\mathfrak{h}, \mathfrak{m}] 
      \subseteq \mathfrak{m},~ [\mathfrak{m},\mathfrak{m}] \subseteq \mathfrak{h}
  $.
  \end{enumerate}
\end{thm}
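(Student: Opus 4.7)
The plan is to exploit the fact that $\sigma \colon G \to G$ being a Lie group involution forces its differential $d\sigma_e \colon \mathfrak{g} \to \mathfrak{g}$ to be an involutive Lie algebra automorphism. Every item of the theorem then follows mechanically from this single observation together with the decomposition identity already displayed in item 2 of the statement.

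For item 1, the inclusion $H \subseteq G^{\sigma}$ means that $\sigma$ restricts to the identity on $H$, so the Lie algebra of $H$ is contained in $\ker(d\sigma_e - \mathrm{id}) = \mathfrak{h}$ as defined in (\ref{eq:cartan_decomp_eigenspaces}). Conversely, the closed fixed-point subgroup $G^{\sigma}$ is a Lie subgroup whose Lie algebra is exactly this fixed-point subspace of $d\sigma_e$. Since a Lie group and its identity component share the same Lie algebra, and since $G^{\sigma}_{0} \subseteq H \subseteq G^{\sigma}$ sandwiches $H$ between two groups with Lie algebra $\mathfrak{h}$, the Lie algebra of $H$ equals $\mathfrak{h}$. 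For item 2, the identity (\ref{eq:lie_algebra_involution_decomp}) already displayed in the statement exhibits the direct sum decomposition: its first summand is fixed by $d\sigma_e$ and hence lies in $\mathfrak{h}$, while the second is negated and hence lies in $\mathfrak{m}$. The sum is direct because the eigenvalues $+1$ and $-1$ of the involution $d\sigma_e$ are distinct, so $\mathfrak{h} \cap \mathfrak{m} = 0$.

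For item 3, the key is the naturality relation $\sigma \circ C_k = C_{\sigma(k)} \circ \sigma$. For any $k \in H \subseteq G^{\sigma}$ one has $\sigma(k) = k$, so this reduces to $\sigma \circ C_k = C_k \circ \sigma$. Differentiating at $e$ gives $d\sigma_e \circ \Ad_k = \Ad_k \circ d\sigma_e$, and evaluating on $X \in \mathfrak{m}$ yields $d\sigma_e(\Ad_k X) = \Ad_k(-X) = -\Ad_k X$, proving $\Ad_k X \in \mathfrak{m}$. For item 4, $d\sigma_e$ is a Lie algebra homomorphism, so $d\sigma_e[X, Y] = [d\sigma_e X, d\sigma_e Y]$. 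Distributing the sign conventions over the three cases $X, Y \in \mathfrak{h}$; $X \in \mathfrak{h}$, $Y \in \mathfrak{m}$; and $X, Y \in \mathfrak{m}$, the bracket acquires eigenvalues $(+1)(+1) = +1$, $(+1)(-1) = -1$, and $(-1)(-1) = +1$ respectively, giving the three inclusions $[\mathfrak{h}, \mathfrak{h}] \subseteq \mathfrak{h}$, $[\mathfrak{h}, \mathfrak{m}] \subseteq \mathfrak{m}$, and $[\mathfrak{m}, \mathfrak{m}] \subseteq \mathfrak{h}$.

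The main obstacle is the subtlety in item 1: identifying the Lie algebra of the intermediate group $H$ with that of $G^{\sigma}$. This requires the standard but nontrivial fact that the fixed-point subgroup of a Lie group automorphism is a closed embedded Lie subgroup whose Lie algebra is precisely the fixed-point subspace of the differential, combined with the invariance of the Lie algebra under passage to the identity component. Once this is in hand, the remaining items are essentially bookkeeping with the $\pm 1$ eigenvalues of $d\sigma_e$.
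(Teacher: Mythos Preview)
Your proof is correct and follows the standard argument. The paper does not supply its own proof of this theorem---it simply cites \cite[Theorem 23.33]{gallier2020differential} in a footnote---and the only proof content embedded in the statement itself is the decomposition identity (\ref{eq:lie_algebra_involution_decomp}) for item 2, which you use in exactly the intended way. Your treatment of items 1, 3, and 4 (sandwiching $H$ between $G^{\sigma}_0$ and $G^{\sigma}$, commuting $d\sigma_e$ with $\Ad_k$ via $\sigma(k)=k$, and reading off eigenvalues of brackets) is precisely the standard route one finds in the cited reference.
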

The map $d\sigma_e: \mathfrak{g} \to \mathfrak{g}$ associated 
to a symmetric pair $(G, H, \sigma)$ is referred to as a \textbf{Cartan involution},
with the automorphism $\sigma: G \to G$ being a 
\textbf{\emph{global} Cartan involution}.
The decomposition $\mathfrak{g} = \mathfrak{h} \oplus \mathfrak{m}$ given by $d\pi_e$ 
as in Thm.~\ref{thm:cartan_decomp_theorem} is called a \textbf{Cartan Decomposition} 
of $\mathfrak{g}$.
If one further assumes that $G^{\sigma}_0$ and $H$ are compact, then 
we obtain a symmetric space.
\begin{thm}\footnote{See~\cite[Proposition 6.25]{ziller2010lie}  
  or~\cite[Proposition 3.4]{helgason2001differential}. Corresponds to~\cite[Theorem 23.34]{gallier2020differential},~\cite[Theorem 11.29]{o1983semi}.}
Suppose that $(G, H, \sigma)$ is a Riemannian symmetric pair 
with $G^{\sigma}_0$ and $H$ compact.
Denote by $\mathfrak{g}$ and $\mathfrak{h}$ the Lie algebras of $G$ and $H$ 
respectively.
\begin{enumerate}
  \item Since $H$ is compact, $G/H$ admits a $G$-invariant  
metric from Proposition~\ref{prop:reductive_hom_props} (1).
From the previous theorem, $G/H$ has a reductive decomposition 
  $\mathfrak{g} = \mathfrak{h} \oplus \mathfrak{m}$ where $\mathfrak{h}$ 
  and $\mathfrak{m}$ are the $\pm 1$ eigenspaces of $d\sigma_e$.
  Using the identity $[\mathfrak{m}, \mathfrak{m}] \subseteq \mathfrak{h}$ and 
  assuming a $G$-invariant metric on $G/H$ the natural reductivity condition 
  of~\ref{def:nat_reductive} holds 
  trivially (since $\mathfrak{h} \cap \mathfrak{m} = \{0\}$).
\item For every $p \in G/H$, there exists a isometry 
      $s_p: G/ H \to G/H$ such that $s_p(p) = p$ and $d(s_p)_p = -\textnormal{id}_p$, 
      making $G/H$ a Riemannian symmetric space. 
      For the projection $\pi: G \to G/ H$ and $o = \pi(e) = eH$, 
      the symmetry at $o$ is defined such that $s_o: gH \mapsto \sigma(g)H$:
      \begin{align}
      s_o \circ \pi = \pi \circ \sigma
      \end{align}
      For an arbitrary $p = gH \in G/H$, the geodesic symmetry is given by:
      \begin{align}
        s_p = \tau_g \circ s_o \circ \tau_{g^{-1}}
      \end{align}
\end{enumerate}
\end{thm}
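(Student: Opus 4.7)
The plan is to handle the two parts sequentially, leveraging Thm.~\ref{thm:cartan_decomp_theorem} on Cartan decompositions and the general machinery of reductive homogeneous spaces in Prop.~\ref{prop:reductive_hom_props}.

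For part (1), Thm.~\ref{thm:cartan_decomp_theorem} already furnishes the reductive decomposition $\mathfrak{g} = \mathfrak{h} \oplus \mathfrak{m}$ with $\Ad(H)$-invariance of $\mathfrak{m}$, so I would first invoke Prop.~\ref{prop:reductive_hom_props}(1) together with compactness of $H$ to obtain a $G$-invariant metric on $G/H$ from an $\Ad(H)$-invariant inner product on $\mathfrak{m}$. For the natural reductivity condition $\langle [X, Z]_{\mathfrak{m}}, Y \rangle = \langle X, [Z, Y]_{\mathfrak{m}} \rangle$ with $X, Y, Z \in \mathfrak{m}$, I would appeal to part 4 of Thm.~\ref{thm:cartan_decomp_theorem}, which gives $[\mathfrak{m}, \mathfrak{m}] \subseteq \mathfrak{h}$. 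Since the decomposition is direct, $[X, Z]_{\mathfrak{m}} = [Z, Y]_{\mathfrak{m}} = 0$ and both sides vanish, so the identity holds trivially.

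For part (2), I would construct the symmetry at $o = eH$ by setting $s_o(gH) \coloneqq \sigma(g)H$. Well-definedness follows from $H \subseteq G^{\sigma}$, so that $\sigma(h) = h$ for $h \in H$, and the relation $s_o \circ \pi = \pi \circ \sigma$ with $s_o(o) = o$ are then immediate. Differentiating at $e$ gives $d(s_o)_o \circ d\pi_e = d\pi_e \circ d\sigma_e$; since $d\sigma_e|_{\mathfrak{m}} = -\id$ and $d\pi_e|_{\mathfrak{m}}\colon \mathfrak{m} \to T_o(G/H)$ is an isometry by Prop.~\ref{prop:reductive_hom_props}(2), we obtain $d(s_o)_o = -\id$. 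To upgrade $s_o$ to a global isometry I would derive the equivariance $s_o \circ \tau_g = \tau_{\sigma(g)} \circ s_o$ directly from the definitions and combine it with the fact that $\tau_g, \tau_{\sigma(g)}$ are isometries of the $G$-invariant metric, writing $d(s_o)_p$ at arbitrary $p = gH$ as a composition of isometries; the involution $s_o^2 = \id$ follows from $\sigma^2 = \id$. For general $p = gH$, I would set $s_p \coloneqq \tau_g \circ s_o \circ \tau_{g^{-1}}$ and verify $s_p(p) = p$, $d(s_p)_p = -\id_p$, $s_p^2 = \id$, and the isometry property by direct conjugation.

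The main obstacle is well-definedness of $s_p$ under the choice of coset representative: for $g' = gh$ with $h \in H$, one must show $\tau_h \circ s_o \circ \tau_{h^{-1}} = s_o$, which unwinds via $(\tau_h \circ s_o)(\tilde{g}H) = h\sigma(\tilde{g})H$ versus $(s_o \circ \tau_h)(\tilde{g}H) = \sigma(h)\sigma(\tilde{g})H$ to the identity $\sigma(h) = h$. This is exactly where the full hypothesis $H \subseteq G^{\sigma}$ (rather than mere $\sigma$-stability of $H$ as a set) is essential, and it is the subtle place where the Riemannian symmetric pair structure is used in an indispensable way.
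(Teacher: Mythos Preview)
Your proposal is correct and follows essentially the same approach as the paper. The paper does not provide a separate proof environment for this theorem; instead, the reasoning is embedded in the statement itself (invoking Prop.~\ref{prop:reductive_hom_props}(1), Thm.~\ref{thm:cartan_decomp_theorem}, and the identity $[\mathfrak{m},\mathfrak{m}]\subseteq\mathfrak{h}$ for part (1), and giving the explicit formulas $s_o(gH)=\sigma(g)H$ and $s_p=\tau_g\circ s_o\circ\tau_{g^{-1}}$ for part (2)), with full details deferred to the cited references. Your write-up expands precisely this sketch, and your extra care about well-definedness of $s_p$ under change of coset representative and the isometry argument via the equivariance $s_o\circ\tau_g=\tau_{\sigma(g)}\circ s_o$ is the standard route taken in those references.
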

By the preceding theorem symmetric spaces can be given a naturally reductive 
structure.
We can now use Proposition~\ref{prop:global_cross_section} to construct a global 
cross section, under which the Lie group $G$ can be identified with the 
product space $\mathfrak{m} \times H$ or $\explie(\mathfrak{m}) \times H$. 
Recall from Proposition~\ref{prop:geodesics_natred_homog} that geodesics starting 
at $o = \pi(e) = eH = H$ with initial velocity $d\pi_{e}(X)$ for $X \in \mathfrak{m}$ 
are of the form 
$\gamma_{o, d\pi_e(X)}(t) = \explie(tX) \cdot o = \pi(\explie(tX))$.
In particular, we can obtain the following expression for the 
Riemannian exponential $\exp_{o}: T_{o}M \to M$:
\begin{align}\label{eq:riemannian_exponential_commutes}
  \exp_{o}(d\pi_{e}|_{\mathfrak{m}}(X)) = \pi(\explie(X)), \quad \forall X \in \mathfrak{m}
\end{align}
That is, the following diagram commutes:
\begin{equation}\label{diag:orbit_riem_exponential_natreductive}
\adjustbox{scale=1.2}{\begin{tikzcd}
    \mathfrak{m} \arrow{r}{d\pi_{e}|_{\mathfrak{m}}} \arrow[swap]{d}{\explie} & T_{o}M \arrow{d}{\exp_{o}} \\
G \arrow{r}{\pi} & M
\end{tikzcd}
}
\end{equation}

\begin{prop}\label{prop:concrete_natreductive_cross_section}
    Let $\text{M} = \text{G} / \text{H}$ be a naturally reductive homogeneous space 
    and $\pi: G \to G/H$ the canonical projection.
    If the Riemannian exponential $\exp_{o}$ at the point 
    $o = \pi(e) = eH \in \text{M}$ is 
    a diffeomorphism, we can construct a diffeomorphism
    of $\mathfrak{m} \times H$ onto $G$ given by:
    \begin{align}\label{eq:bimeasurable_forward_liealgebra}
      \varPhi: \mathfrak{m} \times H \to G,\quad (X,h) \mapsto \explie(X)h
    \end{align}
    \begin{proof}
    \citet[Lemma 11.27]{o1983semi}.
    Again, we reproduce the proof as the maps defined are referenced in later sections.
    The map is built by constructing a cross-section of $\pi: G \to G/H$ 
    using the relation (\ref{eq:riemannian_exponential_commutes}) 
    of the Riemannian exponential such that one first defines:
    \begin{align}\label{eq:riem_exp_homogeneous_identity_group}
      \rExp_{e} \coloneqq \exp_{o} \circ~d\pi_{e}|_{\mathfrak{m}} = \pi \circ \explie: \mathfrak{m} \to M
    \end{align}
    By hypothesis $\exp_{o}: T_{o}(M) \to M$ is a diffeomorphism, and so 
    is $d\pi_{e}|_{\mathfrak{m}}$ making $\rExp_{e}$ a diffeomorphism. 
    We can define the cross-section by $\sigma: M \to G$ by:
    \begin{align}\label{eq:natreductive_cross_section}
    \sigma \coloneqq \explie \circ~\rExp_{e}^{-1}
    \end{align}
    Then $\pi \circ \sigma = \pi \circ \explie \circ~\rExp_{e}^{-1} = \rExp_{e} \circ \rExp^{-1}_{e} = \id_{M}$, and 
    by Proposition~\ref{prop:global_cross_section} we have a
    diffeomorphism $\varphi: M \times H \to G$ 
    given by (\ref{eq:bimeasurable_forward}): 
    \begin{align}\label{eq:bimeasurable_forward2}
      \varphi:(m, h) \mapsto \sigma(m)h = \explie(\rExp^{-1}_{e}(X))h
    \end{align}
    Composing this map with the map $\rExp_{e} \times \id_{H}$ we obtain the desired 
    map $\varPhi \coloneqq \varphi \circ (\rExp_{e} \times \id_{H})$:
    \begin{align}\label{eq:cartan_diffeomorphism_abstract}
      &\varPhi: \mathfrak{m} \times H \to G,\quad \varPhi: (X, h) \mapsto \sigma(\rExp_{e}(X))h = \explie(X)h
    \end{align}
    \end{proof}
\end{prop}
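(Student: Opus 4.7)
The plan is to reduce the problem to Proposition~\ref{prop:global_cross_section} by constructing a smooth global cross-section $\sigma: M \to G$ of the canonical projection $\pi$, and then to reparametrize the base $M$ by $\mathfrak{m}$ via the Riemannian exponential so that the resulting diffeomorphism on $M \times H$ becomes one on $\mathfrak{m} \times H$ with the clean form $(X, h) \mapsto \explie(X) h$.

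First I would define $\rExp_{e} := \exp_{o} \circ d\pi_{e}|_{\mathfrak{m}}: \mathfrak{m} \to M$. The key identity $\rExp_{e} = \pi \circ \explie$ is exactly the commuting diagram~(\ref{diag:orbit_riem_exponential_natreductive}) guaranteed by Proposition~\ref{prop:geodesics_natred_homog}, since naturally reductive spaces have geodesics from $o$ realized as orbits of one-parameter subgroups generated by $\mathfrak{m}$. To see that $\rExp_{e}$ is a diffeomorphism, note that $\exp_{o}$ is one by hypothesis, and $d\pi_{e}|_{\mathfrak{m}}: \mathfrak{m} \to T_{o}M$ is a linear isometry because $\mathfrak{m}$ is the horizontal space at $e$ of the Riemannian submersion $\pi$ (Proposition~\ref{prop:reductive_hom_props}(2)).

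Next I would set $\sigma := \explie \circ \rExp_{e}^{-1}: M \to G$; this is smooth as a composition of smooth maps, and the cross-section property is a direct check using the identity above:
\[
\pi \circ \sigma = \pi \circ \explie \circ \rExp_{e}^{-1} = \rExp_{e} \circ \rExp_{e}^{-1} = \id_{M}.
\]
Proposition~\ref{prop:global_cross_section} then yields a diffeomorphism $\varphi: M \times H \to G$, $(m, h) \mapsto \sigma(m) h$. Composing with the diffeomorphism $\rExp_{e} \times \id_{H}: \mathfrak{m} \times H \to M \times H$ gives $\varPhi := \varphi \circ (\rExp_{e} \times \id_{H})$, which is a composition of diffeomorphisms and hence a diffeomorphism. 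Unfolding the definitions, $\varPhi(X, h) = \sigma(\rExp_{e}(X)) h = \explie(\rExp_{e}^{-1}(\rExp_{e}(X))) h = \explie(X) h$, as required.

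The only nontrivial step is the first one: verifying that $\rExp_{e}$ is a diffeomorphism, which rests on identifying $\mathfrak{m}$ with the horizontal space at $e$ so that $d\pi_{e}|_{\mathfrak{m}}$ is a linear isomorphism onto $T_{o}M$. Once that is in place, the rest is assembling pieces from the previous propositions. The naturally reductive hypothesis is used precisely to obtain the commuting diagram $\pi \circ \explie = \exp_{o} \circ d\pi_{e}|_{\mathfrak{m}}$, which is what lets $\explie$ serve (after reparametrization) as a section of $\pi$.
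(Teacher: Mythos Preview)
Your proposal is correct and follows essentially the same argument as the paper: define $\rExp_{e} = \exp_{o}\circ d\pi_{e}|_{\mathfrak{m}} = \pi\circ\explie$, use it to build the cross-section $\sigma = \explie\circ\rExp_{e}^{-1}$, invoke Proposition~\ref{prop:global_cross_section}, and then precompose with $\rExp_{e}\times\id_{H}$. Your additional remark that $d\pi_{e}|_{\mathfrak{m}}$ is a linear isometry (via the Riemannian submersion structure of Proposition~\ref{prop:reductive_hom_props}(2)) is a helpful elaboration the paper leaves implicit.
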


\subsection{The Cartan/Polar decomposition}\label{subsec:cartanpolar_appendix}
Define the following subsets of $\gmatrix{n}$:
\begin{align}
  \label{eq:symmetric_matrix} \gsym{n} &= \{P \in \gmatrix{n} \mid P = P^{T}\}\\
  \label{eq:spd_matrix} \gspd{n} &= \{P \in \gsym{n} \mid~\forall v \in \sR^{n}, v \neq 0,
      v^{T} P v > 0\}\\
    \label{eq:sspd_matrix}\gsspd{n} &= \{P \in \gspd{n}\mid \det(P) = 1\}\\
    \label{eq:ssym_matrix}\gssym{n} &= \{P \in \gsym{n}\mid \textnormal{tr}(P) = 0\}
\end{align}
$\gsym{n}$ is the vector space of $n \times n$ real symmetric matrices and
$\gspd{n}$ is the subset of $\gsym{n}$ of symmetric positive definite (SPD) matrices.
$\gsspd{n}$ denotes the subset of $\gspd{n}$ consisting
of SPD matrices with unit determinant, and $\gssym{n}$ the subspace of 
$\gsym{n}$ of traceless real symmetric matrices.
Every SPD matrix $S \in \gspd{n}$ has a 
unique square root\footnote{\cite[Lemma 2.18]{hall2015lie} or \cite[Theorem 7.2.6]{horn2012matrix}.} $S^{1/2} = P$, $P \in \gspd{n}$, which shows the uniqueness of the polar decomposition.
\begin{prop}[Polar decomposition]\footnote{
  See~\cite[Lemma 7.5.1]{jost2008riemannian} or~\cite[Theorem 7.3.1]{horn2012matrix}.}\label{prop:polar_decom_matrices}
    Any matrix $A \in \gglfull{n}$ can be uniquely decomposed as
    $A = PR$ or $A = \tilde{R}\tilde{P}$,
    where $P, \tilde{P} \in \gspd{n}$ and $R, \tilde{R} \in \go{n}$.
    We refer to the factorization $A = PR$ as the \textbf{left polar decomposition}
    and to $A = \tilde{R}\tilde{P}$ as the \textbf{right polar decomposition}.
    We choose to work with the left polar decomposition.
    The factors of this decomposition are uniquely determined and 
    we have a bijection $\gglfull{n} \to \gspd{n} \times \go{n}$ given by:
\begin{align}\label{eq:left_polar_map_appendix}
  A \mapsto (\sqrt{AA^{T}}, \sqrt{AA^{T}}^{-1}A),\quad \forall A \in \gglfull{n}
\end{align}
\end{prop}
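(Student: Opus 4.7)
The plan is to construct $P$ from the natural candidate $AA^{T}$, define $R$ by left-division, and then extract uniqueness from the uniqueness of the SPD square root.

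First I would verify that $AA^{T} \in \gspd{n}$ whenever $A \in \gglfull{n}$. Symmetry is immediate from $(AA^{T})^{T} = AA^{T}$, and positive-definiteness follows because for any nonzero $v \in \sR^{n}$, $v^{T}AA^{T}v = \|A^{T}v\|^{2} > 0$, using that $A^{T}$ is invertible. Next I would invoke the result (already cited in the excerpt from \cite{hall2015lie, horn2012matrix}) that any SPD matrix admits a \emph{unique} SPD square root, so we may set $P \coloneqq \sqrt{AA^{T}} \in \gspd{n}$ and define $R \coloneqq P^{-1}A$. A short computation confirms $R \in \go{n}$:
\begin{align}
RR^{T} = P^{-1}A A^{T}P^{-1} = P^{-1}P^{2}P^{-1} = \giden{n},
\end{align}
where we used $P = P^{T}$ and $P^{2} = AA^{T}$. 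This establishes existence of the factorization $A = PR$ and shows that the candidate map in (\ref{eq:left_polar_map_appendix}) produces valid factors.

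For uniqueness, suppose $A = PR = P'R'$ with $P, P' \in \gspd{n}$ and $R, R' \in \go{n}$. Multiplying on the right by the respective transposes gives $AA^{T} = PRR^{T}P^{T} = P^{2}$ and similarly $AA^{T} = (P')^{2}$. Uniqueness of the SPD square root of $AA^{T}$ then forces $P = P'$, whence $R = P^{-1}A = (P')^{-1}A = R'$. The right polar decomposition $A = \tilde{R}\tilde{P}$ follows by applying the same argument to $A^{T}$ and transposing, or equivalently by setting $\tilde{R} = R$ and $\tilde{P} = R^{-1}PR$.

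Finally, to conclude that we have a bijection $\gglfull{n} \to \gspd{n} \times \go{n}$, I would observe that the map $(P, R) \mapsto PR$ is defined on all of $\gspd{n} \times \go{n}$ and lands in $\gglfull{n}$ (its image is invertible with inverse $R^{-1}P^{-1}$), and that existence and uniqueness above show this map and the map in (\ref{eq:left_polar_map_appendix}) are mutual inverses. The only nontrivial ingredient in this plan is the uniqueness of the SPD square root; since this is cited from standard references, the proof reduces to the direct verifications above and there is no real obstacle beyond bookkeeping.
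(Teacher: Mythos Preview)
Your proposal is correct and follows exactly the approach the paper indicates: the paper does not spell out a proof but cites standard references and explicitly notes just before the proposition that the uniqueness of the SPD square root ``shows the uniqueness of the polar decomposition,'' which is precisely the key ingredient you use. Your existence argument via $P = \sqrt{AA^{T}}$ and $R = P^{-1}A$, and your uniqueness argument via $AA^{T} = P^{2} = (P')^{2}$, are the standard ones found in the cited sources.
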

As mentioned in the main text, this decomposition can be generalized using the fact that the spaces $\gspd{n} = \ggl{n} / \gso{n}$ and 
$\gsspd{n} = \gsl{n} / \gso{n}$ are symmetric spaces,
 and a \textbf{Cartan decomposition} is available in this case.
  We first state some useful properties of $\gspd{n}$ 
 and then  review its symmetric space and 
naturally reductive structure.
\begin{prop}\footnote{\cite[Theorems 2.6, 2.8 \& Corollary 2.9]{arsigny2007geometric} more specifically for the matrix exp/log and power map. For a review of algebraic properties and the Riemannian manifold structure of $\gspd{n}$ see~\cite{arsigny2007geometric,pennec2020manifold}.}\label{prop:spd_properties}
Every real symmetric matrix $X \in \gsym{n}$ has a spectral decomposition 
$X = ODO^{T}$ where $O \in \gso{n}$ and $D = \diag{d_{1}, \ldots, d_{n}}$, 
$d_{i} \in \sR$ is a diagonal matrix consisting 
of the eigenvalues of $X$, which are positive iff $X$ is positive-definite.
Using this decomposition we have simplified expressions 
for the matrix exponential $\explie: \gsym{n} \to \gspd{n}$ 
and logarithm $\logm: \gspd{n} \to \gsym{n}$:
\begin{align}
  \label{eq:symmetrix_matrix_exp} \explie(X) &= O\diag{\exp(d_1),\ldots,\explie(d_{n})}O^{T}, \quad \forall X \in \gsym{n}\\
  \label{eq:spd_matrix_log}\logm(P) &= O\diag{\log(d_1),\ldots,\log(d_{n})}O^{T}, \quad \forall P \in \gspd{n}
\end{align}
$\gspd{n}$ is an open subset of $\gsym{n}$ and a smooth manifold
of dimension $n(n + 1)/2$, with the tangent space 
$T_{P}\gspd{n}$ at any $P \in T_{P}\gspd{n}$ naturally isomorphic (by translation)
to $\gsym{n}$. 
The matrix exponential and logarithm maps are diffeomorphisms 
between $\gsym{n}$ and $\gspd{n}$, and the power map $P \mapsto P^{\alpha}$ is smooth 
for any $\alpha \in \sR$, since it can be expressed as:
\begin{align}\label{eq:spd_power_logexp_appendix}
  P^{\alpha} = \explie(\alpha\logm(P)),\quad \forall P \in \gspd{n}
\end{align}
\end{prop}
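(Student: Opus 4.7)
First, I would invoke the spectral theorem for real symmetric matrices: every $X \in \gsym{n}$ is orthogonally diagonalizable as $X = ODO^{T}$ with $D = \diag{d_1, \ldots, d_n}$ real and $O \in \go{n}$; flipping the sign of one column of $O$ if needed forces $O \in \gso{n}$. The equivalence between positive eigenvalues and positive definiteness follows from $v^{T}Xv = (O^{T}v)^{T}D(O^{T}v) = \sum_i d_i w_i^{2}$ with $w = O^{T}v$ ranging freely over $\sR^{n}$, so all $d_i > 0$ iff the quadratic form is strictly positive on $\sR^{n} \setminus \{0\}$.

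Next, I would derive the explicit formulas. For any polynomial $p$, $p(ODO^{T}) = O p(D) O^{T}$ since $O^{T}O = I$; passing to the limit in $e^{X} = \sum_{k \ge 0} X^{k}/k!$ yields $\explie(X) = O \diag{e^{d_1}, \ldots, e^{d_n}} O^{T}$, which is symmetric with strictly positive spectrum, hence lies in $\gspd{n}$. On $\gspd{n}$ I would \emph{define} $\logm(P) = O \diag{\log d_1, \ldots, \log d_n} O^{T}$ from any spectral decomposition of $P$; well-definedness (independence of the choice of diagonalizing $O$) reduces to the fact that the corresponding function $\log$ on the spectrum is unambiguous. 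A direct substitution then shows $\logm \circ \explie = \id_{\gsym{n}}$ and $\explie \circ \logm = \id_{\gspd{n}}$.

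For the manifold structure, $\gspd{n}$ is open in $\gsym{n}$: by Sylvester's criterion, positive-definiteness is equivalent to strict positivity of the $n$ leading principal minors, each a continuous polynomial in the entries. As an open subset of the vector space $\gsym{n}$, which has dimension $n(n+1)/2$ (count entries on and above the diagonal), $\gspd{n}$ inherits a canonical smooth manifold structure with $T_{P}\gspd{n} \cong \gsym{n}$ at every $P$.

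The main obstacle is smoothness of the two maps, so that the set-theoretic bijection is actually a diffeomorphism. Smoothness of $\explie: \gsym{n} \to \gspd{n}$ is immediate from uniform-on-compacta convergence of its power series on $\gmatrix{n}$. Smoothness of $\logm: \gspd{n} \to \gsym{n}$ is more delicate because eigenvectors need not vary smoothly across eigenvalue crossings, so the algebraic spectral formula cannot be differentiated naively. I would settle this by the inverse function theorem: the differential $d(\explie)_X$ is given by the usual convergent series $e^{X}\bigl(I - e^{-\ad_X}\bigr)/\ad_X$, whose invertibility fails only when $\ad_X$ has an eigenvalue in $2\pi i (\sZ \setminus \{0\})$; since the eigenvalues of $\ad_X$ are differences of eigenvalues of $X$ and hence real, invertibility holds at every $X \in \gsym{n}$. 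Thus $\explie$ is a local diffeomorphism everywhere and a bijection onto $\gspd{n}$, so it is a global diffeomorphism and $\logm$ is smooth. (A self-contained alternative is the Dunford--Taylor calculus $\logm(P) = \frac{1}{2\pi i}\oint_{\Gamma}(\log z)(zI - P)^{-1}\,dz$ with $\Gamma$ enclosing the positive real spectrum of $P$, which is manifestly smooth in $P$.) The power map $P \mapsto P^{\alpha} = \explie(\alpha \logm(P))$ is then a composition of smooth maps, completing the proof.
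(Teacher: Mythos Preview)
Your argument is correct and self-contained. The paper does not actually prove this proposition: it records it as a collection of standard facts with citations to \cite{arsigny2007geometric} (for the exponential, logarithm, and power map) and \cite{pennec2020manifold} (for the Riemannian structure of $\gspd{n}$), so there is no in-paper proof to compare against step by step.

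What you add beyond the paper is a genuine justification of the smoothness of $\logm$ on $\gspd{n}$, which is the only nontrivial point. Your inverse function theorem argument is clean: the eigenvalues of $\ad_X$ on $\lggl{n}$ are differences $d_i - d_j$ of eigenvalues of $X$, hence real for $X \in \gsym{n}$, so $d(\explie)_X$ is invertible on all of $\lggl{n}$ and in particular on the subspace $\gsym{n}$. Combined with the set-theoretic bijectivity you establish via the spectral functional calculus, this gives the global diffeomorphism directly. The Dunford--Taylor contour integral alternative is also valid and is essentially how the cited references handle primary matrix functions. Either route is more informative than the paper's bare citation.
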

As a reference for the following results on $\gspd{n}$ 
and $\gsspd{n}$ see~\cite{forstner2003metric,pennec2020manifold,stegemeyer2021endpoint}. 
The presentation here 
also follows~\cite[Section 3.5]{rentmeesters2013algorithms} and~\cite[Section 3.5.3]{lezcano2022geometric}.

$\gspd{n}$ is a homogeneous space of the positive general 
linear group $\ggl{n}$.
More precisely, $\ggl{n}$ has a smooth transitive action on $\gspd{n}$ given by: 
\begin{align}\label{eq:gln_spd_action}
  \lambda: \ggl{n} \times \gspd{n} \to \gspd{n},\quad (A, P) \mapsto APA^{T}
\end{align}
Note that every SPD matrix $P \in \gspd{n}$ can be written 
as $P = AA^{T}$ for some $A \in \ggl{n}$.
The isotropy group of the identity matrix $I \in \gspd{n}$ corresponding 
to this action is the special orthogonal group $\gso{n}$ 
since $RIR^{T} = RR^{T} = I$ for $R \in \gso{n}$.

Applying Theorems~\ref{thm:homogeneous_space_maniffold} \&~\ref{thm:equivariant_diffeomorphism} 
we have that $\ggl{n} / \gso{n} = \gspd{n}$. That is, we have a diffeomorphism:
\begin{align}\label{eq:spd_quotient_diffeomorphism}
  \phi_I: \ggl{n} / \gso{n} \to \gspd{n},\quad A\cdot SO(n) \mapsto AA^{T}
\end{align}
And a smooth submersion of $\ggl{n}$ onto $\gspd{n}$ given by:
\begin{align}\label{eq:spd_projection_submersion}
  \pi_{I} = \phi_I \circ \pi: \ggl{n} \to \gspd{n},\quad A \mapsto AA^{T}
\end{align}

Let $\mathfrak{g} = \lggl{n}$ denote the Lie algebra of $\ggl{n}$.
The Lie algebra of $\gso{n}$ is the space 
$\lgso{n} = \{X \in \gmatrix{n} \mid X = -X^{T}\}$ of skew-symmetric matrices.
$\gspd{n} = \ggl{n} / \gso{n}$ is a reductive homogeneous 
space (see Definition~\ref{def:reductive_homogeneous_space}) since $\Ad(\gso{n})(\gsym{n}) \subseteq \gsym{n}$ and we have the decomposition $\mathfrak{g} = \mathfrak{h} \oplus \mathfrak{m}$ given by:
\begin{align}\label{eq:symmetric_antisymmetric_decomp}
  \lggl{n} = \lgso{n} \oplus \gsym{n}
\end{align}
Then $\mathfrak{h} = \lgso{n}$ and $\mathfrak{m} = \gsym{n}$,
and the bracket relations $[\mathfrak{h}, \mathfrak{h}] \subseteq \mathfrak{h}, [\mathfrak{h}, \mathfrak{m}] \subseteq \mathfrak{m},~ [\mathfrak{m},\mathfrak{m}] \subseteq \mathfrak{h}$ hold.
We now choose an inner product on $\lggl{n}$ such that 
its restriction to $\gsym{n}$ is $\Ad(\gso{n})$-invariant, $\gso{n}^{\bot} = \gsym{n}$
and we can use it to define a left-invariant Riemannian metric on $\ggl{n}$.
We work with a scaled version of the canonical inner product 
$\langle X, Y \rangle \coloneqq \textnormal{tr}(X^{T}Y)$:
\begin{align}\label{eq:canonical_inner_product_scaled}
  B(X, Y) \coloneqq 4\langle X, Y \rangle = 4\textnormal{tr}(X^{T}Y),\quad X, Y \in \lggl{n}
\end{align}
The inner product respects the decomposition (\ref{eq:symmetric_antisymmetric_decomp}) 
into symmetric and skew-symmetric matrices, and the 
left-invariant metric on $\ggl{n}$ (which is also right-$\gso{n}$-invariant) is: 
\begin{align}\label{eq:gln_metric}
  g^{\ggl{n}}_{A}(X, Y) = B(A^{-1}X, A^{-1}Y), \quad \forall A \in \ggl{n},~\forall X,Y \in T_{A}\ggl{n}
\end{align}
To define a $\ggl{n}$-invariant metric on $\gspd{n} = \ggl{n}/\gso{n}$, note that 
the differential of the projection (\ref{eq:spd_projection_submersion}) at $I$ is 
$d\pi_{I}(X) = X + X^{T}$ for any $X \in \lggl{n}$, with $\ker(d\pi_{I}) = \lgso{n}$
and its restriction to $\mathfrak{m} = \gsym{n}$ gives the isomorphism:
\begin{align}\label{eq:projecton_differential_forward}
  d\pi_{I}: \mathfrak{m} \to T_{I}\gspd{n},\quad X \mapsto 2X
\end{align}
We have $\lambda(P^{1/2}, I) = P$, and the differential 
with respect to the second argument at identity is $X \mapsto P^{1/2}XP^{1/2}$. 
The linear isomorphism $d(\pi_{I} \circ L_{P^{1/2}})_{I}: \mathfrak{m} \to T_{P}\gspd{n}$ 
is then given by: 
\begin{align}\label{eq:forward_differential_projection}
  d(\pi_{I} \circ L_{P^{1/2}})_{I}: X \mapsto 2P^{1/2}XP^{1/2},\quad \forall X \in \mathfrak{m}
\end{align}
We denote its inverse by $\eta_{P}: T_{P}\gspd{n} \to \mathfrak{m}$, 
such that $\eta_{P}: X \mapsto \frac{1}{2}P^{-1/2}XP^{-1/2}$.
The induced (quotient) metric on $\gspd{n}$\footnote{Known as the 
affine-invariant metric, see~\cite[Proposition 3.1]{thanwerdas2023n}.} is defined for any $P \in \gspd{n}$ and 
$X, Y \in T_{P}\gspd{n}$:
\begin{align}\label{eq:spd_metric}
  g^{\gspd{n}}_{P}(X, Y) \coloneqq B(\eta_{P}(X), \eta_{P}(Y)) = \langle P^{-1/2}XP^{-1/2}, P^{-1/2}XP^{-1/2} \rangle = \textnormal{tr}(P^{-1}XP^{-1}Y)
\end{align}
Endowed with this metric the action of $\ggl{n}$ is by isometries 
and $\pi_{I}$ is a Riemannian submersion.
$(\gspd{n}, g^{\gspd{n}})$ is also a Riemannian symmetric space and $(\ggl{n}, \gso{n}, \Theta)$ is a Riemannian symmetric pair, 
with $\Theta$ the global Cartan involution:
\begin{align}\label{eq:cartan_involution_gln}
  \Theta: \ggl{n} \to \ggl{n}, \quad \Theta: A \mapsto (A^{T})^{-1}
\end{align}
In this case we have $G^{\Theta}_{0} = \gso{n} = G^{\Theta}$, and 
we have corresponding Lie algebra involution:
\begin{align}\label{eq:lie_algebra_involution_gln}
  \theta \coloneqq d\Theta_e: \lggl{n} \to \lggl{n},\quad \theta: X \mapsto -X^{T}
\end{align}
Analog results hold for $\gsspd{n} = \gsl{n} / \gso{n}$, such that 
$(\gsl{n}, \gso{n}, \Theta)$ is a Riemannian symmetric pair.
 The group $\gsl{n}$ has Lie 
 algebra:
\begin{align}\label{eq:sln_lie_algebra}
  \lgsl{n} = \{X \in \lggl{n} \mid \textnormal{tr}(X) = 0\}
\end{align}
$\gsl{n} / \gso{n}$ is an example of a non-compact 
symmetric space\footnote{For a classification of symmetric spaces 
see for example~\cite[Chapter 6]{ziller2010lie}.}.
We can reuse the metrics (\ref{eq:gln_metric}) and (\ref{eq:spd_metric}), restricting
them to $\gsl{n}$ and $\gsspd{n}$, respectively.
\begin{prop}\footnote{A detailed treatment can be found in~\cite{dolcetti2014some, dolcetti2018differential}.
}
  \label{prop:sln_gln_relationship}
  $\ggl{n}$ can be represented as a product $\gsl{n} \times \sR^{\times}_{>0}$ by the Lie group isomorphism:
  \begin{align}\label{eq:gln_sln_map}
    \ggl{n} \to \gsl{n} \times \sR^{\times}_{>0},\quad A \mapsto (\frac{A}{\det(A)^{\frac{1}{n}}}, \det(A)^{\frac{1}{n}})
  \end{align}
 Reusing the previously defined metrics, 
 $\gsl{n}$ and $\gsspd{n}$ are totally geodesic 
 submanifolds\footnote{$N$ is a totally geodesic submanifold of a Riemannian manifold $M$, if for any two points of $N$ and a geodesic $\gamma$ in $M$ that joins them, $\gamma$ is entirely contained in $N$.
 } of $\ggl{n}$ and $\gspd{n}$, respectively.
 The decomposition (\ref{eq:gln_sln_map}) restricted to $\gspd{n}$ 
can be shown to induce a Riemannian isometry 
$\gspd{n} \cong \gsspd{n} \times \sR_{>0}$.
The tangent space decomposition is $\gsym{n} = \gssym{n} \oplus \mathfrak{d}$, where $\mathfrak{d}(n, \sR)$ are scalar diagonal matrices.
\end{prop}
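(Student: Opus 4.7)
The plan is to dispatch the claims in four steps---first the algebraic Lie-group isomorphism, then the tangent-space splitting $\gsym{n} = \gssym{n} \oplus \mathfrak{d}$, then the Riemannian isometry $\gspd{n} \cong \gsspd{n} \times \sR_{>0}$, and finally totally-geodesicness as a corollary of the product structure.

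For the algebraic isomorphism, I would set $\Psi: A \mapsto (\det(A)^{-1/n}A,\, \det(A)^{1/n})$ and check: (i) it lands in $\gsl{n} \times \sR^{\times}_{>0}$, since the first component has unit determinant and $\det(A) > 0$ on $\ggl{n}$; (ii) the candidate inverse $(B, t) \mapsto tB$ is smooth and two-sided; and (iii) multiplicativity follows from applying $\det(AA') = \det(A)\det(A')$ to each factor. Differentiating $\Psi$ at $I$ yields the linear splitting $X \mapsto (X - \tfrac{\textnormal{tr}(X)}{n}I,\, \tfrac{\textnormal{tr}(X)}{n})$ of $\lggl{n}$, and restricting this to $\gsym{n}$ gives $\gsym{n} = \gssym{n} \oplus \mathfrak{d}$ with $\mathfrak{d} = \sR\,I$.

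For the Riemannian isometry, I would write any $P \in \gspd{n}$ as $P = \lambda P_0$ with $\lambda = \det(P)^{1/n}$ and $P_0 \in \gsspd{n}$, and decompose a tangent vector as $\dot P = \lambda \dot P_0 + \dot\lambda\, P_0$ where $\dot P_0 \in T_{P_0}\gsspd{n}$ satisfies $\textnormal{tr}(P_0^{-1}\dot P_0) = 0$. Then $P^{-1}\dot P = P_0^{-1}\dot P_0 + (\dot\lambda/\lambda) I$, and substituting into the affine-invariant metric~\ref{eq:spd_metric} expands to $\textnormal{tr}((P_0^{-1}\dot P_0)^2) + 2(\dot\lambda/\lambda)\,\textnormal{tr}(P_0^{-1}\dot P_0) + n(\dot\lambda/\lambda)^2$. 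The traceless constraint kills the cross term, leaving a clean sum of the restricted metric on $\gsspd{n}$ and $n(\dot\lambda/\lambda)^2$; endowing $\sR_{>0}$ with this pullback metric makes $\Psi|_{\gspd{n}}$ a Riemannian isometry.

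Totally-geodesicness then follows from the general fact that in a Riemannian product $M \times N$ every slice $M \times \{q\}$ is totally geodesic, since product geodesics are products of geodesics. Thus $\gsspd{n} \leftrightarrow \gsspd{n} \times \{1\}$ sits totally geodesically in $\gspd{n}$. The same argument works for $\gsl{n} \subset \ggl{n}$ once one verifies the analogous splitting of~\ref{eq:gln_metric}: writing $A = tB$ with $B \in \gsl{n}$, we have $A^{-1}\dot A = B^{-1}\dot B + (\dot t/t)I$, and expanding $4\,\textnormal{tr}((A^{-1}\dot A)^T(A^{-1}\dot A))$ produces a cross-term proportional to $\textnormal{tr}(B^{-1}\dot B) = 0$, which again vanishes by the tangent constraint on $\gsl{n}$. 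The main obstacle I anticipate is bookkeeping in this last verification: since $g^{\ggl{n}}$ is only left-invariant (not bi-invariant), the computation must be carried out at a general point $A$ rather than at the identity alone, but the traceless tangent condition for $\gsl{n}$ is precisely what makes the mixed term vanish in both cases.
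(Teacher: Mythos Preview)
Your argument is correct in all four parts. The Lie-group isomorphism and tangent-space splitting are routine; the metric computation for $\gspd{n}$ is clean and the vanishing of the cross term via $\textnormal{tr}(P_0^{-1}\dot P_0)=0$ is exactly the right mechanism. Your extension to $\ggl{n}$ also goes through: with $M=B^{-1}\dot B$ and $c=\dot t/t$ one has $\textnormal{tr}((M+cI)^T(M+cI))=\textnormal{tr}(M^TM)+2c\,\textnormal{tr}(M)+nc^2$, and $\textnormal{tr}(M)=0$ from $\det B=1$, so the left-invariant metric~\ref{eq:gln_metric} genuinely splits and the product-factor argument for totally-geodesicness applies.

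The paper itself does not supply a proof of this proposition; it simply records the statement and defers to the references in the footnote (Dolcetti et al.\ and Bridson--Haefliger, Proposition~10.53). So there is no in-paper argument to compare against. What you have written is a self-contained computational proof that would serve well as a replacement for the bare citation, and it is in the same spirit as the treatments in those references---the Bridson--Haefliger argument for the $\gspd{n}$ case is essentially the same trace computation you carry out. One small presentational point: the paper's definition of ``totally geodesic'' in the footnote is phrased in terms of \emph{all} geodesics between two points of $N$ staying in $N$, which is slightly stronger than the usual definition; your product-slice argument delivers the standard notion (the second fundamental form vanishes, equivalently geodesics of $N$ are geodesics of $M$), and in the present non-positively-curved setting where geodesics between points are unique the two coincide, so there is no gap.
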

\subsection{Proof of theorem~\ref{thm:manifold_splitting_theorem}}\label{subsec:manif_split_thm_proof}
As in the main text, we let $(G/H, M, \mathfrak{m})$ define our `Lie group data', 
corresponding to $(\ggl{n}/\gso{n}, \gspd{n}, \gsym{n})$ 
or $(\gsl{n}/\gso{n}, \gsspd{n}, \gssym{n})$.
\ManifoldSplittingThm*
\begin{proof}
  The theorem is a collection of results related to the Cartan decomposition 
and the structure theory of Lie groups, which can be found for example 
in~\cite[Chapter II.10]{bridson2013metric} or~\cite[Chapter 6]{abbaspour2007basic}.
Similar results apply to algebraic subgroups 
of $\gglfull{n}$ that are closed and stable 
under transposition (see~\cite[Prop. 6.3.3 \& Definition 6.3.4]{abbaspour2007basic} 
or~\cite[Definition 10.56]{bridson2013metric}).
  \begin{enumerate}
    \item The first result holds due to Proposition~\ref{prop:spd_properties}, and the 
  fact that for any $X \in \mathfrak{m}$ we have $\explie(tX) \in M$ for all $t \in \sR$, see~\cite[Lemma 10.52]{bridson2013metric}.
  The Riemannian exponential on $M$ is also a diffeomorphism at any point.

  \item For the group-level Cartan/Polar decomposition see~\cite[Theorem 6.2.5 \& 6.3.5]{abbaspour2007basic}.
  Given a tangent vector in $\mathfrak{m}$, the Riemannian exponential on $M$ 
      and the matrix exponential are related by the diffeomorphism:
  \begin{align}\label{eq:riemannian_exponential_special}
    \rExp_{e}: \mathfrak{m} \to M,\quad X \mapsto \explie(X)\cdot I = \explie(X)I\explie(X)^{T} = \explie(2X)
  \end{align}
    $\rExp_{e}$ is obtained from applying Proposition~\ref{prop:geodesics_natred_homog}.
  The map $\varPhi$ of (\ref{eq:cartan_lie_algebra_diffeomorphism}) can be 
      obtained from (\ref{eq:cartan_decomposition_diffeomorphism}) 
      and the fact that the matrix exponential is a diffeomorphism on $M$, or 
      using Proposition~\ref{prop:concrete_natreductive_cross_section}, 
      such that (\ref{eq:cartan_lie_algebra_diffeomorphism}) 
      corresponds to (\ref{eq:cartan_diffeomorphism_abstract}).

      \item The map $\chi^{-1}$ is simply the polar decomposition.
      To obtain $\xi^{-1}$ we use the fact that ${AA^{T} \in M}$ for $A \in G$ and the identities:
      \begin{align}\label{eq:spd_square_root_properties}
        \logm(P^{1/2}) = \frac{1}{2}\logm(P),~P^{-1/2} = \explie(-\frac{1}{2}\logm(P)),\quad \forall P \in M
      \end{align}
      The identities (\ref{eq:spd_square_root_properties}) 
      can be obtained from (\ref{eq:spd_power_logexp}).
  \end{enumerate}
\end{proof}

\subsection{Integral factorizations for the Cartan/Polar decomposition}\label{subsec:appendix_polar_haar}
Consider again the notation $(G/H, M, \mathfrak{m})$ 
as in Theorem~\ref{thm:manifold_splitting_theorem}.
Recall that we have $G = \ggl{n}$ ($M = \gspd{n}$) or $G = \gsl{n}$ ($M = \gsspd{n}$), 
with $H = \gso{n}$.
From the proof of Thm.~\ref{thm:manifold_splitting_theorem} the cross-section $\sigma = \explie \circ \rExp^{-1}_{e}: M \to G$, of Prop.~\ref{prop:concrete_natreductive_cross_section} is:
\begin{align}\label{eq:concrete_cross_section}
\sigma(S) = \explie(\frac{1}{2}\logm(S)),\quad \forall S \in M
\end{align}
which is smooth (Prop.~\ref{prop:spd_properties}) and reduces simply to 
the square root $\sigma(S) = S^{1/2}$.
Since symmetric positive definite matrices have a unique square root, 
$\sigma: M \to G$ is a diffeomorphism.
We obtain a decomposition equivalent to the Polar/Cartan decomposition given 
by the map ${\varphi: M \times H \to G}$ defined 
as in Propositions~\ref{prop:global_cross_section}-\ref{prop:concrete_natreductive_cross_section}:
\begin{align}
  \label{eq:polar_decomp_haar_special}\varphi: M \times H \to G,\quad \varphi: (S, R) \mapsto \sigma(S)R = S^{1/2}R
\end{align}
With the inverse defined by:
\begin{align}
  \label{eq:inverse_polar_decomp_haar_special}\psi: G \to M \times H,\quad \psi: A \mapsto (\pi_{I}(A), (\sigma(\pi_{I}(A)))^{-1}A) = (AA^{T}, (AA^{T})^{-1/2}A)
\end{align}
We have equivalent decompositions which allow us to represent 
$A \in G$ as $A = PR$ 
or $A = S^{1/2}R$ for $S, P \in M$, $R \in H$ and therefore $P = S^{1/2}$.
The motivation behind presenting both decompositions is that 
for $\gglfull{n}$, the decomposition $A = S^{1/2}R$, 
has a factorization of the Haar measure $\mu_{\gglfull{n}}$ as a product 
of invariant measures on $\gspd{n}$ and $\go{n}$.
The Haar measure on $\gglfull{n}$ is given for any $A = (A_{ij}) \in \gglfull{n}$ 
by\footnote{\cite[VII, $\S3$, No. 3, Example 1]{bourbaki2004integration}.}:
\begin{align}\label{eq:gln_haar_measure}
  d\mu_{\gglfull{n}}(A) = \lvert \det(A) \rvert^{-n}dA = \lvert \det(A) \rvert^{-n} \prod_{i, j = 1}^{n} dA_{ij}
\end{align}
where $dA$ is the Lebesgue measure on $\sR^{n^2}$ and $dA_{ij}$ 
is the Lebesgue measure on $\sR$.
$\gglfull{n}$ has two homeomorphic connected components consisting of 
the group of invertible matrices with positive determinant
$\ggl{n}$ and with negative 
determinant $\gglminus{n}$\footnote{\cite[Theorem 3.68]{warner1983foundations}.}.
Integrating the full group $\ggl{n}$ can be done by integrating each component separately, 
and we focus on constructing a solution for the identity component $\ggl{n}$.
We use the shorter notation $\shgspd{n}$ and $\shgsspd{n}$ going forward to denote $\gspd{n}$ 
and $\gsspd{n}$.
Using a similar notation scheme as in (\ref{eq:gln_haar_measure}), 
the unique (up to scaling) $\gglfull{n}$-invariant measure 
on $\shgspd{n}$ 
is\footnote{\cite[VII, $\S3$, No. 3, Example 8]{bourbaki2004integration}. 
$\mu_{\gspd{n}}$ is bi-$\gglfull{n}$-invariant, as well as invariant 
under $P \mapsto P^{-1}$.}:
\begin{align}\label{eq:spd_haar_measure}
  d\mu_{\shgspd{n}}(S) = \lvert \det(S) \rvert^{-(n+1)/2} dS  = \lvert \det(S) \rvert^{-(n + 1)/2} \prod_{1 \leq i \leq j \leq n}^{n} dS_{ij},\quad \forall S \in \shgspd{n}
\end{align}
The following result can be found in a more general setting, 
often expressed using the `right' polar coordinates of 
the decomposition $A = RS^{1/2}$.
Let $H = \gV_{n, m} = \{R \in \gmatrix{nm}\ \mid R^{T}R = I_m\}$ for $ n \geq m$ and
$G = \gmatrix{nm}^{\ast} = \{A \in \gmatrix{nn} \mid \textnormal{rank}(A) = m\}$ 
the set of $n \times m$ matrices of rank $m$.
$\gV_{n, m}$ is the Stiefel manifold of orthonormal $m$-frames in $\sR^{n}$, 
on which $\go{n}$ acts transitively by left multiplication 
such that $\gV_{n, m} = \go{n} / \go{n - m}$, with 
special cases $\gV_{n, n} = \go{n}$ and $\gV_{n, n - 1} = \gso{n}$.
The complement of $\gmatrix{nm}^{\ast}$ in $\gmatrix{nm}$ has Lebesgue measure 
zero\footnote{\cite[Example 5.5]{gross1976bessel} 
or~\cite[Proposition 7.1]{eaton1983multivariate}.}, 
and $\gmatrix{nn}^{\ast} = \gglfull{n}$.
In this case it will correspond to~\cite[Lemma 1.4]{herz1955bessel} 
or~\cite[Theorem 2.1.14]{muirhead2009aspects}.
\PolarCoordinateChangeOfVar*
\begin{proof}
  A proof is given in~\cite[Prop. 5.6]{gross1976bessel} for 
  the decomposition of the form $A = RS^{1/2}$.
  In the form $S^{1/2}R$ it is proven for 
  example in~\cite[Section 4]{faraut1987bessel}.
  In the context of multivariate statistics 
  see Theorem 5.2.2 and Remark 5.2.3 of \cite{farrell2012multivariate}. 
  A recent reference is~\cite[Section 16.7.2]{chirikjian2012algebraic}.
\end{proof}
Note that the constant $\beta_{n}$ is independent of $f \in C_c(G)$.
From~\cite[(16.36)]{chirikjian2012algebraic}:
\begin{align}\label{eq:orthogonal_group_volume} 
 \textnormal{Vol}(\go{n}) = 2 \cdot \textnormal{Vol}(\gso{n}) = \frac{2^{n}\pi^{n^{2}/2}}{\Gamma_{n}(n / 2)}
\end{align}
Where $\Gamma_{n}(\cdot)$ denotes the multivariate Gamma function.
From~\cite[(16.55) \& (16.56)]{chirikjian2012algebraic}, 
if $dA$ is the Lebesgue measure on $\sR^{n^2}$, under the decomposition 
$A = S^{1/2}R$ we have:
\begin{align}\label{eq:lebesgue_polar_coordinates}
  dA = d(S^{1/2}R) = \beta_{n} \lvert \det(S) \rvert^{1/2} dO dS
\end{align}
Then considering that $S = AA^{T}$, 
the Haar measure $d\mu_{\gglfull{n}} = \lvert \det(A) \rvert^{-n} dA$ can 
be expressed:
\begin{align}
  d\mu_{\gglfull{n}}(dA) = \lvert S^{1/2}R \rvert^{-n} d(S^{1/2}R) = \beta_{n}~\lvert \det(AA^{T}) \rvert^{-n/2}  \lvert \det(S) \rvert^{1/2} dO dS
\end{align}

We use this decomposition treating $G$ ($\ggl{n}$ or $\gsl{n}$) as our sample space.
From Section 5.2 of~\cite{farrell2012multivariate}, for the case 
$G = \gmatrix{nm}^{\ast} \cong \gV_{n,m} \times \shgspd{m}$ it can be shown 
that if a $A \in \gmatrix{nm}^{\ast}$ is a random matrix with a
$\go{n}$-left invariant distribution, then for $\varphi(A) = RS^{1/2}$ 
the corresponding random variables $R \in \gV_{n, m}$ and $S^{1/2} \in \shgspd{m}$ 
will be independent, and $R$ will have a uniform distribution on $\gV_{n, m}$.
Furthermore, there exists a relationship 
between the density function of $A = RS^{1/2} \in \gmatrix{nm}^{\ast}$ with respect 
to the $\go{n}$-invariant measure and that 
of $S \in \shgspd{n}$ with respect to (\ref{eq:spd_haar_measure}).
If $G = \ggl{n}$ or $G = \gsl{n}$,
the Haar measure $\mu_{G}$ is 
bi-$\go{n}$-invariant (respectively bi-$\gso{n}$-invariant).
We can then work with either decomposition~\footnote{In the second case, one considers the right action $A^{T}PA$, 
for $A \in \gglfull{n}$ and $P \in \shgspd{n}$.} $S^{1/2}R$ or $RS^{1/2}$.

\GLnProductSamplingIndep*
\begin{proof}
  This theorem collects Lemma 5.2.4 \& 5.2.8 of~\cite{farrell2012multivariate} 
  applied to the case where we are working with random matrices in $\ggl{n}$ 
  and using the left polar decomposition.
See also~\cite[Proposition 7.4]{eaton1983multivariate}.
\end{proof}

Restricting only to the connected component $\ggl{n}$,  
the task is now to specify a probability distribution on 
$\shgspd{n}$ relative to the measure $d\mu_{\shgspd{n}}$.
For the case of $\gsl{n}$, using the isomorphism (\ref{eq:gln_sln_map}), 
we can define a $\gsl{n}$-invariant 
measure on $\shgsspd{n}$.
More precisely, $P = (\det(P)^{1/n}I)\tilde{P}$ 
for $P \in \shgspd{n}$, $\tilde{P} = \det(P)^{-1/n}P \in \shgsspd{n}$, which 
we write $P = t^{1/n}\tilde{P},~t>0$, such 
that $d\mu_{\shgspd{n}}(P) = \frac{dt}{t}d\mu_{\shgsspd{n}}(\tilde{P})$ and 
for $f \in L^{1}(\shgspd{n})$\footnote{\cite[(1.21) \& (1.39)]{terras2016harmonic}.}:
\begin{align}\label{eq:spd_sspd_haar_relation}
  \int_{\shgspd{n}} f(P) \dd{\mu_{\shgspd{n}}(P)} = \int_{t > 0} \int_{\shgsspd{n}}f(t^{1/n}\tilde{P})\frac{\dd{t}}{t}\dd{\mu_{\shgsspd{n}}(\tilde{P})}
\end{align}

\subsubsection{Sampling on the SPD manifold}\label{subsec:appendix_spd_distributions}

Following~\cite{said2017riemannian}, a Riemannian Gaussian Distribution denoted as $G(\overline{P}, \sigma)$
depends on parameters $\overline{P} \in \shgspd{n}$ and $\sigma > 0$ to
define a probability density function with respect to the volume element
$d\mu_{\shgspd{n}}$ by: \begin{align}\label{eq:said_density} p(P|\overline{P},
\sigma)=\frac{1}{Z(\sigma)} \exp[-\frac{d^2(P, \overline{P})}{2 \sigma^2}],~P
\in \shgspd{n} \end{align} Here, $d: \shgspd{n} \times \shgspd{n} \to \sR_{\geq
0}$ is the Riemannian distance corresponding to the affine-invariant metric (\ref{eq:spd_metric}).  The metric
plays a key role, as the measure $(\ref{eq:spd_haar_measure})$ is the Riemannian
volume element associated to it.  The distance can be expressed by:
\begin{align}\label{eq:spd_squared_distance} d^{2}(X, Y) =
\textnormal{tr}[\logm(X^{-1/2}YX^{-1/2})]^{2},\quad \forall X, Y \in \shgspd{n}
\end{align} and $Z(\sigma)$ is a normalization factor given
in~\cite{said2017riemannian} by:
\begin{align}\label{eq:spd_haar_normalization_factor}
\int_{\shgspd{n}}\exp[-\frac{d^2(P, \overline{P})}{2
\sigma^2}]\dd{\mu_{\shgspd{n}}(P)} \end{align} For $n=2$ an analytic expression
of $Z(\sigma)$ exists, otherwise it can be approximated by Monte Carlo
integration.  \cite[Prop. 5 \& 6]{said2017riemannian} describe an
algorithm for sampling from this distribution.

Alternatively, an approximate solution when sampling close to the identity is 
given by the Log-normal distribution 
defined in~\cite[Sec. 4.4]{schwartzman2016lognormal}.
From~\cite[Def. 4.4.2]{schwartzman2016lognormal}, $X \in \shgspd{n}$ 
has a Log-normal distribution $X \sim LN(M, \Sigma)$ with 
mean $M \in \shgspd{n}$ and 
covariance $\Sigma$ if $\logm(M^{-1/2}XM^{-1/2}) \sim N(0, \Sigma)$.
This definition assumes that $M$ is the empirical Riemannian center of mass, 
corresponding to the random variable $X$.

\revision{
\subsubsection{Alternative decomposition based on the QR factorization}\label{subsec:appendix_qr_fact}
}
There are several choices available for decomposing $\ggl{n}$ and $\gsl{n}$ such 
that invariant integration can be made easier while working with the smaller factors.
The primary tools of interest are the Iwasawa and the Cartan decomposition, and one 
possibility is given by the Gram decomposition (QR factorization). 
Let ${\gutp{n} = \{X \in \gglfull{n} \mid X_{ij} = 0\text{ if } i > j\}}$ 
be the group of real upper triangular matrices and $\gutp{n}_{+} \leq \gutp{n}$ its subgroup 
whose diagonal entries are positive.
Every matrix $A \in \gglfull{n}$ has a unique 
decomposition as $A = RT$ or $A = TR$ 
for $T \in \gutp{n}_{+}$ and $R \in \go{n}$.

Under this decomposition, Theorem~\ref{thm:decompose_haar_abstract} (2) is applicable.
The orthogonal factor becomes ${R \in \gso{n}}$ if restricted to 
$A \in \ggl{n}$.
For $A \in \gsl{n}$ the decomposition is given by replacing $\gutp{n}_{+}$ with its 
subgroup $\gsutp{n}_{+} \leq \gutp{n}_{+}$ of matrices with unit determinant.
\subsection{More details on the Lie algebra parametrization}\label{subsec:lie_algebra_param_appendix}
Any $A \in G$ can be expressed uniquely as $A = e^{X}R$ 
for $x \in \mathfrak{m}$ and $R \in H$.
Since $H = \gso{n}$ in both cases, the fact that $\explie: \lgso{n} \to \gso{n}$ is 
surjective\footnote{\cite[Theorem 2.6]{gallier2020differential}.}, allows us to write
it $A = e^{X}e^{Y}$, $Y \in \lgso{n}$.
The factors $X$ and $R = e^{Y}$ are obtained using $\varPhi^{-1}$ 
(\ref{eq:cartan_lie_algebra_inverse_diffeomorphism}).
Then by taking the principal branch of the matrix logarithm  
on $H = \gso{n}$, $Y = \logm(R)$.
A map $\xi^{-1}: G \to \mathfrak{g}$ as described in Section~\ref{sec:contribution} 
is therefore constructed as $\xi^{-1} = (\id_{\mathfrak{m}} \times~\logm) \circ \varPhi^{-1}$.
More precisely, for any $A = e^{X}e^{Y} \in G$, using $\xi^{-1}$ we obtain 
the horizontal/vertical tangent vectors 
$(Y, X) \in \lgso{n} \times \mathfrak{m}$ and since 
$\mathfrak{g} = \lgso{n} \oplus \mathfrak{m}$ we have a unique 
$Z = X + Y \in \mathfrak{g}$.

If $d$ is the dimension of $G$, the tangent space $\mathfrak{g}$ 
is a $d$-dimensional vector space isomorphic to $\sR^{d}$, with
basis elements denoted by $(E_1, \ldots, E_d)$.
Once a basis is chosen we can concretely 
represent any element of $\mathfrak{g}$ (or $\mathfrak{h}$, $\mathfrak{m}$) as a linear
combination of the `generators' 
such that $v = \sum_{i=1}^{d} v_i E_i$ for any $v \in \mathfrak{g}$.
The \emph{vee} and \emph{hat} functions (denoted $\vee$ and $\liehat$) 
are used to map tangent vectors to their coordinates 
in this basis and back:
\begin{align}
  \label{eq:hat}\liehat&: \sR^{d} \to \mathfrak{g},\quad \liehat: \ervv = (v_1, v_2, \ldots, v_d)^{T} \mapsto \ervv^{\liehat} = \sum_{i=1}^{k} v_i E_i\\
  \label{eq:vee}\vee&: \mathfrak{g} \to \sR^{d},\quad \vee: \ervv^{\liehat} \mapsto (\ervv^{\liehat})^{\vee} = \ervv
\end{align}

The basis $(E_{i})_{i \in [d]}$ is chosen to be orthonormal 
with respect to the inner product (\ref{eq:canonical_inner_product_scaled}) 
which is used to construct the invariant metric.
Going forward it is understood that functions parametrized 
on the Lie algebra, such as the kernel $\tilde{k}_{\theta}: \mathfrak{g} \to \sR$, 
take as input the vector of scalar 
coefficients of the tangent vector expressed 
in the chosen basis (the result of the $\vee$ map).

To summarize, the map $\xi^{-1}: G \to \mathfrak{g}$ is implemented 
for any $A \in G$ by\footnote{
Similar interpolation methods have been explored in the context 
of numerical linear algebra and computational mathematics.
See~\cite{munthe2001generalized, munthe2014symmetric} and 
especially~\cite{gawlik2018interpolation}, as this method can in part be 
seen as an instance of their more general framework. 
}:
\begin{enumerate}
  \item Mapping $A$ to its product space representation 
    in $\mathfrak{m} \times \gso{n}$ using $\varPhi^{-1}(A) = (X, R)$.
  \item Using the matrix logarithm on $R = e^{Y}$ (which is available in closed form 
    for the cases of interest $\gso{2}$ and $\gso{3}$) 
    to obtain $(X, \logm(R)) = (X, Y)$.
  \item Expressing the tangent vector $Z = X + Y$ using 
    the chosen basis as $Z^{\vee} \in \sR^{d}$.
\end{enumerate}

\section{Architecture \& training details}All experiments will use the 
same ResNet-like architecture~\cite{he2016deep}, and it will consist of 
a lifting cross-correlation layer, a single residual block and a final 
cross-correlation layer.
Finally, to achieve invariance global pooling is applied over 
the spatial and group dimensions.
The (lifting) cross-correlation layers are always 
followed by normalization and non-linear activation layers. 
In the case of the affine robustness task, we use GeLU nonlinearities 
and `LayerNorm' 
normalization\footnote{See~\cite{hendrycks2016gaussian} and~\cite{ba2016layer}.}. 
The residual block contains $2$ group cross-correlation layers 
and we apply max-pooling over the spatial dimension of the feature maps after 
each block to increase the robustness of the model.
For all experiments, the kernels ${k_{\theta}: \mathfrak{g} \to \sR}$ are 
parametrized using `SIREN networks', introduced in~\cite{sitzmann2020implicit}.
SIREN networks can be considered as one example of an Implicit Neural 
Representation (INR) model.
These models have seen widespread use 
in various areas of computer vision and graphics, e.g.~\cite{mildenhall2021nerf}.
INRs can be formalized as learned continuous 
function approximators based on MLPs.
They can be described simply as MLP layers of the form:
\begin{align}
    \mathbf{y}_m=\sigma\left(W_m \mathbf{y}_{m-1}+\mathbf{b}_m\right)
\end{align}
where $\sigma$ is a non-linearity. 
In case of SIRENs we have $\sigma(x) = \sin(\omega_{0}x)$, where $\omega_{0} \in \sR_{>0}$ is a 
multiplier controlling the frequency of the 
sinusoid.
We emphasize again that the proposed methodology is not dependent 
on the specific parametrization of $k_{\theta}$, and have experimentally 
found that other activation functions such 
as the (complex) Gabor wavelet~\cite{saragadam2023wire} offer comparable results.
We set $\omega_{0} = 10$ for all experiments.
We use $42$ output channels in both the lifting 
and cross-correlation layers.
Each SIREN network consists of $2$ layers of size $60$.

A key hyperparameter to consider is the number of group elements that will be sampled in the Monte 
Carlo approximation of each of the cross-correlation layers.
Empirically, we have found that $10-12$ samples are enough to 
achieve a better performance compared to the previously described models.
The models are trained for \revision{$100$} epochs, with a batch size of $128$, and the Adam optimizer of~\cite{kingma2014adam} with a standard learning rate of $0.0001$.
Sampling from $\gspd{2}$ is done using the log-Normal 
distribution of~\cite{schwartzman2016lognormal} centered at the identity 
while for $\gso{2}$ we work with a discretization of 
equi-distant points in $[0, 2\pi]$.

\section{Equivariance error analysis}\label{sec:equivariance_error_sec}

\revision{
\paragraph{Equivariance error}
Since our models are only equivariant in expectation, we validate this property
numerically by measuring their equivariance error following the same approach
as~\citet{Sosnovik2020Scale}, where we look to quantify for a neural network $\Phi$ and any $g \in G$ the relative error:
\begin{align}\label{eq:equivariance_error}
\Delta \coloneqq \|\gL_{g}[\Phi(f)] - \Phi[\gL_{g}(f)]\|^{2}_{2}/\|\gL_{g}[\Phi(f)]\|^{2}_{2}
\end{align}
We evaluate the equivariance error before training the network, i.e. $\Phi$ is a
convolutional network with randomly initialized weights.  We take $\Phi$ to be a
simple convolutional network composed of a lifting map
(\ref{eq:lifting_cross_correlation}), a cross correlation
(\ref{eq:conv_cross_ops2}) and a projection cross-correlation $C^{\downarrow}_{k}: L^{1}(G) \to L^{1}(X)$,
with $k: X \to \sR$, mapping our data back to the homogeneous space $X$ (where $X = \sR^{2}$ in this
case):
\begin{align}
  C^{\downarrow}_{k}: f \mapsto C^{\downarrow}_{k}f, \quad C^{\downarrow}_{k}f: x \mapsto \int_{G} f(\tilde{g}) k(\tilde{g}^{-1}x) \dd{\mu_G(\tilde{g})},~\forall x \in X
\end{align}
The same normalization and nonlinearities described previously are employed.
In Figure~\ref{fig:equivariance_error_sl2} we plot the equivariance error of $\Phi$, for different choices of $k_{\theta}$, and compare our model to a standard CNN with the same input-output dimensionality for its layers. We produce $100$ samples from the Haar measure of $\gsl{2}$, and obtain an average estimate over $10$ random seeds.
\begin{figure}[ht]
\centering
  \includegraphics[width=0.8\linewidth]{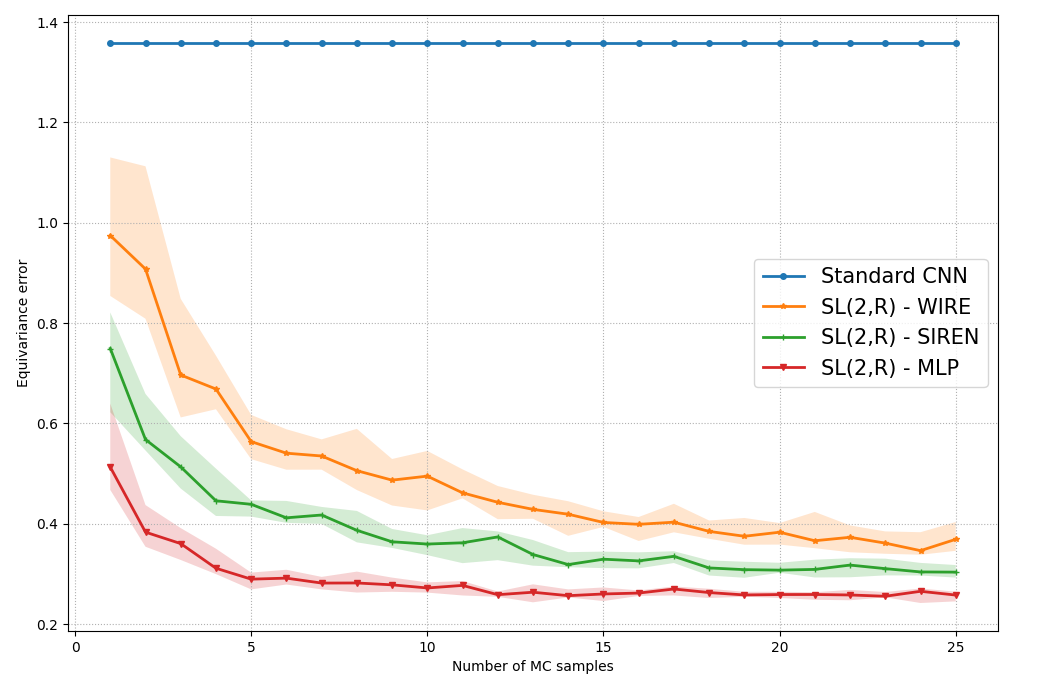}
  \captionsetup{width=.8\linewidth, font=small}
  \caption{Equivariance error as a function of the number of MC samples.}
  \label{fig:equivariance_error_sl2}
\end{figure}

We compare three possible choices of kernel parametrizations, namely a standard MLP with Swish non-linearities as employed by~\cite{finzi2020generalizing}, as well as the SIREN~\cite{sitzmann2020implicit} and WIRE~\cite{saragadam2023wire} INRs.
Note that this choice will also have an effect on the equivariance error, as we are working with a discrete pixel grid when representing images, and any symmetry breaking operations will propagate the loss of equivaraince through the network.
Figure~\ref{fig:test_error_mc_affnist} quantifies the degree to which the performance of the model described in the previous sections is affected by the number of MC samples used when approximating the convolution/cross-correlation integral.
In general, we observe significant performance degradation when employing $\le 6$ samples and as in previous work on integral approximations of continuous convolutions~\citet{knigge2022exploiting} observe no additional benefits beyond $12-14$ samples. 
However, an exact specification of the approximation bounds corresponding to the groups employed is missing in our presentation. 
\begin{figure}[h]
\centering
  \includegraphics[width=0.6\linewidth]{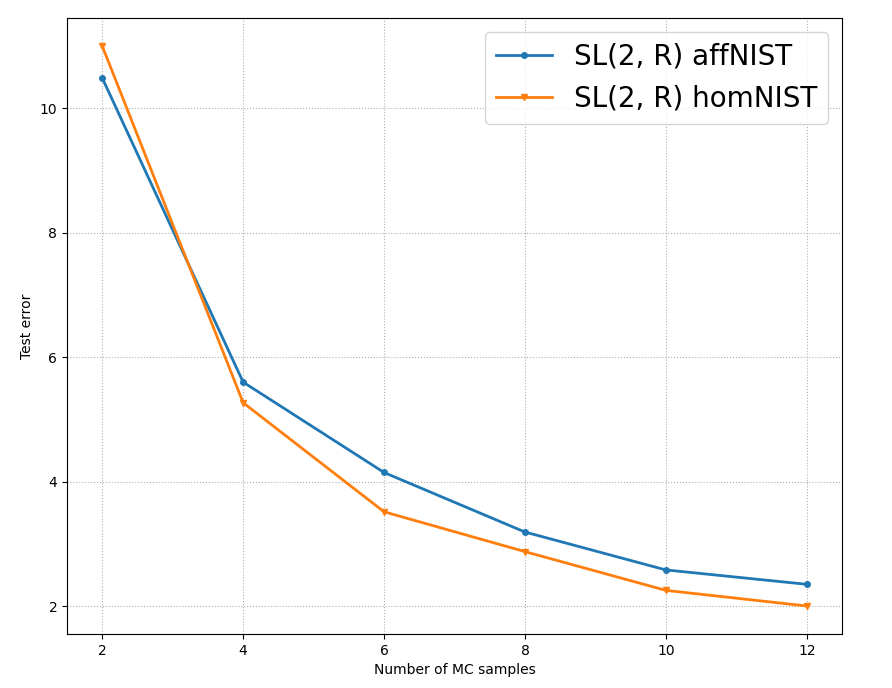}
  \captionsetup{width=.8\linewidth, font=small}
  \caption{Test error on affNIST/homNIST as a function of MC samples.}
  \label{fig:test_error_mc_affnist}
\end{figure}
}
\label{sec:experiment_details}
\end{document}